\let\bbordermatrix\bordermatrix
\patchcmd{\bbordermatrix}{8.75}{4.75}{}{}
\patchcmd{\bbordermatrix}{\left(}{\left[}{}{}
\patchcmd{\bbordermatrix}{\right)}{\right]}{}{}
\newtheoremstyle{theoremsansserif} 
    {\topsep}                    
    {\topsep}                    
    {\itshape}                   
    {}                           
    {\sffamily\bfseries }        
    {.}                          
    {.5em}                       
    {}  
\theoremstyle{theoremsansserif}
\newtheorem{thrm}{Theorem}[section]
\newtheorem{theorem}{Theorem}[section]
\newtheorem{lemma}[thrm]{Lemma}
\newtheorem{lem}[thrm]{Lemma}
\newtheorem{prop}[thrm]{Proposition}
\newtheorem{corollary}[thrm]{Corollary}
\newtheorem{conj}[thrm]{Conjecture}
\theoremstyle{definition}
\newtheorem{remark}{\sffamily\bfseries Remark}[section]
\newtheorem{assumption}{\sffamily\bfseries Assumption}[section]
\newcommand{\E}{{\mathbb{E}}}
\newcommand{\RR}{\mathbb{R}}
\newcommand{\bn}{\mathbf{n}} 
\newcommand{\bN}{\mathbf{N}} 
\newcommand{\bmu}{\boldsymbol{\mu}}
\newcommand{\cP}{\mathcal{P}}
\newcommand{\beqn}{\begin{eqnarray*}}
\newcommand{\eeqn}{\end{eqnarray*}}
\newcommand{\beq}{\begin{eqnarray}}
\newcommand{\eeq}{\end{eqnarray}}
\newcommand{\PP}{{\mathbb{P}}}
\definecolor{color1}{rgb}{0,0.5,0}
\DeclareSymbolFont{extraup}{U}{zavm}{m}{n}
\DeclareMathSymbol{\varheart}{\mathalpha}{extraup}{86}
\DeclareMathSymbol{\vardiamond}{\mathalpha}{extraup}{87}
\definecolor{darkblue}{rgb}{0.05,0.25,0.60}
\definecolor{darkgreen}{rgb}{0,0.6,0}
\definecolor{darkred}{rgb}{0.75,0,0}
\definecolor{dbbpurple}{rgb}{0.5,0,0.9}
\def\NNN{\nonumber}%
\newcommand{\nat}{n^{\star}_{1, T}}
\newcommand{\nbt}{n^{\star}_{2, T}}
\newcommand{\wa}{\omega^{\epsilon}_{T}}
\newcommand{\wb}{\omega^{-\epsilon}_{T}}
\newcommand{\bamu}{\bar\mu}
\newcommand{\nad}{n^\delta_{1,T}}
\newcommand{\nbd}{n^\delta_{2,T}}
\newcommand{\lx}{\lambda^{\star}}
\pgfplotsset{compat=1.17}
\begin{document}
\setstretch{1.3}
\doublespace


\newcommand{\yc}[1]{{\color{purple}[YC: #1]}}
\newcommand{\jl}[1]{{\color{violet}[JL: #1]}}
\newcommand{\ycedit}[1]{{\color{teal}#1}}

\title{\sf{\fontsize{16.0pt}{18pt}\textbf{A characterization of sample adaptivity in UCB data}}\\
}
\author[$\dag$]{\normalsize Yilun Chen}
\author[$\dag \ddag$]{\normalsize Jiaqi Lu}

\affil[$\dag$]{\sf School of Data Science, the Chinese University of Hong Kong, Shenzhen (CUHK Shenzhen)}
\affil[$\ddag$]{\sf School of Management and Economics, the Chinese University of Hong Kong, Shenzhen (CUHK Shenzhen)}
\affil[ ]{\texttt{chenyilun@cuhk.edu.cn, lujiaqi@cuhk.edu.cn}}

\date{\sf \today}	

{\singlespace \maketitle}

\begin{abstract}\singlespace
We characterize a joint CLT of the number of pulls and the sample mean reward of the arms in a stochastic two-armed bandit environment under UCB algorithms. Several implications of this result are in place: (1) a nonstandard CLT of the number of pulls hence pseudo-regret that smoothly interpolates between a standard form in the large arm gap regime and a slow-concentration form in the small arm gap regime, and (2) a heuristic derivation of the sample bias up to its leading order from the correlation between the number of pulls and sample means. Our analysis framework is based on a novel perturbation analysis, which is of broader interest on its own. 

\vspace{10mm}
\noindent\it{Key Words}: \rm  multi-armed bandit, UCB, sample adaptivity, joint CLT, slow concentration, pseudo-regret, sample bias
\end{abstract}

\thispagestyle{empty}
\pagebreak

\setcounter{page}{1}

\setstretch{1.46}

\pagebreak



\section{Introduction}
Multi-armed bandit (MAB) is a classic problem in reinforcement learning and decision theory with both theoretical appeal and practical relevance. A vast majority of the extensive MAB literature focus on designing algorithms and establishing their (expected) regret performance guarantees (\cite{lai1985asymptotically}, \cite{Auer2002}, \cite{agrawal2012analysis}, \cite{garivier2011kl}, \cite{kaufmann2012bayesian}, etc.). On the applied side, the MAB model is often considered in the design of clinical trials (\cite{thall2007practical}, \cite{press2009bandit}, \cite{magirr2012generalized}, \cite{villar2015multi}), {pricing experiments (\cite{misra2019dynamic}, \cite{calvano2020artificial}, \cite{wang2021multimodal}), portfolio selection (\cite{gagliolo2011algorithm}, \cite{shen2015portfolio}, \cite{huo2017risk}), content recommendation (\cite{li2010contextual}), and others. The unprecedented proliferation of learning algorithms and adaptive experiments across diverse applications is becoming an unignorable data source.} 
This motivates a recent surge in people's interest towards a better statistical understanding of bandit data, {potentially applicable to performing downstream inference tasks}. For example, \cite{kalvit2021closer} and \cite{fan2022typical} establish LLN and CLT of pseudo-regret (a classic notion of bandit algorithm's performance metric), \cite{han2024ucb} shows the asymptotic normality of sample mean collected from bandit experiments. \cite{fan2021fragility} studies the tail behavior of regret under optimized bandit algorithms, and \cite{simchi2023regret}, \cite{simchi2023multi}, \cite{simchi2023pricing} focus on the interplay between expected regret, regret tail risk, and the statistical power of statistical inference.



Despite these recent advancements, fundamental statistical properties of bandit data still remain largely underexplored. Notably, various numerical {and qualitative} evidence has been reported that in general, adaptively collected data exhibits systematic bias (\cite{xu2013estimation}, \cite{nie2018adaptively}, {\cite{shin2019sample},} \cite{hadad2021confidence}, \cite{dimakopoulou2021online}). Meanwhile, in certain cases, the number of pulls of an arm under popular bandit algorithms heavily fluctuates, as observed by \cite{kalvit2021closer, kuang2024weak}. These phenomena drastically deviate from what one would expect from standard i.i.d. samples. A key feature that sets the bandit data apart from i.i.d. samples is the so-called \textit{sample adaptivity}, which arises since bandit algorithms select the next arm to pull according to the history of all arm's past performance in each step, namely, \textit{fully} adaptive. Such fully-adaptive algorithm induces highly complex dynamics that are challenging to analyze. Consequently, a precise mathematical description of the sample adaptivity under popular bandit algorithms remains lacking in the literature. 

 In this work, we present a novel joint CLT of the number of pulls and the sample mean rewards generated under the celebrated UCB algorithms within the MAB model. This result gives a mathematical characterization of the sample adaptivity of UCB data, shedding light on several important matters including a nonstandard CLT of pseudo-regret and a quantitative characterization of the sample bias (see ``contributions'').

\paragraph{The problem (informal).} 
{Consider a stochastic two-armed bandit instance of length $T$, where each arm $i = 1, 2$ generates rewards according to some arm-specific distribution with mean $\mu_i$. Assume $\mu_1 \geq \mu_2$ without loss of generality. We focus on the class of {\sf generalized UCB1} algorithms that pulls the arm with the highest index $\bar\mu_{i,t-1}+\frac{f(t)}{\sqrt{N_{i,t-1}}}$ at each time $t$, where $N_{i,t-1}$, $\bar\mu_{i,t-1}$ are the number of pulls and sample mean of the collected rewards of arm $i$ at the end of time $t-1$, and $f(\cdot)$ is the exploration function (see Algorithm~\ref{algo: general-ucb1}). 
We are interested in understanding the sample adaptivity of this bandit data, in particular, the correlation structure of the number of pulls and the sample means. To this end, we consider a sequence of such bandit instances by sending $T \to \infty$. The arm gap $\Delta = \mu_1 - \mu_2 $ either remains a constant or $\to 0$ at certain $T$-dependent rate. We aim to characterize (under proper scaling) the joint distribution of $\left(N_{1,T},N_{2,T}, \bar\mu_{1,T}, \bar\mu_{2,T}\right)$ in different asymptotic regimes.}

\subsection{Contributions}\label{sec:contributions} 
We characterize a novel joint CLT of the number of pulls $N_{i,T}$ and the sample mean reward $\bar\mu_{i,T}$ of the arms in a two-armed stochastic bandit environment under the {\sf generalized UCB1} algorithm (Theorem \ref{thm:N1}). For example, (in the simplified setting of unit reward variance for both arms)
\begin{align}\label{eq:main-result-informal}
\begin{pmatrix}
    \Gamma_T\cdot(N_{2, T} - n^{\star}_{2, T}) \\
    \sqrt{n^{\star}_{1,T}}\cdot(\bar\mu_{1,T}-\mu_1) \\
    \sqrt{n^{\star}_{2,T}}\cdot(\bar\mu_{2,T}-\mu_2)
\end{pmatrix}
   \ \xrightarrow[]{d} \ \mathcal{N}\!\left(
  \begin{pmatrix}
    0 \\ 
    0 \\
    0
  \end{pmatrix},
  \begin{pmatrix}
    \lambda^{\star}+1 & {-\sqrt{\lambda^{\star}}} & 1\\
    -\sqrt{\lambda^{\star}} & {1} & 0 \\
    1 & 0 & 1
  \end{pmatrix}
\right)
\end{align}
where $\Gamma_T=\Theta\left(\frac{f(T)}{\nbt}\right)$. Here $n^{\star}_{1,T}$, $n^{\star}_{2,T}$ are the fluid approximation of $N_{1, T}$ and $N_{2, T}$, (see Section \ref{sec: tech_overview} below and Lemma \ref{lem: three gap regimes}), which depend on the arm gap $\Delta$ and $T$. $\lx \in [0, 1]$ captures the proportion of the number of pulls of arm 2 relative to arm 1 in the fluid limit, which is 0 when the arm gap is large and 1 when the arm gap is small. When $\lx$ approaches 0, the correlation between $N_{i,T}$ and the sample mean of the superior arm diminishes. The correlation between $N_{2,T}$ and the sample means in \eqref{eq:main-result-informal} is consistent with what one might expect. Qualitatively, the number of pulls are always positively correlated with the corresponding arm's sample mean and negatively correlated with the other arm's sample mean. 

\eqref{eq:main-result-informal} also generates valuable new insights regarding important matters such as the pseudo-regret and sample bias, novel to the literature. We highlight these findings below.


\paragraph{The non-standard CLT for $N_{i,T}$ and pseudo-regret}
\eqref{eq:main-result-informal} gives us the following non-standard CLT of the number of pulls $N_{i,T}$ for $i = 1, 2$:
\begin{align*}
    \Gamma_T\cdot(N_{i,T}-n^{\star}_{i,T}) \ \xrightarrow[]{d} \ \mathcal N(0,\lx+1)
\end{align*}
where $\Gamma_T=\Theta\left(\frac{f(T)}{n^{\star}_{2,T}}\right)$. 
This CLT interpolates between a standard one when the arm gap is large and a nonstandard one with slow concentration when the arm gap is small. 
In the special case of {\sf UCB1} ($f(T)=\sqrt{2\log T}$), when $\Delta=\Theta(1)$, $N_{2,T}$ concentrates around $\Theta(\log T)$ with typical deviation $\Theta(\sqrt{\log T})$, a common CLT scaling recovering Theorem 6 in \cite{fan2022typical}.
On the other extreme when $\Delta=O(\sqrt{\frac{\log T}{T}})$, $N_{2,T}$ concentrates around $\Theta(T)$ with typical  deviation $\Theta\left(\frac{T}{\sqrt{\log T}}\right)$, a nonstandard CLT with slow concentration as numerically observed by \cite{kalvit2021closer}. 
Our unified CLT characterization for all arm gap regimes bridges the aforementioned two extreme cases with a smooth interpolation. 

Our result on the number of pulls implies the non-standard CLT for pseudo-regret with a typical scaling $\Delta n^{\star}_{2,T}$ and a typical deviation $\Theta\left(\frac{n^{\star}_{2,T}}{f(T)}\Delta\right)$, since the pseudo-regret is simply $\Delta N_{2,T}$. 
Surprisingly, we find that under the algorithm with a faster-growing choice of $f(t)$, both the typical scaling and the typical deviation of pseudo-regret deteriorate. This is in contrast to the tail risk of pseudo-regret, which gets improved by more exploration (see \cite{fan2021fragility}, \cite{simchi2023regret}).


\paragraph{The sample bias}
{\eqref{eq:main-result-informal} implies the asymptotic normality of the sample mean $\bar\mu_{i,T}$ for $i=1,2:$ (with unit reward variance)
\begin{align}\label{eq:sample_mean_CLT_intro}
    \sqrt{n^\star_{i,T}}(\bar\mu_{i,T}-\mu_i)\ \xrightarrow[]{d} \ \mathcal{N}(0,1).
\end{align}
However, this convergence can be slow, a fact not captured by existing theoretical results (\citep{kalvit2021closer, fan2022typical,han2024ucb}). For instance, as Figure \ref{fig:empirical_mean_distribution} shows, in reasonably sized bandit experiments with identical arms (primitives described in the plot), the gap between the standardized empirical distribution of arm 2's sample mean reward and its CLT limit in \eqref{eq:sample_mean_CLT_intro} appears to be much more significant compared with the standard CLT's $\Theta(\frac{1}{\sqrt{n^\star_{i,T}}})$ rate of convergence indicated by the Berry-Esseen theorem.

\begin{figure}[htb!]
	\centering   
\includegraphics[width=0.5\textwidth]{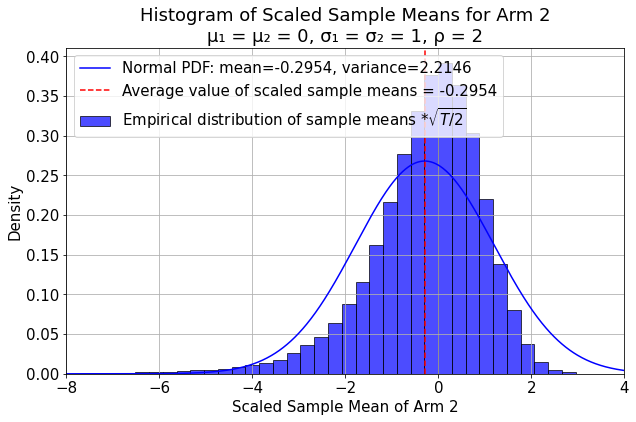}
    \caption{The empirical distribution of the sample mean of arm 2's reward under {\sf UCB1} ($f(t) = \sqrt{\rho\log T}$ with $\rho = 2$) when the horizon length $T=10^5$, with $10^5$ repetitions. Arm $i$'s reward distribution is $\mathcal{N}(\mu_i, 1), i = 1, 2$ (with $\mu_1=\mu_2=0$). The sample mean $\bar\mu_{2,T}$ from each repetition is standardized as in \eqref{eq:sample_mean_CLT_intro}, i.e., scaled by $\sqrt{n^\star_{2,T}}=\sqrt{T/2}$. The normal pdf curve matches the first two moments of the empirical distribution of the scaled sample means.}\label{fig:empirical_mean_distribution}
\end{figure}

Our joint CLT \eqref{eq:main-result-informal} sheds light on the correction of the sample mean's CLT in \eqref{eq:sample_mean_CLT_intro}. In particular, we focus on an important term in statistical inference---the sample bias $\mathbb E[\bar\mu_{i,T}]-\mu_i$. It is generally known that adaptively collected data may exhibit sample bias due to the correlation between the sample size and the sample mean (cf. \cite{bowden2017unbiased}), yet there lacks a theoretical characterization of the sample bias of data collected under popular bandit algorithms. Our joint CLT reveals the correlation structure between the number of pulls and the sample mean reward of an arm, enabling us to heuristically quantify the sample bias under UCB algorithms 
at an asymptotic precision beyond the CLT scaling in \eqref{eq:sample_mean_CLT_intro}.
For example, consider bandit data under the canonical {\sf UCB1} algorithm ($f(t)=\sqrt{\rho\log t}$) when the arm gap is zero ($\Delta = 0$). In this case, the two arms are identical, and the fluid number of pulls are $\nat = \nbt = \frac{T}{2}.$ We conjecture that (assuming unit reward variance) for both $ i = 1, 2$
\begin{align}\label{eq:sample-bias-intro}
    \sqrt{\frac{T}{2}}\left(\mathbb E[\bar\mu_{i,T}]-\mu_i\right)=
        -\sqrt{\frac{1}{\rho \log T}} + o\left(\frac{1}{\sqrt{\log T}}\right). 
\end{align}
{Numerical results in Appendix \ref{app:sampling_bias} compare the conjectured sample bias with the empirical sample bias from repeated experiments, which indicate the effectiveness of our conjecture.} We highlight that the {conjectured} sample bias is negative as qualitatively reported by \cite{nie2018adaptively} for adaptive data collection, 
and it vanishes at a slow rate of $\frac{1}{\sqrt{\log T}}$ (after normalization). This slow decay indicates that the bias remains significant enough that standard confidence intervals and inference methods based on the CLT may not be valid, especially when the sample size is not large enough. In Section \ref{sec:bias} we provide a complete characterization of our conjectured sample bias in all arm gap regimes beyond $\Delta = 0$.
}

\subsection{Technical overview}\label{sec: tech_overview}
We introduce a novel analysis framework to establish our main results, that views the bandit process as a complex dynamical system, and conducts perturbation analysis on top of it. The approach is generic and broadly applicable, of which we provide an overview within the $K$-armed bandit system and for UCB-type algorithms with general index functions $I(\cdot)$.

\paragraph{A Perturbation Analysis} Generally, any index policy such as {\sf UCB1} adaptively selects the next arm $i$ to pull based on the highest index $I(\bar\mu_{i,t},N_{i,t},t)$ for some index function $I(\cdot)$. In a continuous-time fluid approximation, we replace the stochastic reward by its mean, and let the index policy to continuously pull the arm with the highest index. If $I$ is smooth and satisfies the natural exploration-encouraging conditions (i.e., for any arm in the fluid system, the index increases in time $t$ whenever it is \textit{not} pulled, and decreases otherwise), then all arms' indices will always be kept equal under the algorithm. This gives the following natural characterization of the fluid system (at time $T$):
\begin{align}\label{eq: fluid system}
     I\left(\mu_1, n_{1, T}, T\right) &= I\left(\mu_k, n_{k, T}, T\right),\ \ 2\leq k \leq K\nonumber\\
     \sum_{k = 1}^K n_{k, T} &= T.
\end{align} 
Intuitively, the solution to the above system of equations, denoted by $\bn^{\star}_T \triangleq \left(n^{\star}_{1, T}, \dots n^{\star}_{K, T}\right)$ is expected to be a first-order approximation of $(N_{1,T},...,N_{K,T})$, the true number of pulls, under proper conditions, as observed and formalized in earlier works \cite{kalvit2021closer, han2024ucb}. We refer to $n^{\star}_{1, T}, \dots n^{\star}_{K, T}$ as the \textit{fluid approximation} of the number of pulls of each arm.

This work goes beyond \eqref{eq: fluid system}, and reveals how the true system’s dynamics deviate from the fluid approximation through a perturbation analysis. Inspired by the system of equations \eqref{eq: fluid system}, a natural conjecture is that the {\sf UCB} algorithm in the true system also tries to ``equate the indices''. 
\begin{conj}[Informal]\label{conj: true system}
In a ``reasonable'' bandit model and under a ``reasonable'' {\sf UCB} algorithm with index function $I$, the number of pulls $N_{k, T}$ should satisfy
\begin{align}\label{eq: true system conjectured fixed point equation}
    I\left(\bar\mu_{1, T}, N_{1, T}, T\right) &\approx  I\left(\bar\mu_{k, T}, N_{k, T}, T\right),\ \ 2\leq k \leq K,\nonumber\\
    \sum_{k = 1}^T N_{k, T} &= T.
\end{align} 
\end{conj}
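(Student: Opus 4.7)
The plan is to first make the $\approx$ in \eqref{eq: true system conjectured fixed point equation} quantitatively precise under natural regularity assumptions on $I$ (monotonicity and Lipschitz control in each argument; a quantified one-pull jump bound on $|I(\bar\mu', N+1, t) - I(\bar\mu, N, t)|$; and a time-drift bound on $|I(\bar\mu, N, T) - I(\bar\mu, N, \tau)|$), and then to prove it via a last-pull-time argument that directly exploits the greedy UCB selection rule. Concretely, the target inequality is, for every pair of arms $j, k$,
\[
\bigl|I(\bar\mu_{j,T}, N_{j,T}, T) - I(\bar\mu_{k,T}, N_{k,T}, T)\bigr| \leq E_{j,k},
\]
with $E_{j,k}$ of smaller order than the Jacobian-scaled perturbation $\Gamma_T^{-1}$ that appears in \eqref{eq:main-result-informal}.

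For each arm $k$, let $\tau_k = \max\{t \leq T : A_t = k\}$ denote its last pull time. By the UCB rule, its pre-pull index $I(\bar\mu_{k,\tau_k-1}, N_{k,\tau_k-1}, \tau_k)$ dominates every competitor's index at time $\tau_k$; and since $\bar\mu_k$ and $N_k$ are frozen on $[\tau_k, T]$, this pre-pull index differs from $I_k := I(\bar\mu_{k,T}, N_{k,T}, T)$ only by the one-pull update at $\tau_k$ plus the time drift from $\tau_k$ to $T$. Symmetrizing the argument by applying it at $\tau_j$ yields a two-sided sandwich for $I_j - I_k$ whose error accumulates (i) the one-pull jumps, (ii) the time drift of the exploration term, and (iii) the incremental change in the competitor's state over the window $[\tau_k \wedge \tau_j, \tau_k \vee \tau_j]$. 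Combining the resulting bound $|I_j - I_k| \leq E_{j,k}$ with the budget identity $\sum_k N_{k,T} = T$ and Taylor-expanding $I$ at $(\mu_k, n_{k,T}^\star, T)$, inverting the Jacobian of the fluid system \eqref{eq: fluid system} then translates the deterministic residuals $E_{j,k}$ together with the stochastic sample-mean fluctuations $\bar\mu_{k,T} - \mu_k$ into the joint distribution of $(N_{k,T} - n_{k,T}^\star, \bar\mu_{k,T} - \mu_k)$ claimed in \eqref{eq:main-result-informal}.

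The main obstacle is obtaining an a priori control of $T - \tau_k$ that is uniform across all arm-gap regimes, since the length of this window controls the order of $E_{j,k}$. The heuristic is self-bootstrapping: if arm $k$ were idle on $[T-w, T]$, its exploration bonus would inflate through $f(\cdot)$ while its competitors' indices fell with their additional pulls, forcing $k$ to be chosen again within $w$ steps; making this quantitative requires coupling back into the standard UCB concentration analysis. Two further subtleties loom. First, in the small-gap regime the fluid Jacobian is nearly degenerate, so even modest residuals $E_{j,k}$ are amplified after inversion---this is precisely why the CLT scaling $\Gamma_T$ in \eqref{eq:main-result-informal} is slow when $\lx \to 1$, and it forces us to retain higher-order terms in the one-pull jump. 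Second, $\bar\mu_{k,T}$ is correlated with $N_{k,T}$ through the algorithm's past decisions, so one cannot treat it as exogenous noise when inverting the Jacobian; instead the $\bar\mu_k$'s must be propagated as endogenous martingale inputs, with the nontrivial cross-correlations in \eqref{eq:main-result-informal} emerging naturally from the Jacobian inversion of the perturbed fixed-point system.
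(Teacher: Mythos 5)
Your last-pull-time argument is a genuinely different route from what the paper does. Note first that Conjecture~\ref{conj: true system} is deliberately informal; the paper's rigorous two-arm counterpart is Lemma~\ref{lem: high-prob-N1-n1}, and that lemma never establishes a bound of the form $|I_1 - I_2| \le E_{1,2}$. Instead it shows that $N_{2,T}$ concentrates around the perturbed fluid target $\wa$, via an \emph{overshoot-time} decomposition: $\{N_{i,T} > a\} \subseteq \{\exists\, t \le T-1 : N_{i,t} = a,\ I_{i,t+1} > I_{-i,t+1}\}$. The pivot is the hitting time at which arm $i$'s pull count crosses $a$, not the last pull time $\tau_k$. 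That hitting time is then split into a near-$T$ window (handled by the maximal inequality, Lemma~\ref{lem: maximal inequality}) and a far-from-$T$ window (handled by the finite-time LIL, Lemma~\ref{lem: anytime-LIL}), and both pieces are shown to have vanishing probability.

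The genuine gap in your sketch is exactly where you flag it: the a priori control of $T - \tau_k$. Your self-bootstrapping heuristic would require the very concentration inputs the paper already invokes and is not obviously easier to make rigorous; in the small-gap regime, for example, $T - \tau_k$ is typically $O(1)$, but a high-probability bound on $\max_k (T - \tau_k)$ sharp enough to render your residual $E_{j,k}$ negligible needs LIL-type and maximal-inequality control of both arms' index trajectories simultaneously. The paper's overshoot-time pivot sidesteps this entirely: it never bounds the age of the last pull, but instead union-bounds over all times at which arm $i$'s count equals the threshold $a$, separating near-$T$ from far-from-$T$ so that complementary concentration tools apply. Your route is likely salvageable but amounts to a reparametrization of the same proof burden around a harder pivot, not a shortcut.

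One smaller caution on scaling: you assert $E_{j,k}$ should be ``of smaller order than $\Gamma_T^{-1}$,'' but $E_{j,k}$ lives in index units while $\Gamma_T^{-1}$ lives in pull-count units. Translating through $|I'_{k,2}| \sim f(T)(\nbt)^{-3/2}$, the requirement is $E_{j,k} = o\bigl((\nbt)^{-1/2}\bigr)$, i.e.\ strictly below the typical sample-mean fluctuation $\sigma_k (\nbt)^{-1/2}$. The one-pull jump (order $f(T)(\nbt)^{-3/2}$) clears this, but the time-drift contribution is precisely where an uncontrolled $T - \tau_k$ would wreck the bound.
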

Replace $I(\cdot)$ by their first-order approximations at $(\mu_k, n^{\star}_{k, T})$, namely
\begin{align*}
    I\left(\bar\mu_{k, T}, N_{k, T}, T\right) \approx I\left(\mu_{k}, n^{\star}_{k, T}, T\right) + I'_{k, 1} \cdot (\underbrace{\bar\mu_{k, T} - \mu_k}_{\bar\varepsilon_k}) + I'_{k, 2} \cdot (\underbrace{N_{k, T} - n^{\star}_{k, T}}_{\omega_k}),
\end{align*}
where $I'_{k, 1}, I'_{k,2}$ are the partial derivatives of $I$ w.r.t. the first and second arguments evaluated at $(\mu_k,n^{\star}_{k,T},T)$, respectively. This approximation, combined with \eqref{eq: fluid system}, allows us to further simplify the conjectured system of equations \eqref{eq: true system conjectured fixed point equation}, to a system of linear equations
\begin{align}\label{eq: linear-system-correlation-conjecture}
    \begin{bmatrix}
1 & 1 & 1 & \dots & 1 \\
-I'_{1,2} & I'_{2,2} & 0 & \dots & 0 \\
\dots  & \dots  & \dots  & \dots & \dots  \\
-I'_{1,2} & 0 & 0 & \dots & I'_{K,2}
\end{bmatrix}
\begin{bmatrix}
\omega_1 \\ \omega_2 \\ \dots \\ \omega_K
\end{bmatrix}
=
\begin{bmatrix}
0 \\ I'_{1,1}\bar\varepsilon_1  - I'_{2,1}\bar\varepsilon_2 \\ \dots \\ I'_{1, 1}\bar\varepsilon_1 - I'_{K,1}\bar\varepsilon_K
\end{bmatrix}.
\end{align}
which admits closed-form solutions (Refer to Lemma~\ref{lem: linear-system-correlation-conjecture} in Appendix \ref{app:k-arm} for a precise form). In particular, we obtain approximations of $N_{k, T}, k = 1, \dots K$ beyond their fluid approximations $\bn^{\star}_T$, each as an linear combination of the sample means $\bar\mu_{k, T}, k = 1, \dots K$. This characterizes the dependence structure between $N_{k, T}$ and $\bar\mu_{k, T}$ for $k = 1, \dots K$. We then arrive at the joint CLT of \eqref{eq:main-result-informal} (and Theorem \ref{thm:N1}) through an approximation of each $\bar\mu_{k, T}$ by the sample means of the corresponding arm at their fluid approximation $n^{\star}_{k, T}$, which are non-adaptive, completely independent and asymptotically normal across arms $k = 1, \dots, K$.

The main result of this work, Theorem \ref{thm:N1}, can be viewed as a formalization of the above intuition in the two-arm case, with the ``reasonable'' bandit model, the ``reasonable'' {\sf UCB} algorithm, as well as the precise notion of ``approximately equal'' rigorously specified. We expect that similar results hold in the general $K$-arm setting. A precise form of the joint CLT, derived following the procedures sketched above, is provided in Appendix~\ref{app:k-arm}, with brief discussion of its implications, although a formal proof is omitted.

\paragraph{Additional notation.}
For a sequence of random variables $Y_n$, we denote by $Y_n \xlongrightarrow[]{p} Y$ and $Y_n \xlongrightarrow[]{d} Y$, respectively, the convergence in probablity and in distribution. We say $f(T)=o(g(T))$ or $g(T)=\omega(f(T))$ if $\lim_{T\rightarrow\infty}\frac{f(T)}{g(T)}=0$. Similarly, $f(T)=O(g(T))$ or $g(T)=\Omega(f(T))$ if $\limsup_{T\rightarrow\infty}\Big|\frac{f(T)}{g(T)}\Big|\leq C$ for some constant $C$. If $f(T)=O(g(T))$ and $f(T)=\Omega(g(T))$ hold simultaneously, we say $f(T)=\Theta(g(T))$. We write $f(T)\sim g(T)$ in the special case where $\lim_{T\rightarrow\infty}\frac{f(T)}{g(T)}=1$. If either sequence $f(T)$ or $g(T)$ is random, and one of the aforementioned ratio conditions holds in probability, we use the subscript $p$ with the corresponding Landau symbol. For example, $f(T)=o_p(g(T))$ if $\frac{f(T)}{g(T)}\xrightarrow[]{p}0$ as $T\rightarrow\infty$. 
Similar to $\bn$, we in general use bold symbols to denote vectors, e.g. $\bN_{j} = (N_{1, j}, \dots, N_{K, j})$ and $\bar\bmu_{j}= (\bar\mu_{1, j}, \dots, \bar\mu_{K, j})$. 

\paragraph{Organization of the paper} We formally setup the problem in Section~\ref{sec:prelim}. The main result is presented in Section \ref{sec:main-result} and its implications are discussed in Section \ref{sec:implications}.

\section{Preliminary}\label{sec:prelim}
\paragraph{The MAB model.\ } 
 We consider a sequence of two-armed stochastic bandit problems, indexed by $T \geq 1$. The $T^{\rm th}$ problem has $T$ decision epochs. Associated with each arm $i\in\{1,2\}$ in the $T^{\rm th}$ problem is a reward distribution $\cP^T_i$ with mean $\mu^T_i$, and an infinite sequence of rewards $X^T_{i, 1},  \dots$ drawn \emph{i.i.d.} from $\mathcal{P}^T_i$. Let $\bar\mu^T_{i}(m) \triangleq \frac{1}{m}\sum_{j = 1}^m X^T_{i, j}$ be the (running) sample mean of arm $i$'s reward with the first $m$ samples. Denote by $\cP^T_\star$ the reward distribution with the largest mean $\mu^T_\star$. Let $\Delta^T_i \triangleq \mu^T_\star - \mu^T_i$ denote the sub-optimality gap of arm $i$. WLOG we let $\cP^T_\star = \cP^T_1$, i.e. arm $1$ is the best arm (with the largest mean) for all $T \geq 1$. 

We impose the following assumptions on the bandit environment.
\begin{assumption}[Properties of the bandit environment]\label{assump: distribution}
The reward distributions satisfy:
\begin{enumerate}
    \item $\mu^T_i$ is uniformly bounded for each $i = 1, 2$, and $\Delta^T$ is monotone decreasing in $T$. 
    \item $\textrm{Var}(Y) = (\sigma^T_i)^2$ exists for $Y$ distributed according to $\cP^T_i$. Furthermore there exists positive constants $\sigma_1, \sigma_2$ and $\sigma$, such that $\lim_{T \to \infty} \sigma^T_i = \sigma_i$ and $\sigma_i \leq \sigma, i = 1, 2.$
    \item  $\cP^T_i$ are sub-Gaussian for each $i = 1, 2$ and any $T \geq 1$.
\end{enumerate}
\end{assumption}
\begin{remark}
We allow $\Delta^T \not\to 0$, which effectively captures the ``constant-gap'' regime.
\end{remark}

\paragraph{The {\sf generalized UCB1} algorithms}

In this work, we focus on the {\sf generalized UCB1} algorithm, which generalizes the celebrated and widely studied {\sf UCB1} algorithm (\cite{Auer2002}). Formally, in the $T^{\rm th}$ bandit problem, {\sf generalized UCB1} with exploration function $f(t)$ selects an arm $A_t = i \in \{1, 2\}$ with the highest index $\bar\mu^T_{i}(N^T_{i, {t-1}}) + (N^T_{i, {t-1}})^{-\frac{1}{2}}f(t)$ at a decision epoch $t$, upon which the next not-yet-revealed reward in the sequence $X^T_{A_t, 1}, X^T_{A_t, 2} \dots$ is revealed and collected by the algorithm. Here $N^T_{i, t} \triangleq \sum_{j = 1}^t \mathbbm{1}_{\left\{A_j = i\right\}}$ denote the number of pulls of arm $i$ up to {(and including)} time $t$. To simplify notation, we use $\bar\mu^T_{i, {t-1}} \triangleq \bar\mu^T_{i}(N^T_{i, {t-1}})$ to denote the sample mean of arm $i$'s rewards at {the beginning of} decision epoch $t$. Furthermore, we drop the superscript $T$ and use notations $X_{i,j}, N_{i,j}, \bar\mu_i(m), \bar\mu_{i,j}$ instead when $T$ is clear from the context. A formal description of the {\sf generalized UCB1} algorithm is given in Algorithm \ref{algo: general-ucb1}.


\begin{algorithm}
    \begin{algorithmic}[1]
        \caption{The {\sf generalized UCB1}}
        \State \textbf{Input:} Exploration function  $f(\cdot)$.
        \State At $t = 1, 2$, play each arm $i$ once and initiate $N_{i,2} = 1, \bar\mu_{i,2}=X_{i,1}$, $i \in \{1, 2\}$.
        \State \textbf{for} $t \in \{3, ..., T\}$ \textbf{do}
        \State \quad\quad Select arm $A_t \in \arg\max_{i \in \{1, 2\}} \left\{\bar\mu_{i, t-1}+\frac{f(t)}{\sqrt{N_{i, t-1}}}\right\}$.
        \State \quad\quad Update $N_{i, t} \leftarrow N_{i, t-1} + \mathbbm{1}_{\left\{A_t = i\right\}}$.
        \State \quad\quad Update $\bar\mu_{i, t} \leftarrow  \frac{ \bar\mu_{i, t-1} N_{i, t-1}  + X_{i, N_{i, t}} \mathbbm{1}_{\left\{A_t = i\right\}} }{N_{i, t}} $.
        \label{algo: general-ucb1}
    \end{algorithmic}
\end{algorithm}

We specify some technical assumptions on the exploration function $f(\cdot)$.
\begin{assumption}[Properties of the exploration function]\label{assump: index}
The exploration function $f(t)$ satisfies the following conditions
        \begin{enumerate}
        \item $f(t)$ is monotone increasing and $f(t) = \omega(\sqrt{\log\log t})$
            \item There exists $0 \leq \beta < \frac{1}{2}$, such that $\frac{f(t)}{t^\beta}$ is decreasing in $t$.
        \end{enumerate}
\end{assumption}
\begin{remark}
 $f(t) = \sqrt{\rho\log T}$ for some constant $\rho$ recovers the {\sf canonical UCB} of \cite{kalvit2021closer}. In particular, when $\rho = 2$, we recover the {\sf UCB1} of \cite{Auer2002}. In general, $f(t)$ is allowed to scale in a broad range, faster than $\sqrt{\log\log t}$ and slower than $\sqrt{t}$.
\end{remark}

\section{Main Result}\label{sec:main-result}
 Under the {generalized UCB1} class of algorithms, the generic fluid systems of equations Eq. (\ref{eq: fluid system}) has the following explicit form
\begin{align}\label{eq: fluid equation 2 arm f}
    (\nbt)^{-\frac{1}{2}} - (\nat)^{-\frac{1}{2}} = (f(T))^{-1}\Delta^T \ \ , \ \  \nat + \nbt = T,
\end{align}
where we denote $\Delta^T \triangleq \Delta^T_2$ to be the mean gap between the two arms to simplify notation. The form of the fluid equations leads to the following explicit scaling characterization of $\nat, \nbt$, in three different regimes.  
\begin{lemma}[Fluid Scaling]\label{lem: three gap regimes}
Let $(\nat, \nbt)$ be the unique solution of Eq. \eqref{eq: fluid equation 2 arm f}. Denote by $\lx \triangleq \lim_{T \to \infty}\frac{\nbt}{\nat}$. The scaling of $(\nat, \nbt)$ and $\lx$ can be explicitly specified. Precisely,
\begin{itemize}
    \item ``Large gap'': $\Delta^T = \omega\left(\frac{f(T)}{\sqrt{T}}\right)$, then $\nbt \sim \left(\frac{f(T)}{\Delta^T}\right)^2\ , \ \nat \sim T , \ \lx = 0.$
    \item ``Small gap'': $\Delta^T = o\left(\frac{f(T)}{\sqrt{T}}\right)$, then $\nbt \sim \frac{T}{2}\ , \ \nat \sim \frac{T}{2} , \ \lx = 1.$
    \item ``Moderate gap'': $\Delta^T \sim \theta\frac{ f(T)}{\sqrt{T}}$ for some $\theta \geq 0$, then $\nbt \sim \frac{\lx}{1 + \lx} T \ , \ \nat \sim \frac{1}{1 + \lx} T$ with $\lx \in (0, 1]$ solves $\sqrt{1 + \frac{1}{\lx}} - \sqrt{1+ \lx} = \theta.$
\end{itemize}
\end{lemma}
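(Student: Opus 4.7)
The plan is to reduce the two-equation fluid system to a single equation in the ratio $\lambda_T \triangleq \nbt/\nat$, then analyze it via monotonicity. First, since $\Delta^T \geq 0$, the index equation $(\nbt)^{-1/2} - (\nat)^{-1/2} = \Delta^T/f(T)$ forces $\nbt \leq \nat$, so $\lambda_T \in (0,1]$. Writing $\nat = T/(1+\lambda_T)$ and $\nbt = \lambda_T T/(1+\lambda_T)$ and substituting into the first equation, multiplying through by $\sqrt{T}$, gives the reduced equation
\begin{equation}\label{eq:reduced-fluid}
g(\lambda_T) \;\triangleq\; \sqrt{1+\tfrac{1}{\lambda_T}} \;-\; \sqrt{1+\lambda_T} \;=\; \frac{\sqrt{T}\,\Delta^T}{f(T)} \;\triangleq\; \theta_T.
\end{equation}
The function $g:(0,1]\to[0,\infty)$ is smooth, strictly decreasing (first term decreasing, subtracted term increasing), with $g(1)=0$ and $\lim_{\lambda\downarrow 0}g(\lambda)=+\infty$. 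This immediately gives existence and uniqueness of the fluid solution, and reduces the scaling analysis to examining how $\lambda_T=g^{-1}(\theta_T)$ behaves in the three regimes defined by $\theta_T$.

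Next I would handle the three cases. For the \emph{small gap} regime, $\Delta^T=o(f(T)/\sqrt T)$ means $\theta_T\to 0$, so $\lambda_T\to g^{-1}(0)=1$ by continuity; substituting into $\nat=T/(1+\lambda_T)$ and $\nbt=\lambda_T T/(1+\lambda_T)$ yields both quantities $\sim T/2$. For the \emph{moderate gap} regime, $\theta_T\to\theta\in(0,\infty)$, so $\lambda_T\to\lx\in(0,1)$ with $\lx$ the unique root of $g(\lx)=\theta$; the scalings of $\nat,\nbt$ follow directly. For the \emph{large gap} regime, $\theta_T\to\infty$, so $\lambda_T\to 0$; to extract the precise scaling, I would use the one-term Taylor expansion $g(\lambda)=\lambda^{-1/2}\sqrt{1+\lambda}-\sqrt{1+\lambda}=\lambda^{-1/2}(1+O(\sqrt\lambda))$ as $\lambda\to 0$, which gives $\lambda_T\sim\theta_T^{-2}$. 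Substituting back, $\nbt\sim\lambda_T T=(f(T)/\Delta^T)^2$ and $\nat=T-\nbt\sim T$ (using $\nbt=o(T)$, which follows because $\theta_T\to\infty$ and $\nbt/T\asymp \lambda_T\to 0$).

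I expect no genuine obstacle here; the content of the lemma is really a change of variables that turns a coupled pair of equations into the explicit scalar equation \eqref{eq:reduced-fluid}, plus routine asymptotics of $g$ near its two endpoints. The only care needed is (i) verifying that $\nbt/T\to 0$ in the large-gap regime before concluding $\nat\sim T$ from $\nat+\nbt=T$, and (ii) writing $g$'s expansion precisely enough that $\lambda_T\sim\theta_T^{-2}$ (not just $\lambda_T=o(1)$), which is what pins down the sharp $(f(T)/\Delta^T)^2$ scaling of $\nbt$ and distinguishes this lemma from a soft limiting statement.
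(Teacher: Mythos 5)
Your proof is correct, and there is nothing in the paper to compare it against: the authors state explicitly ``We omit the proof'' after Lemma~\ref{lem: three gap regimes}. The change of variables to $\lambda_T = \nbt/\nat$ and the reduction to the scalar equation $g(\lambda_T)=\theta_T$ with $g(\lambda)=\sqrt{1+1/\lambda}-\sqrt{1+\lambda}$ is exactly the natural route (and is visibly what the statement of the moderate-gap case is built around), the monotonicity and endpoint limits of $g$ on $(0,1]$ are as you describe, and the asymptotic $g(\lambda)=\lambda^{-1/2}(1+O(\sqrt\lambda))$ as $\lambda\downarrow 0$ is the correct refinement needed to pin down $\nbt\sim(f(T)/\Delta^T)^2$ rather than merely $\nbt=o(T)$ in the large-gap regime. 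The only negligible discrepancy is that the lemma's moderate-gap case allows $\theta\geq 0$ (hence $\lx\in(0,1]$), while you restrict to $\theta\in(0,\infty)$ and $\lx\in(0,1)$; the boundary $\theta=0$ is already covered by your small-gap argument, so no content is lost.
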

We omit the proof.  Lemma~\ref{lem: three gap regimes} gives the first-order characterization of the dynamics of the UCB algorithm.  As our main result, we describe how the true bandit system under UCB algorithms fluctuates around the fluid approximation. The quantities $\nat, \nbt, \lx$ are thus crucial in our main result, which is stated below.

\begin{theorem}[Joint CLT]\label{thm:N1}
Consider a two-armed bandit environment satisfying Assumption \ref{assump: distribution}. The {\sf generalized UCB1} in Algorithm \ref{algo: general-ucb1} with exploration function $f(t)$ that satisfies Assumption \ref{assump: index} is implemented.
Then 
\begin{align*}
\begin{pmatrix}
    \frac{1 + \left(\lx\right)^{\frac{3}{2}}}{2} \frac{f(T)}{\nbt}\left(N_{2, T} - \nbt\right) \\ 
    \sqrt{\nat}\left(\bamu_{1, T} - \mu^T_1\right) \\
    \sqrt{\nbt}\left(\bamu_{2, T} - \mu^T_2\right)
  \end{pmatrix} \ \xlongrightarrow[]{d} \ \mathcal{N}\!\left(
  \begin{pmatrix}
    0 \\ 
    0 \\
    0
  \end{pmatrix},
  \begin{pmatrix}
   \lx\sigma_1^2 + \sigma_2^2 & -\sigma_1^2\sqrt{\lx} & \sigma_2^2 \\
    -\sigma_1^2\sqrt{\lx} & {\sigma_1^2} & 0 \\
    \sigma_2^2 & 0 & \sigma_2^2
  \end{pmatrix}
\right),
\end{align*}
where $\nat, \nbt, \lx$ is defined as in Lemma~\ref{lem: three gap regimes}.  
\end{theorem}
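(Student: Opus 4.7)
My plan is to execute the perturbation program sketched in Section~\ref{sec: tech_overview} for the two-armed case, turning each heuristic step into a controlled estimate. The first and most delicate ingredient is an approximate terminal index-equating identity. Let $\tau_i$ be the last step at which arm $i$ is pulled under Algorithm~\ref{algo: general-ucb1}. At $\tau_i$ arm $i$'s UCB index dominated arm $3{-}i$'s; between $\tau_i$ and $T$ the sample mean and count of arm $i$ do not change, while $f(\cdot)$ only grows (Assumption~\ref{assump: index}). Converting these one-step inequalities into a two-sided sandwich should yield
\begin{equation*}
\left|\bar\mu_{1,T}+\frac{f(T)}{\sqrt{N_{1,T}}} - \bar\mu_{2,T}-\frac{f(T)}{\sqrt{N_{2,T}}}\right| \;=\; o_p\!\left(\frac{f(T)}{(\nbt)^{3/2}}\right),
\end{equation*}
which is precisely the scale of the fluctuations I ultimately want to isolate.

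I then subtract the fluid identity \eqref{eq: fluid equation 2 arm f}, substitute $N_{i,T}=n^\star_{i,T}+\omega_i$, and Taylor-expand $1/\sqrt{x}$ about $n^\star_{i,T}$, noting $\tfrac{d}{dx}x^{-1/2}=-\tfrac12 x^{-3/2}$. Using the hard constraint $\omega_1=-\omega_2$ inherited from $N_{1,T}+N_{2,T}=T$, the linearized system collapses to a single scalar relation, which after multiplying through by $\sqrt{\nbt}$ and using $\sqrt{\nbt}=\sqrt{\lambda^\star}\sqrt{\nat}$ rearranges to
\begin{equation*}
\frac{1+(\lambda^\star)^{3/2}}{2}\,\frac{f(T)}{\nbt}(N_{2,T}-\nbt) \;=\; \sqrt{\nbt}\,(\bar\mu_{2,T}-\mu^T_2)-\sqrt{\lambda^\star}\sqrt{\nat}\,(\bar\mu_{1,T}-\mu^T_1)+R_T,
\end{equation*}
where $R_T$ collects the index-equating error from Step~1 and the Taylor remainder of order $\omega_2^2 f(T)/(\nbt)^{5/2}$. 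Both are $o_p(1)$ once I have the a priori concentration $|N_{2,T}-\nbt|/\nbt=o_p(1)$, which I would import from existing UCB analyses (e.g.\ \cite{kalvit2021closer,han2024ucb}).

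Next I would swap each adaptive $\bar\mu_{i,T}=\bar\mu_i(N_{i,T})$ for the non-adaptive $\bar\mu_i(\lfloor n^\star_{i,T}\rfloor)$. Using the same a priori count concentration together with a Doob/Kolmogorov maximal inequality for sub-Gaussian partial sums, the replacement costs $o_p(1)$ after standardization by $\sqrt{n^\star_{i,T}}$. The two resulting non-adaptive standardized means are built from disjoint i.i.d. blocks, hence independent across arms, and they jointly converge to independent Gaussians $\mathcal N(0,\sigma_1^2)$ and $\mathcal N(0,\sigma_2^2)$ by the classical Lindeberg CLT under Assumption~\ref{assump: distribution}. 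Composing with the linear map from the previous step and applying Slutsky gives the three-dimensional normal limit; a direct calculation reproduces the asserted covariance matrix, with entries $-\sigma_1^2\sqrt{\lambda^\star}$ and $\sigma_2^2$ in the first row arising mechanically from the coefficients $-\sqrt{\lambda^\star}$ and $1$ multiplying the independent Gaussian marginals.

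The hard part will be Step~1: getting the $o_p(f(T)/(\nbt)^{3/2})$ control on the terminal index gap \emph{uniformly} across the three arm-gap regimes of Lemma~\ref{lem: three gap regimes}. In the large-gap regime $\nbt\ll T$, arm~$2$ may go long stretches without being pulled, so the last-pull linking argument is delicate and requires tight bounds on how much the exploration bonus $f(t)/\sqrt{N_{2,t}}$ can drift during a long idle stretch. In the small-gap regime $\nat\asymp\nbt\asymp T/2$, but the two fluid indices are separated only by an amount of order $f(T)/T^{3/2}$, so even small oscillations of the sample means between consecutive pulls can violate a naive bound. A unified treatment will likely need a careful partition of the terminal window of the horizon combined with sub-Gaussian maximal inequalities to control the excursions of $\bar\mu_{i,t}$ over that window; I suspect this is where the bulk of the technical work lives.
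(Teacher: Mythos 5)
Your proposal reproduces the paper's algebraic skeleton exactly: linearize the index balance about the fluid solution, use $N_{1,T}+N_{2,T}=T$ to collapse to a scalar relation, swap adaptive sample means for non-adaptive ones via count concentration plus a maximal inequality, and finish with independence of the two arms' block sums, a triangular-array CLT, and Slutsky. Your covariance computation from $W_2 \approx Z_2 - \sqrt{\lambda^\star} Z_1$ is also correct and matches the paper's derivation of \eqref{eq: N and Z-star}.

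Where you genuinely diverge is in the key technical lemma. You propose to establish a two-sided ``terminal index-equating'' identity by sandwiching the observed UCB indices at time $T$ using the last-pull times $\tau_1,\tau_2$, and then control the excursions of $\bar\mu_{i,t}$ between $\tau_i$ and $T$. The paper instead proves Lemma~\ref{lem: high-prob-N1-n1} by a threshold-crossing argument: it defines the random level $\omega^{\pm\epsilon}_T$ explicitly as a function of the \emph{non-adaptive} sample means $\bar\mu_1(n^\delta_{1,T}),\bar\mu_2(n^\delta_{2,T})$, and uses the elementary inclusion $\{N_{2,T}>a\}\subseteq\{\exists\,t\le T-1: N_{2,t}=a,\ A_{t+1}=2\}$ to reduce both tail probabilities $\PP(N_{2,T}>\omega^{\epsilon}_T)$ and $\PP(N_{1,T}\ge T-\omega^{-\epsilon}_T)$ to maximal inequalities and an anytime LIL on the non-adaptive partial-sum processes. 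This avoids the translation-from-$\tau_i$-to-$T$ difficulty you flag: once you fix a deterministic-conditional level $a$, the event is about the increment $\bar\mu_i(a)-\bar\mu_{-i}(t-a)$ at whatever $t$ the level is crossed, which is directly amenable to uniform-in-$t$ control (the near-horizon window via the custom maximal inequality, the far window via the LIL). Your route likely works too, but the boundary-crossing formulation is cleaner because it never needs to argue about the random time $\tau_2$ or about how arm 1's index drifts over a possibly $\Theta(T)$-long idle stretch in the constant-gap regime. You correctly anticipated that this is where the bulk of the technical work sits.

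One concrete gap to fix: you propose importing the a priori concentration $|N_{2,T}-n^\star_{2,T}|/n^\star_{2,T}=o_p(1)$ from \cite{kalvit2021closer,han2024ucb}. Those results are proved under narrower hypotheses than Assumptions~\ref{assump: distribution}--\ref{assump: index} here (bounded or Gaussian rewards, $f(t)=\sqrt{\rho\log t}$ or a $T$-aware simplification, and not uniformly across all three gap regimes). If you import a WLLN that does not cover, e.g., sub-Gaussian rewards with $f(t)=\omega(\sqrt{\log\log t})$ in the large-gap regime, your Step~1 argument has a hole. The paper sidesteps this by deriving the needed concentration (Corollary~\ref{cor: N1-concentrate}) as a byproduct of the same Lemma~\ref{lem: high-prob-N1-n1}, so nothing is imported. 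If you keep your last-pull route, you would need to either prove the concentration yourself (essentially redoing the boundary-crossing estimate under a weaker guise) or verify that the cited results extend to the full assumption set.
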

\begin{remark}
    Theorem~\ref{thm:N1} can be equivalently stated as a four-dimensional joint CLT with the addition of the number of superior arm pulls, $N_{1, T}$, as (trivially) $N_{1, T} - \nat = -\left(N_{2, T} - \nbt\right)$. We state it in the current form for ease of notation.
\end{remark}

The observations made in prior works \cite{kalvit2021closer, han2024ucb} regarding the statistical amenability of {\sf UCB1} data can be recovered with Theorem~\ref{thm:N1}. Firstly, since $f(T) = \omega(1)$ and $\nat \geq \nbt$, the weak LLN $\frac{N_{i, T} - n^{\star}_{i, T}}{n^{\star}_{i, T}} \xlongrightarrow[]{p} 0$ follows directly from Theorem~\ref{thm:N1}. In words, the number of arm pulls are asymptotically concentrated around the respective fluid approximation regardless of the mean gap regime. Secondly, the naive mean estimator $\bamu_{i, T} = \bamu_i(N_{i, T})$ is asymptotically unbiased and enjoys the CLT with standard deviation $\Theta\left((n^{\star}_{i, T})^{-\frac{1}{2}}\right)$—in words, as if they were computed from standard \emph{i.i.d.} samples. 

What is more interesting, however, is the additional message delivered by Theorem \ref{thm:N1}. First, we establish a non-standard CLT for the number of pulls, with standard deviation $\Theta\left(\frac{\nbt}{f(T)}\right)$ instead of the common $\Theta\left(\sqrt{\nbt}\right)$ scaling one would expect. Second, we explicitly characterize the asymptotic correlation between the number of pulls and the sample means across different asymptotic regimes. Qualitatively, the number of pulls are always positively correlated with the corresponding arm's sample mean and negatively correlated with the other arm's sample mean, consistent with what one might expect. Moving from the moderate and small gap regimes to the large gap regimes, the impact of the superior arm (arm 1)'s performance fluctuation on the number of pulls diminishes. Both of these findings are novel to the literature.


The data generated by online learning algorithms/sequential experiments is generally known to be deviating from the standard \emph{i.i.d.} samples due to {sample adaptivity}. Theorem~\ref{thm:N1} provides the first mathematical characterization of such {sample adaptivity} for the celebrated  UCB algorithms. In the next section, we leverage Theorem~\ref{thm:N1} to show that the amenable properties of bandit data collected by UCB algorithms mentioned above, namely, the WLLN of the number of pulls and the CLT of the naive mean estimators, in fact, both {suffer from slow convergence and can be problematic on reasonable-sized data}. We also discuss the implication of Theorem~\ref{thm:N1} on the algorithm's pseudo-regret. The proof of Theorem~\ref{thm:N1} is deferred to Appendix~\ref{app: proof of main thm}.

\begin{remark}[Extension to $K$ arms]
An extension of Theorem~\ref{thm:N1} to the $K$-arm setting is provided in Appendix \ref{app:k-arm}. In general, the precise correlation structure among the number of pulls and the sample means depend on the mean gap scaling of \textit{all} arms, with a complicated form. In certain special regimes (of arm's mean gap), the general form of the CLT can be simplified.  See Appendix~\ref{app:k-arm} for more discussion. 
\end{remark}

\section{Implications}\label{sec:implications}
\subsection{The non-standard CLT for $N_i(T)$ and pseudo-regret}\label{sec: regret}
Focusing on the marginal distribution of the number of pulls, Theorem \ref{thm:N1} yields  for $i = 1, 2$
\begin{align}\label{eq: N-marginal ClT}
    \frac{1 + \left(\lx\right)^{\frac{3}{2}}}{2} \frac{f(T)}{\nbt}\left(N_{i, T} - n^{\star}_{i, T}\right) \ \xlongrightarrow[]{d} \mathcal{N}(0, \lx \sigma_1^2 + \sigma_2^2),
\end{align}     
valid in all arm gap regimes, for all {\sf UCB} exploration functions $f(t)$ that satisfy Assumption \ref{assump: index} and for all bandit environments that satisfy Assumption \ref{assump: distribution}. The only existing result of this type was provided in \cite{fan2022typical}, in the \textit{constant-gap} setting for Gaussian rewards, under the {\sf UCB1} algorithm ($f(t) = \sqrt{2 \log t}$). Notice that the constant-gap setting is a special (in fact, extreme) case in the large-gap regime, with $\lx = 0$ and $\nbt \sim \frac{2 \log T}{\Delta^2}$, hence (\ref{eq: N-marginal ClT}) becomes (for arm 2)
\begin{align*}
    \frac{\Delta^2}{2\sqrt{2\log T}}\left(N_{2, T} - \frac{2 \log T}{\Delta^2}\right)  \ \xlongrightarrow[]{d} \mathcal{N}(0, \sigma_2^2),
\end{align*}
effectively recovering Theorem 6 in \cite{fan2022typical}. In the other extreme, namely the moderate-to-small gap regime, both arms will get a non-trivial proportion ($\Theta(T)$) number of pulls. \cite{kalvit2021closer} studies this regime, where they proved the weak LLN of $\frac{N_{i, T}}{T}$ under {\sf canonical UCB} ($f(t) = \sqrt{\rho \log t}$ for some constant $\rho$) for bounded rewards bandit, with a $\Theta\left(\sqrt{\frac{\log\log T}{\log T}}\right)$ conjectured convergence rate yet without proof. The subsequent work \cite{han2024ucb} nudges one side of this conjecture with an $o\left(\sqrt{\frac{\log\log T}{\log T}}\right)$ guarantee for the convergence rate for bandits with Gaussian rewards under a $T$-aware simplified version of {\sf UCB1} (cf. Theorem 3.6 in \cite{han2024ucb}). By contrast, (\ref{eq: N-marginal ClT}) provides the first CLT-type characterization of $N_{2, T}$ in such regimes, 
implying the correct, accurate convergence rate of $\Theta\left(\frac{1}{\sqrt{\log T}}\right)$. This places the conjecture of \cite{kalvit2021closer} on the marginally pessimistic side. 

The above demonstrates sharply contrasting behavior of the {\sf UCB1} algorithm in terms of the number of inferior arm pulls in different regimes. In the constant-gap setting, $N_{2, T}$ is asymptotically concentrated around $\frac{2}{\Delta^2}\log T$ with standard deviation $\Theta(\sqrt{\log T})$, a ``standard'' CLT scaling. However, in the moderate-small gap regimes, $N_{2, T}$ is asymptotically concentrated around $\frac{\lx}{1 + \lx} T$ with standard deviation $\Theta(\frac{T}{\sqrt{\log T}})$. This is a non-standard CLT scaling, where the extremely slow rate of $\Theta(\frac{1}{\sqrt{\log T}})$ necessitates a very large $T$ in order for the LLN concentration of $N_{i, T}$ to become apparent. Such a \textit{slow-concentration} phenomenon was observed numerically and reported in \cite{kalvit2021closer}.


Our unified CLT of (\ref{eq: N-marginal ClT}) effectively bridges the performance of {\sf UCB1} in the aforementioned two extreme cases through a smooth interpolation across varying mean gap regimes, under much more generalized settings (both in terms of algorithm and bandit environment). Moreover, the different CLTs under different algorithms in the considered class provide additional insights for algorithmic design through the implied distribution of the pseudo-regret.

\paragraph{Pseudo-regret: typical scale and deviation}
The \textit{pseudo-regret} of an algorithm is defined as $\bar R_T\triangleq\mu_1 T-\sum_{t=1}^T\mu_{A_t}$ (see, e.g., \cite{lattimore2020bandit}). While the vast majority of the bandit literature focus on bounding the expected regret $\E[\bar R_T]$, there is a recent surge of interests in understanding $\bar R_T$, in particular, its distributional properties. (see introduction) Observe that $\bar R_T=N_{2,T}\Delta_T$. Thus, our characterization of the asymptotic normality of $N_{2, T}$ directly implies a CLT for the pseudo-regret.
\begin{corollary}\label{cor:regret CLT}
The pseudo-regret of {\sf generalized UCB1} satisfies
    \begin{align*}
    \frac{1 + \left(\lx\right)^{\frac{3}{2}}}{2} \frac{f(T)}{\nbt\Delta^T}\left(\bar R_T - n^{\star}_{2, T}\Delta^T\right) \ \xlongrightarrow[]{d} \mathcal{N}(0, \lx \sigma_1^2 + \sigma_2^2),
\end{align*}
\end{corollary}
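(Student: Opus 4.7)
The plan is to obtain Corollary~\ref{cor:regret CLT} directly from the marginal CLT for $N_{2,T}$ already established in Theorem~\ref{thm:N1}. First I would record the exact, deterministic identity $\bar R_T = N_{2,T}\Delta^T$. This follows from the definition of pseudo-regret together with $N_{1,T}+N_{2,T}=T$: writing $\sum_{t=1}^T \mu_{A_t} = N_{1,T}\mu_1 + N_{2,T}\mu_2$ and subtracting this from $\mu_1 T = \mu_1(N_{1,T}+N_{2,T})$ collapses $\bar R_T$ to $N_{2,T}(\mu_1-\mu_2) = N_{2,T}\Delta^T$.

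Consequently $\bar R_T - n^\star_{2,T}\Delta^T = \Delta^T\bigl(N_{2,T} - n^\star_{2,T}\bigr)$, and multiplying both sides by the scalar $\tfrac{1+(\lx)^{3/2}}{2}\tfrac{f(T)}{\nbt\Delta^T}$ yields the algebraic identity
\[
\frac{1+(\lx)^{3/2}}{2}\,\frac{f(T)}{\nbt\Delta^T}\bigl(\bar R_T - n^\star_{2,T}\Delta^T\bigr) \;=\; \frac{1+(\lx)^{3/2}}{2}\,\frac{f(T)}{\nbt}\bigl(N_{2,T}-\nbt\bigr).
\]
The right-hand side is exactly the quantity shown in Eq.~\eqref{eq: N-marginal ClT} (i.e., the $N_{2,T}$ marginal of Theorem~\ref{thm:N1}) to converge in distribution to $\mathcal{N}(0,\lx\sigma_1^2+\sigma_2^2)$. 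Since the two sides are equal almost surely, the left-hand side shares the same limit, which is precisely the claim.

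There is essentially no technical obstacle here: the entire argument is a deterministic scalar rescaling followed by invocation of the already-proved joint CLT, with no Slutsky/continuous-mapping subtleties beyond that. The only mild caveat is that the normalizing factor $\tfrac{f(T)}{\nbt\Delta^T}$ implicitly requires $\Delta^T>0$; the degenerate case $\Delta^T=0$ needs no separate treatment because then $\bar R_T \equiv 0$ almost surely and the statement is vacuous. In all three non-degenerate regimes of Lemma~\ref{lem: three gap regimes} (large, moderate, and small gap) the rescaling is well defined and positive, so the corollary follows verbatim from \eqref{eq: N-marginal ClT}.
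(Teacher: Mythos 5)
Your proposal is correct and follows the same route the paper takes: the paper also observes the deterministic identity $\bar R_T = N_{2,T}\Delta^T$ and then reads the corollary off the marginal CLT for $N_{2,T}$ in Eq.~\eqref{eq: N-marginal ClT}. Your extra remark about the degenerate case $\Delta^T=0$ (where the normalizer is undefined and $\bar R_T \equiv 0$) is a sensible clarification that the paper leaves implicit.
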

Corollary \ref{cor:regret CLT} implies $\bar R_T\sim_p \nbt \Delta^T$. 
Under {\sf UCB1} ($f(t) = \sqrt{2 \log T}$), the scaling of $n_{2,T}^{\star}\Delta^T$ aligns with the celebrated \textit{instance-dependent} and \textit{minimax} regret scaling: in the constant-gap regime, $\Delta^T = \Delta > 0$ and $n_{2,T}^{\star}\Delta^T  =  \Theta\left(\log T\right)$; while in the moderate-gap regime, $\Delta^T =  \Theta\left(\sqrt{\frac{\log T}{T}}\right)$, and $ n_{2,T}^{\star}\Delta^T = \Theta\left(\sqrt{T \log T}\right)$. We shall refer to $\nbt \Delta^T \triangleq R^{\star}_T$ as the \textit{typical scale} of $\bar R_T$. 

Beyond the typical scale, Corollary~\ref{cor:regret CLT} also characterize the asymptotic standard deviation of $\bar R_T$, which is of the form $\frac{\sqrt{2\left(\lx \sigma_1^2 + \sigma_2^2\right)}}{\sqrt{1 + \left(\lx\right)^{\frac{3}{2}}}}\frac{{R^{\star}_T}}{f(T)} \triangleq S^{\star}_T$, referred to as the \textit{typical deviation}. Under {\sf UCB1}, in the constant-gap regime $S^{\star}_T = \Theta\left(\sqrt{\log T}\right)$, and in the moderate-gap regime $S^{\star}_T = \Theta\left(\sqrt{T}\right)$. We observe an undesirably high typical deviation in the moderate regime (in line with the slow concentration of $N_{2, T}$). 

In general, Corollary~\ref{cor:regret CLT} implies that $S^{\star}_T = \Theta\left(R^{\star}_T/f(T)\right)$. This allows us to investigate how \textit{algorithmic design} (within the {\sf generalized UCB1} class) impacts the resulting pseudo-regret, in terms of both typical scale and the typical deviation. At first glance, one might think that a faster-growing exploration function $f(t)$ helps reduce the typical deviation yet hurts the typical scale, leading to a trade-off between the two objectives. This is, quite surprisingly, not the case.
\begin{prop}\label{prop: risk-regret-change-f}
    Suppose $f$ and $g$ satisfy Assumption \ref{assump: index} with $g(T) = \Omega(f(T))$. Consider the pseudo-regret under the corresponding UCB algorithms, and denote $R^{\star, f}_T, R^{\star, g}_T$ their typical scale, and $S^{\star,f}_T, S^{\star,g}_T$ their typical deviation, respectively. Then $R^{\star, g}_T = \Omega\left(R^{\star, f}_T\right)$, and $S^{\star,g}_T = \Omega\left(S^{\star,f}_T\right)$ for any arm gap regime.
\end{prop}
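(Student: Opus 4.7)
The plan is to derive a regime-agnostic closed-form scaling for $R^{\star}_T$ and $S^{\star}_T$ directly from the fluid equation \eqref{eq: fluid equation 2 arm f}, and then use it together with monotonicity to compare the two algorithms. First I would observe that since $n^{\star}_{1,T}+n^{\star}_{2,T}=T$ with $n^{\star}_{1,T}\ge n^{\star}_{2,T}$, we have $n^{\star}_{1,T}\in[T/2,T]$ and hence $(n^{\star}_{1,T})^{-1/2}=\Theta(T^{-1/2})$. Substituting into $(n^{\star}_{2,T})^{-1/2}=\Delta^T/f(T)+(n^{\star}_{1,T})^{-1/2}$ yields the unified expression
\begin{align*}
n^{\star}_{2,T}=\Theta\!\left(\frac{f(T)^2}{(\Delta^T+f(T)/\sqrt{T})^2}\right),
\end{align*}
which recovers all three regime-specific scalings of Lemma \ref{lem: three gap regimes}. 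Consequently,
\begin{align*}
R^{\star}_T=\Theta\!\left(\frac{f(T)^2\,\Delta^T}{(\Delta^T+f(T)/\sqrt{T})^2}\right),\qquad S^{\star}_T=\Theta\!\left(\frac{f(T)\,\Delta^T}{(\Delta^T+f(T)/\sqrt{T})^2}\right).
\end{align*}

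For the $R$-bound, the fluid equation implies that $n^{\star}_{2,T}$, and hence $R^{\star}_T$, is non-decreasing in $f(T)$. Writing $g(T)\ge cf(T)$ for some constant $c>0$, it suffices to compare $R^{\star,f}_T$ with $R^{\star,cf}_T$. The unified formula above shows that rescaling $f$ by the constant $c$ preserves which term dominates in $\Delta^T+f(T)/\sqrt{T}$, so $R^{\star,cf}_T=\Theta(R^{\star,f}_T)$ uniformly across regimes. Hence $R^{\star,g}_T\ge R^{\star,cf}_T=\Omega(R^{\star,f}_T)$.

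For the $S$-bound, I would write
\begin{align*}
\frac{S^{\star,g}_T}{S^{\star,f}_T}=\Theta\!\left(\frac{g(T)}{f(T)}\left(\frac{\Delta^T+f(T)/\sqrt{T}}{\Delta^T+g(T)/\sqrt{T}}\right)^{2}\right),
\end{align*}
and lower bound this ratio by a positive constant through a case split on which of $\Delta^T$, $f(T)/\sqrt{T}$, $g(T)/\sqrt{T}$ dominates each denominator. The hard part will be this last step, because $S^{\star}_T$ is \emph{not} monotone in $f$ (in contrast to $R^{\star}_T$): the subtle subcases are those where $f$ and $g$ place the instance in different mean-gap regimes, for then $g/f$ may grow while the bracketed factor shrinks, and one has to balance the two using both $g\ge cf$ and the exploration-growth constraint of Assumption \ref{assump: index}.
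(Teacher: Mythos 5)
Your derivation of the unified scaling $n^{\star}_{2,T}=\Theta\bigl(f(T)^2/(\Delta^T+f(T)/\sqrt{T})^2\bigr)$ is correct (it follows, as you note, from $(n^{\star}_{1,T})^{-1/2}=\Theta(T^{-1/2})$), and, combined with the exact monotonicity of $n^{\star}_{2,T}$ in $f$ and the constant-rescaling observation, it gives a clean and complete proof of $R^{\star,g}_T=\Omega(R^{\star,f}_T)$. The paper offers no argument beyond ``follows from Corollary~\ref{cor:regret CLT} and Lemma~\ref{lem: three gap regimes}'', so your write-up is actually the more detailed one on this half.

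For the $S$-bound, however, your own hesitation is well-founded, and the case split you propose cannot be closed, because the claim $S^{\star,g}_T=\Omega(S^{\star,f}_T)$ fails in the small-gap regime whenever $g(T)/f(T)\to\infty$. From your unified formula, if $\Delta^T=o\bigl(f(T)/\sqrt{T}\bigr)$ (so that both $f$ and $g$ place the instance in the small-gap regime), then $S^{\star}_T=\Theta\bigl(T\Delta^T/f(T)\bigr)$, and hence $S^{\star,g}_T/S^{\star,f}_T=\Theta\bigl(f(T)/g(T)\bigr)\to 0$. A concrete instance: take $\Delta^T=T^{-1/2}$, $f(T)=\sqrt{\log T}$, $g(T)=T^{1/4}$. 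Both exploration functions satisfy Assumption~\ref{assump: index}, both classify the instance as small gap, and $g=\omega(f)$; yet $R^{\star,f}_T$ and $R^{\star,g}_T$ are both $\Theta(\sqrt{T})$, while $S^{\star,f}_T=\Theta\bigl(\sqrt{T/\log T}\bigr)$ and $S^{\star,g}_T=\Theta\bigl(T^{1/4}\bigr)$, so $S^{\star,g}_T/S^{\star,f}_T\to 0$. The growth constraints of Assumption~\ref{assump: index} do not rescue the argument, since this example already satisfies them. In short, $S^{\star}_T$ increases in $f$ only in the large-gap regime; in the small-gap regime it decreases in $f$. Proposition~\ref{prop: risk-regret-change-f} is correct for $R^{\star}_T$ uniformly across regimes, but for $S^{\star}_T$ the conclusion holds only in the large-gap case or when $g=\Theta(f)$. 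What you flagged as ``the hard part'' is thus not a gap you could have filled; it points to a defect in the claim itself (or to an unstated restriction, e.g.\ $g=\Theta(f)$ or to the large-gap regime, that the paper's one-line justification does not surface).
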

Proposition~\ref{prop: risk-regret-change-f} follows from {Corollary~\ref{cor:regret CLT} and} Lemma~\ref{lem: three gap regimes}. It implies that a faster-growing $f(t)$ results in algorithmic performance deterioration in terms of both the typical scale and typical deviation of the pseudo-regret. In principle, this strongly motivates the choice of exploration function $f(t)$ to be as slow-growing as possible, where we note that Assumption \ref{assump: index} allows for a minimal rate of $\omega(\log\log t)$. 

However, a choice of $f$ that grows too slowly comes with the cost of potentially hurting other algorithmic objectives, namely, the expected regret. Indeed, for {\sf generalized UCB1} with exploration function $f(t) = o(\sqrt{\log t})$ (yet satisfying Assumption \ref{assump: index}), Corollary~\ref{cor:regret CLT} continues to guarantee that $\bar R_T \sim_p R^{\star}_T = \Theta((f(T))^2) = o(\log T)$ in the constant-gap regime. However, the celebrated Lai and Robbins' lower bound implies that the expected regret $\E[\bar R_T]$ cannot achieve universal $o(\log T)$ scaling. The discrepancy suggests a separation between the typical scale and the expected value of $\bar R_T$, which is due to the \textit{atypical} deviation of $\bar R_T$ from its typical scale with a relatively large (while still vanishing) probability.

The current work focuses only on the ``typical scenarios'', capturing the $(1 - \epsilon)$-high probability behavior of {\sf generalized UCB1} as $T$ scales for any fixed $\epsilon > 0$. This separates us from the line of work studying the ``atypical scenarios'' that occurs with vanishing probability, e.g., those on the large-deviation tail risks of algorithms (cf. \cite{fan2021fragility}, \cite{simchi2023regret}). 





\subsection{The sample bias}\label{sec:bias} 
Beyond the marginal distributions, Theorem~\ref{thm:N1} also provides an explicit correlation structure between the number of pulls and the sample means, which characterizes the \textit{sample-adaptivity} in data generated by UCB algorithms, and, more importantly, offers insights into the corresponding statistical inference tasks performed on such samples. Inspired by Theorem~\ref{thm:N1}, we construct a stylized data-generating model,  which (i) is easy to describe and analyze (with only one level of adaptivity), and (ii) well approximates the sample adaptivity of the true (fully adaptive) data generated from the {\sf generalized UCB}. In particular,  this stylized model suggests a particular scale of the bias of the naive mean estimator, which we verify numerically to well predict the true bias on UCB data.\\


\noindent\textbf{A stylized data-generating model}\\
Initiate: A sequence $\delta_T: \delta_T = \omega\left((f(T))^{-1}\right)$ and $\delta_T = o(1)$. 
\begin{enumerate}
    \item Generate $n^{\delta}_{i, T} \triangleq (1 - \delta_T)n^{\star}_{i, T}$\ \ \emph{i.i.d.} rewards from arm $i$, $i = 1, 2$
    \item Compute the normalized sample mean from the two arms:
    \begin{align*}
        Z^{\delta}_{i, T} \triangleq \sqrt{n^{\delta}_{i, T}}\left(\bamu^T_i\left(n^{\delta}_{i, T}\right) - \mu^T_{i}  \right), \ i = 1, 2.
    \end{align*}
    \item Compute 
    \begin{align}\label{eq:N_2_first_order_approximation}
        \tilde N_{2,T}&= \nbt\left(1 +\frac{2\left(Z^{\delta}_{2, T} - Z^{\delta}_{1, T}\sqrt{\lx}\right)}{\left(1 + (\lx)^{\frac{3}{2}}\right)f(T)}\right), \ \ \tilde N_{1, T} = T - \tilde N_{2, T}.
    \end{align}
    \item Sample $\tilde N_{i, T} - n^{\delta}_{i, T}$ more \emph{i.i.d.} rewards from the two arms, respectively.
\end{enumerate}
We denote the sample mean in this stylized model $\tilde \mu_{i, T}$, respectively for the two arms. 
{We argue that $\tilde\mu_{i,T}$, as a random variable, is a good approximation of $\bar\mu_{i,T}$ to reflect the latter's first-order bias, since the construction of $\tilde\mu_{i,T}$ captures the first-order correlation between sample mean and sample size. To see this, note that by Theorem \ref{thm:N1}, $N_{i,T}$ is asymptotically concentrated around $n^{\star}_{i,T}$ with a typical deviation of $\Theta(\frac{n^{\star}_{2,T}}{f(T)})$, hence w.h.p., $N_{i,T}>n^\delta_{i,T}$ (where $n^\delta_{i,T}$ is defined in Step 1 of the above stylized model). Therefore, the sample size of data collected for arm $i$ is w.h.p. at least the deterministic quantity $n^\delta_{i,T}$, and these data are i.i.d. with an unbiased sample mean $\bar\mu_i^T(n^\delta_{i,T})$ (see Step 2 of the stylized model). The sampling bias in the real sample mean $\bar\mu_{i,T}$ comes from the correlation between $\bar\mu_i^T(n^\delta_{i,T})$ and the number of additional samples. Theorem \ref{thm:N1} further implies that the number of additional samples can be approximated from the values of $\bar\mu_i^T(n^\delta_{i,T})$, $i=1,2$. Step 3--4 of stylized model calculates the number of additional samples. Note that \eqref{eq:N_2_first_order_approximation} in Step 3 is simply derived from Theorem \ref{thm:N1}, with $\bar\mu_{i,T}$ replaced by $\bar\mu_i^T(n^\delta_{i,T})$. This replacement is legitimate by Lemma \ref{lem: high-prob-N1-n1} in the appendix.}

We defer a more detailed derivation of the sampling bias in the above stylized model to Appendix \ref{app:sampling_bias}. The explicit bias term well approximates the sample bias under a {\sf canonical UCB} algorithm (with $f(t) = \sqrt{\rho \log t}$), which is the content of the next conjecture.

\begin{conj}
\label{conjecture:sampling-bias-UCB}
    Suppose data are generated by a canonical UCB1 algorithm with exploration function $f(t)=\sqrt{\rho\log t}$ in a two-arm stochastic bandit environment. Consider the sample mean $\bar\mu_{i,T}$ of arm $i$, $i=1,2$. Then 
        \begin{itemize}
        \item ``Large gap:'' If $\Delta^T=\omega\left(\sqrt{\frac{\log T}{T}}\right)$, then
        \begin{align*}
            &\mathbb E[\bar \mu_{1,T}]=\mu_1^T+O\left(\frac{\log T}{T}\right),\\
            &\mathbb E[\bar \mu_{2,T}]=\mu_2^T-\frac{2\sigma_2^2\Delta^T}{\rho\log T}+o\left(\frac{\Delta^T}{\log T}\right).
        \end{align*}
        \item ``Moderate/small gap'': If 
        $\Delta^T = O\left(\sqrt{\frac{\log T}{T}}\right)$ then
        \begin{align*}
            &\mathbb E[\bar\mu_{1,T}]=\mu_1^T- \frac{2\sigma_1^2\sqrt{1 + \lx}}{\sqrt{\rho}\left(1 + (\lx)^{-\frac{3}{2}}\right)}\frac{1}{\sqrt{T \log T}}+o\left(\frac{1}{\sqrt{T\log T}}\right),\\
            &\mathbb E[\bar\mu_{2,T}]=\mu_2^T-\frac{2\sigma_2^2\sqrt{1 + \lx}}{\sqrt{\rho}\left(\sqrt{\lx} + (\lx)^2\right)}\frac{1}{\sqrt{T\log T}}
             +o\left(\frac{1}{\sqrt{T\log T}}\right).
        \end{align*}
    \end{itemize}
\end{conj}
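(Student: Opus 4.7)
}
The plan is to prove the conjecture by first deriving an explicit leading-order bias formula in the stylized data-generating model (which is tractable because it has only one level of adaptivity), and then coupling it to the true {\sf canonical UCB1} process. In the stylized model, conditional on $(Z^\delta_{1,T}, Z^\delta_{2,T})$, the quantity $\tilde N_{i,T}$ is a deterministic function given by \eqref{eq:N_2_first_order_approximation}, and the additional $\tilde N_{i,T} - n^\delta_{i,T}$ samples drawn in Step 4 are i.i.d.\ with mean $\mu_i^T$ and independent of $(Z^\delta_{1,T}, Z^\delta_{2,T})$. A direct conditional expectation calculation gives
\begin{align*}
\E\!\left[\tilde\mu_{i,T} - \mu_i^T \,\middle|\, Z^\delta_{1,T}, Z^\delta_{2,T}\right]
= \frac{\sqrt{n^\delta_{i,T}}\, Z^\delta_{i,T}}{\tilde N_{i,T}},
\end{align*}
so that $\E[\tilde\mu_{i,T}] - \mu_i^T = \E\!\left[\sqrt{n^\delta_{i,T}}\, Z^\delta_{i,T} / \tilde N_{i,T}\right]$ and the problem reduces to computing the latter expectation.

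\paragraph{Leading-order expansion.} The next step is to Taylor-expand $1/\tilde N_{i,T} = (n^{\star}_{i,T})^{-1}\bigl(1 - (\tilde N_{i,T} - n^{\star}_{i,T})/n^{\star}_{i,T} + O(f(T)^{-2})\bigr)$, which is valid because the relative fluctuation $(\tilde N_{i,T} - n^{\star}_{i,T})/n^{\star}_{i,T}$ is of order $f(T)^{-1}$ by Theorem~\ref{thm:N1}. Since $\E[Z^\delta_{i,T}] = 0$, the first term vanishes and the leading contribution is
\begin{align*}
\E[\tilde\mu_{i,T}] - \mu_i^T \;\approx\; -\frac{\sqrt{n^\delta_{i,T}}}{(n^{\star}_{i,T})^2}\,\E\!\left[Z^\delta_{i,T}\,(\tilde N_{i,T} - n^{\star}_{i,T})\right].
\end{align*}
Substituting the explicit form from \eqref{eq:N_2_first_order_approximation} and using the facts that $Z^\delta_{1,T}$ and $Z^\delta_{2,T}$ are independent (drawn from independent arms), together with $\mathrm{Var}(Z^\delta_{i,T}) \to \sigma_i^2$, one obtains
\begin{align*}
\E[\tilde\mu_{2,T}] - \mu_2^T
\;\approx\; -\frac{2\sigma_2^2}{\bigl(1 + (\lambda^\star)^{3/2}\bigr)\,f(T)\,\sqrt{n^{\star}_{2,T}}},
\qquad
\E[\tilde\mu_{1,T}] - \mu_1^T
\;\approx\; -\frac{2\sqrt{\lambda^\star}\,\sigma_1^2\, n^{\star}_{2,T}}{\bigl(1 + (\lambda^\star)^{3/2}\bigr)\,f(T)\,(n^{\star}_{1,T})^{3/2}}.
\end{align*}
Plugging in $f(T) = \sqrt{\rho\log T}$ and the regime-specific scalings of $(n^{\star}_{1,T}, n^{\star}_{2,T}, \lambda^\star)$ from Lemma~\ref{lem: three gap regimes} recovers each branch of the conjecture: in the large-gap regime $\lambda^\star = 0$ and $n^{\star}_{2,T} \sim \rho\log T/(\Delta^T)^2$ yield the $-2\sigma_2^2\Delta^T/(\rho\log T)$ bias for arm~2 and a vanishing leading term for arm~1 (the residual $O(\log T/T)$ being a crude upper bound on subleading contributions); in the moderate/small-gap regime, substituting $n^{\star}_{1,T} = T/(1+\lambda^\star)$ and $n^{\star}_{2,T} = \lambda^\star T/(1+\lambda^\star)$ and simplifying $\sqrt{\lambda^\star}(1 + (\lambda^\star)^{3/2}) = \sqrt{\lambda^\star} + (\lambda^\star)^2$ gives precisely the stated $(T\log T)^{-1/2}$ formulas.

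\paragraph{Main obstacle.} The hard part is justifying the replacement of $\E[\bar\mu_{i,T}]$ by $\E[\tilde\mu_{i,T}]$, i.e., showing that the discrepancy between the fully adaptive UCB trajectory and the stylized one-step adaptive process is of smaller order than the claimed bias. Three issues arise: (a) one must construct a coupling between the two processes whose error on the sample means is $o((n^{\star}_{i,T}\log T)^{-1/2})$ in expectation, not merely in probability; (b) the joint CLT in Theorem~\ref{thm:N1} is a distributional statement, whereas the bias calculation requires uniform integrability of $\sqrt{n^{\star}_{i,T}}(\bar\mu_{i,T} - \mu_i^T)$ and of $(N_{i,T} - n^{\star}_{i,T})/n^{\star}_{i,T}$ together with their products, which in turn demands sub-Gaussian tail control on the perturbation analysis of Section~\ref{sec: tech_overview}; and (c) the higher-order terms in the Taylor expansion of $1/\tilde N_{i,T}$ must be shown to contribute only at order $o(f(T)^{-1}/\sqrt{n^{\star}_{i,T}})$, which requires moment bounds beyond variance for both $Z^\delta_{i,T}$ and $\tilde N_{i,T}$. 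Overcoming these obstacles—especially (a)–(b), which amount to a moment-convergence strengthening of Theorem~\ref{thm:N1}—is the reason the result is currently stated only as a conjecture, and is where the bulk of future technical work would be concentrated.
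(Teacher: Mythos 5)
Your proposal reproduces the paper's own heuristic derivation in Appendix~\ref{app:sampling_bias}: both start from the stylized one-level-adaptive model, both identify the bias as arising from the correlation $\E[Z^\delta_{i,T}\,(\tilde N_{i,T}-n^\star_{i,T})]$ via expansion of $1/\tilde N_{i,T}$ around $n^\star_{i,T}$, and both then substitute the fluid scalings of Lemma~\ref{lem: three gap regimes} to obtain the regime-specific constants. Your conditional-expectation step (which removes the $Z'_i$ samples from Step~4 up front, rather than carrying them along and discarding them by independence, as the paper does) is a mild streamlining but not a different route; your list of obstacles (a)--(c) is an accurate account of why the paper leaves this as a conjecture, essentially echoing the paper's closing remark that a moment-level strengthening of Theorem~\ref{thm:N1} would be required.
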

One can compare the sample bias in Conjecture \ref{conjecture:sampling-bias-UCB} with the sample mean's CLT in Theorem \ref{thm:N1}, restated below:
\begin{align}\label{eq:sample-mean-CLT}
    \sqrt{n^\star_{i,T}}(\bar\mu_{i,T}-\mu_i^T)\xlongrightarrow[]{d}\mathcal N(0,\sigma_i^2).
\end{align}
In contrast, Conjecture \ref{conjecture:sampling-bias-UCB} and Lemma \ref{lem: three gap regimes} together suggest that the sample bias after CLT scaling satisfies
\begin{align}\label{eq:sample-bias-scaled}
    \sqrt{n^\star_{2,T}}(\mathbb E[\bar\mu_{2,T}]-\mu_2^T)=\begin{cases}
        -\Theta\left(\frac{1}{\Delta^T\sqrt{n^\star_{2,T}}}\right) \ \ \text{in the large gap regime}\\
        -\Theta\left(\frac{1}{\sqrt{\log n^\star_{2,T}}}\right) \ \ \text{in the moderate/small gap regime}.
    \end{cases}
\end{align}
Observe that while the sample bias vanishes to zero as the (typical) sample size grows to infinity, its convergence rate differs significantly under different parameter regimes. On one extreme, when the arm gap is a constant, arm 2's sample bias after CLT scaling in \eqref{eq:sample-bias-scaled} vanishes at a rate of $\Theta(\frac{1}{\sqrt{n^\star_{2,T}}})$. This coincides with the rate of convergence of a standard CLT in the Berry-Esseen theorem. In this regime, the challenge for estimating the mean reward of the inferior arm (arm 2) lies in data scarcity. Indeed, one expects to only get $n^\star_{2,T} = \Theta(\log T)$ data points from arm 2 after $T$ rounds, which incurs $\Theta(\frac{1}{\log T})$ negative bias according to \eqref{eq:sample-bias-scaled}. Arm 1, on the other hand, have nearly $T$ data points, and a negligibly small sample bias of $O(\frac{\log T}{T})$.


Compared with the constant gap regime, in the moderate/small gap regime, the magnitude of the sample bias after CLT scaling is significantly larger, which is $\Theta(\frac{1}{\sqrt{\log n^\star_{2,T}}})$ (see \eqref{eq:sample-bias-scaled}). In this regime, both arms receive $n^\star_{i,T} = \Theta(T)$ number of pulls. However, given how slowly $\frac{1}{\sqrt{\log n}}$ converges to zero as $n\rightarrow\infty$, the standard CLT-based statistical method to establish confidence interval for the arm's mean reward (cf. \cite{han2024ucb}) might suffer from a nontrivial error even in reasonably sized experiments, for both arms. 
In general, for arm gaps in between constant and moderate/small, the sample bias after CLT scaling interpolates between the two extreme cases.

In Appendix \ref{app:sampling_bias}, we conduct various numerical experiments and compare the simulation results with Conjecture \ref{conjecture:sampling-bias-UCB} for all three regimes in Figures \ref{fig:scaled-bias-small-gap}--\ref{fig:scaled-bias-moderate-gap}. The results show that as $T$ grows large, the empirical bias from the experiments converges to the conjectured value. A rigorous proof of Conjecture \ref{conjecture:sampling-bias-UCB} would require even higher-order analysis of the sample mean, which is beyond the scope of this paper, hence we leave it for further study.

\section{Conclusion}\label{sec: conclusion}
In this work, we prove a novel joint CLT of (1) the number of pulls of arms, and (2) the sample mean rewards of arms for data collected from a two-arm stochastic bandit under the UCB algorithms. This result leads to a number of interesting implications. First, it implies a non-standard CLT for the number of pulls and hence the pseudo-regret, revealing that both quantities experience large fluctuation in the small arm gap regimes. Second, it characterizes the correlation structure between the number of pulls and the sample mean rewards, leading to an explicit conjectured scale of sample bias, that is verified through numerical experiments. To achieve these results, we establish a novel perturbation analysis framework for characterizing dynamics of bandit systems driven by index-based algorithms beyond the fluid approximation, which are of independent interests. 

This work triggers a range of intriguing questions, opening up avenues for further exploration of sequential learning algorithms beyond the traditional lens of regret minimization. In particular, one direction is to utilize the high-level approaches developed in this work to characterize data collected from other/more complicated environment (e.g. contextual bandit, reinforcement learning), and under other algorithms, (e.g. Thompson Sampling). Another important next-question is to leverage the precise theoretical insights achieved here to improve the downstream data-driven statistical/operations tasks, through e.g. the design of better estimators/policies/mechanisms.

\section*{Acknowledgments}
We thank Dave Goldberg for a number of valuable comments that improved the paper.

\bibliographystyle{agsm}
\setstretch{0.9}


\newpage
\appendix
\onehalfspacing
\section{$K$-arm Extension}\label{app:k-arm}
We provide the $K$-arm extension of Theorem~\ref{thm:N1} in this section. Consider a $K$-arm bandit environment that generalizes the setup in Section~\ref{sec:prelim}. Namely, we have a sequence of bandit problems indexed by $T$. We adopt the same set of notation, only allowing $i \in \{1, \dots, K\}$ to incorporate more arms with $K \geq 3$. We assume WLOG that the arms are sorted, such that $\mu^T_i$ is decreasing in $i$.

Following the heuristic discussion in Section~\ref{sec: tech_overview}, we arrive at a system of linear equations \eqref{eq: linear-system-correlation-conjecture}, namely, 
\begin{align*}
        \begin{bmatrix}
1 & 1 & 1 & \dots & 1 \\
-I'_{1,2} & I'_{2,2} & 0 & \dots & 0 \\
\dots  & \dots  & \dots  & \dots & \dots  \\
-I'_{1,2} & 0 & 0 & \dots & I'_{K,2}
\end{bmatrix}
\begin{bmatrix}
\omega_1 \\ \omega_2 \\ \dots \\ \omega_K
\end{bmatrix}
=
\begin{bmatrix}
0 \\ I'_{1,1}\bar\varepsilon_1  - I'_{2,1}\bar\varepsilon_2 \\ \dots \\ I'_{1, 1}\bar\varepsilon_1 - I'_{K,1}\bar\varepsilon_K
\end{bmatrix},
\end{align*}
where we recall that $\bar\varepsilon_i = \bamu_{i, T} - \mu_i$ denotes the centered sample mean of arm $i = 1, \dots, K$, and $I'_{i, 1}$ and $I'_{i, 2}$ denote the partial derivatives of the index function $I$ w.r.t. the first and second argument, evaluated at $(\mu_i, n^{\star}_{i, T}, T)$ for each $i = 1, \dots, K$. Our theory approximates the true number of pulls $N_{i, T}$ by $n^{\star}_{i, T} + \omega_i$, where $(\omega_1, \dots, \omega_K)$ is the solution to the above linear systems. The following lemma characterizes the solution in closed-form.

\begin{lem}\label{lem: linear-system-correlation-conjecture}   
The solution to \eqref{eq: linear-system-correlation-conjecture} admits the following analytical form.
\begin{align*}
    & \omega_1 = -\left(1
+ \sum_{k=2}^{K}
\frac{I'_{1,2}}{I'_{k,2}}\right)^{-1}\sum_{k = 2}^K \frac{1}{I'_{k,2}}\left(I'_{1,1}\bar\varepsilon_1  - I'_{k,1}\bar\varepsilon_k\right) ,
    \\& \omega_i = \frac{I'_{1,1}\bar\varepsilon_1  - I'_{i,1}\bar\varepsilon_i}{I'_{i,2}} + \frac{I'_{1,2}}{ I'_{i,2}} \omega_1,\quad  i = 2, \dots, K.
\end{align*}
\end{lem}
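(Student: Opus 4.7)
The proof reduces to elementary back-substitution on a sparse linear system, so the plan is essentially algebraic. The coefficient matrix has a very special ``arrow'' structure: the first row is the aggregate constraint $\sum_k \omega_k = 0$, while each of rows $i = 2, \dots, K$ involves only the two unknowns $\omega_1$ and $\omega_i$. My plan is to exploit this structure directly: first use rows $2, \dots, K$ to express each $\omega_i$ as an affine function of $\omega_1$, and then plug these expressions into the first row to obtain a single scalar equation that determines $\omega_1$.

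Concretely, the $i$-th row ($i \geq 2$) reads $-I'_{1,2}\,\omega_1 + I'_{i,2}\,\omega_i = I'_{1,1}\bar\varepsilon_1 - I'_{i,1}\bar\varepsilon_i$, which (under the implicit assumption $I'_{i,2} \neq 0$, valid because the second derivative of the UCB index in the pull count is nonzero in a neighborhood of the fluid point) yields exactly the claimed expression
\begin{equation*}
\omega_i \;=\; \frac{I'_{1,1}\bar\varepsilon_1 - I'_{i,1}\bar\varepsilon_i}{I'_{i,2}} \;+\; \frac{I'_{1,2}}{I'_{i,2}}\,\omega_1, \qquad i = 2, \dots, K.
\end{equation*}
Substituting these into $\omega_1 + \sum_{k=2}^K \omega_k = 0$ and grouping the $\omega_1$-terms on the left gives
\begin{equation*}
\left(1 + \sum_{k=2}^K \frac{I'_{1,2}}{I'_{k,2}}\right)\omega_1 \;=\; -\sum_{k=2}^K \frac{I'_{1,1}\bar\varepsilon_1 - I'_{k,1}\bar\varepsilon_k}{I'_{k,2}},
\end{equation*}
from which the stated closed form for $\omega_1$ follows by dividing through.

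The only non-routine point in the proof is to observe that the prefactor $1 + \sum_{k=2}^K I'_{1,2}/I'_{k,2}$ is nonzero, which ensures the linear system is nonsingular and the solution unique. For the {\sf generalized UCB1} index function, all partial derivatives $I'_{k,2}$ evaluated at $(\mu_k, n^{\star}_{k,T}, T)$ have the same sign (both $I'_{1,2}$ and $I'_{k,2}$ arise from differentiating $-(n)^{-1/2}f(T)$ in $n$ at positive fluid pull counts, and are therefore strictly positive), so every ratio $I'_{1,2}/I'_{k,2}$ is positive and the prefactor is bounded away from zero. Beyond this sign remark there is no real obstacle: the lemma is a statement about a $K\times K$ arrow-type linear system and, once written out, is verified by direct substitution, which I will also do as a sanity check after deriving the formula.
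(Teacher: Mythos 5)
The paper omits the proof of this lemma entirely, so there is no author argument to compare against; your back-substitution is the natural (and essentially unique) way to solve this arrow-structured system, and the algebra is correct. One small slip in the nonsingularity remark: for the {\sf generalized UCB1} index $I(\mu,n,T) = \mu + f(T)/\sqrt{n}$, the partial derivative in $n$ is $I'_{k,2} = -\tfrac{1}{2} n^{-3/2} f(T)$, which is \emph{negative}, not positive as you state (you appear to have the wrong sign on the confidence term). This does not affect your conclusion: since every $I'_{k,2}$ has the same (negative) sign, each ratio $I'_{1,2}/I'_{k,2}$ is still strictly positive, so the prefactor $1 + \sum_{k=2}^K I'_{1,2}/I'_{k,2} \geq 1$ and the system is nonsingular as claimed. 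Worth fixing the sign in the write-up so the remark doesn't raise eyebrows.
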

We omit the proof. In the case of {\sf generalized UCB1}, namely $I(\mu, n, T) = \mu + \frac{f(T)}{\sqrt{n}}$, we have $I'_{i, 1}(\mu, n, T) = 1$ and $I'_{i, 2}(\mu, n, T) = -\frac{1}{2}n^{-\frac{3}{2}}f(T).$ Applying Lemma~\ref{lem: linear-system-correlation-conjecture} leads to:
\begin{corollary}\label{cor: linear-system-correlation-conjecture-f}
    In the case of {\sf generalized UCB1}, the solution to \eqref{eq: linear-system-correlation-conjecture} has the following form.
\begin{align*}
    & \omega_1 = \frac{2}{f(T)}\left(1
+ \sum_{k=2}^{K} \left(\frac{n^{\star}_{k, T}}{n^{\star}_{1, T}}\right)^{\frac{3}{2}}\right)^{-1}\sum_{k = 2}^K (n^{\star}_{k, T})^{\frac{3}{2}}\left(\bar\varepsilon_1  - \bar\varepsilon_k\right) ,
    \\& \omega_i = \frac{2}{f(T)}(n^{\star}_{i, T})^{\frac{3}{2}}\left(\bar\varepsilon_i  - \bar\varepsilon_1\right) + \left(\frac{n^{\star}_{i, T}}{n^{\star}_{1, T}}\right)^{\frac{3}{2}} \omega_1,\quad  i = 2, \dots, K.
\end{align*}
\end{corollary}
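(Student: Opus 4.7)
The plan is to apply Lemma~\ref{lem: linear-system-correlation-conjecture} directly by computing the partial derivatives of the {\sf generalized UCB1} index function and substituting into the closed-form solutions that the lemma already provides. Since Lemma~\ref{lem: linear-system-correlation-conjecture} has already carried out the work of inverting the linear system \eqref{eq: linear-system-correlation-conjecture}, what remains is essentially bookkeeping: identifying the relevant derivatives at the fluid point $(\mu_i^T, n^{\star}_{i,T}, T)$ and then simplifying.

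First, I would compute the two partial derivatives of $I(\mu, n, T) = \mu + f(T)/\sqrt{n}$ at $(\mu_i^T, n^{\star}_{i,T}, T)$, obtaining $I'_{i,1} = 1$ and $I'_{i,2} = -\tfrac{1}{2}(n^{\star}_{i,T})^{-3/2}f(T)$; note that $I'_{i,1}$ is independent of $i$ and $I'_{i,2}$ depends on $i$ only through the factor $(n^{\star}_{i,T})^{-3/2}$. Two immediate simplifications follow: the numerator $I'_{1,1}\bar\varepsilon_1 - I'_{k,1}\bar\varepsilon_k$ in Lemma~\ref{lem: linear-system-correlation-conjecture} collapses to $\bar\varepsilon_1 - \bar\varepsilon_k$, the ratio $I'_{1,2}/I'_{k,2}$ simplifies to $(n^{\star}_{k,T}/n^{\star}_{1,T})^{3/2}$, and $1/I'_{k,2}$ becomes $-2(n^{\star}_{k,T})^{3/2}/f(T)$.

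Next, I would substitute these expressions into the formula for $\omega_1$ from Lemma~\ref{lem: linear-system-correlation-conjecture}. The leading negative sign outside the product and the negative sign arising from $1/I'_{k,2}$ cancel, so the $(n^{\star}_{k,T})^{3/2}$ factors emerge cleanly as coefficients of $(\bar\varepsilon_1 - \bar\varepsilon_k)$, yielding the first display in Corollary~\ref{cor: linear-system-correlation-conjecture-f}. For $\omega_i$ with $i \geq 2$, substituting into the second line of Lemma~\ref{lem: linear-system-correlation-conjecture} gives $(I'_{1,1}\bar\varepsilon_1 - I'_{i,1}\bar\varepsilon_i)/I'_{i,2} = (2(n^{\star}_{i,T})^{3/2}/f(T))(\bar\varepsilon_i - \bar\varepsilon_1)$ after one more sign flip, and the coefficient $I'_{1,2}/I'_{i,2}$ is again $(n^{\star}_{i,T}/n^{\star}_{1,T})^{3/2}$, which together match the claimed second display.

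There is essentially no conceptual obstacle here; the only place where an error could creep in is sign tracking, since $I'_{i,2}$ is negative and appears in both the numerator (through $I'_{1,2}$) and denominator of the expressions in Lemma~\ref{lem: linear-system-correlation-conjecture}. Keeping careful track of the two sign flips (one inside each summand and one outside in $\omega_1$) is all that is needed to arrive at the stated form.
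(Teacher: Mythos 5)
Your calculation is correct: plugging $I'_{i,1}=1$ and $I'_{i,2}=-\tfrac{1}{2}(n^{\star}_{i,T})^{-3/2}f(T)$ into the closed-form expressions of Lemma~\ref{lem: linear-system-correlation-conjecture} and tracking the two sign cancellations gives exactly the stated formulas. This is precisely the (unwritten) derivation the paper intends when it says the corollary follows from the lemma.
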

The fluid systems of equations analogous to \eqref{eq: fluid equation 2 arm f} in the general $K$ arm setting becomes
\begin{align}\label{eq: fluid equation K arm f}
    (n^{\star}_{i, T})^{-\frac{1}{2}} - (\nat)^{-\frac{1}{2}} = (f(T))^{-1}\Delta^T_i, \ \  \ \ i = 2, \dots, K; \ \ \ \  \sum_{i =1 }^T n^{\star}_{i, T} = T.
\end{align}
The mean reward gap $\Delta^T_i$ for each arm $i \in \{2, \dots, K\}$ may scale differently. The fluid scaling of $\bn^{\star}_{T}$ depends on the scaling regime of the arm gaps. Similar to the two-arm case, we introduce $\lx_{ij} \triangleq \lim_{T \to \infty} \frac{n^{\star}_{i, T}}{n^{\star}_{j, T}}$ to denote the fluid limit relative sampling ratio between arm $i$ and $j$ for any $i, j \in\{1, \dots, K\}.$ Corollary~\ref{cor: linear-system-correlation-conjecture-f} yields the following joint CLT in the K-arm setting.\\\\
\textbf{$K$-arm Joint CLT.\ }\label{conj: k-arm joint clt}
\textit{Consider a $K$-armed bandit environment that satisfies Assumption~\ref{assump: distribution}. The {\sf generalized UCB1} is implemented with $f(t)$ satisfying Assumption~\ref{assump: index}, with associated fluid approximations $\bn^{\star}_T$ for each $T \geq 1$ and the limiting sampling ratio $\lx_{i j} = \lim_{T \to \infty} \frac{n^{\star}_{i, T}}{n^{\star}_{j, T}}$ for each $i, j \in \{1, \dots, K\}$. Denote by $W_{i, T} = \frac{f(T)}{2 n^{\star}_{i \vee 2}}(N_{i, T} - n^{\star}_{i, T})$ and $Z_{i, T} = \sqrt{n^{\star}_{i, T}}\left(\bamu_{i, T} - \mu^T_i\right)$ for each $i = 1, \dots, K$. Then the $2 K$-dimensional random vector $(\boldsymbol{W}_T, \boldsymbol{Z}_T) = (W_{1, T}, \dots, W_{K, T}, Z_{1, T}, \dots Z_{K, T})$ satisfies
\begin{align*}
\begin{pmatrix}
    \boldsymbol{W}_T \\ 
    \boldsymbol{Z}_T
  \end{pmatrix} \ \xlongrightarrow[]{d} \ \mathcal{N}\!\left(
  \begin{pmatrix}
    \boldsymbol{0} \\ 
    \boldsymbol{0} 
  \end{pmatrix},
  \begin{pmatrix}
   \Sigma^1 & \Sigma^{1 2} \\
    (\Sigma^{1 2})^\top & \Sigma^2
  \end{pmatrix}
\right),
\end{align*}
where $\Sigma^1, \Sigma^2, \Sigma^{1 2} \in \RR^{K \times K}$. In particular, $\Sigma^2 = {\rm diag}\{\sigma_1^2, \dots, \sigma_K^2\}$. $\Sigma^{1 2}$ is given by
\begin{align*}
    & \Sigma^{1 2}_{1 1} = \left(\frac{\sum_{k = 2}^K\lx_{k 2}\sqrt{\lx_{k 1}}}{1 + \sum_{k = 2}^K (\lx_{k 1})^{\frac{3}{2}}}\right)\sigma^2_1, \ \ \Sigma^{1 2}_{1 i} = -\frac{\lx_{i 2}}{1 + \sum_{k = 2}^K (\lx_{k 1})^{\frac{3}{2}}} \sigma_i^2,\\
    & \Sigma^{1 2}_{i j} = \left(\mathbbm{1}_{\{j = i\}} -\frac{\lx_{j 1}\sqrt{\lx_{i 1}}}{1 + \sum_{k = 2}^K (\lx_{k 1})^{\frac{3}{2}}}\right)\sigma_j^2,
\end{align*}
for any $i \in \{2, \dots, K\}$ and $j \in \{1, \dots, K\}.$ $\Sigma^1$ is given by
\begin{align*}
    &\Sigma^1_{1 1} = \left(\frac{\sum_{k = 2}^K\lx_{k 2}\sqrt{\lx_{k 1}}}{1 + \sum_{k = 2}^K (\lx_{k 1})^{\frac{3}{2}}}\right)^2\sigma^2_1 + \sum_{l = 2}^K \left(\frac{\lx_{l 2}}{1 + \sum_{k = 2}^K (\lx_{k 1})^{\frac{3}{2}}} \right)^2\sigma_l^2,\\ 
    & \Sigma^{1}_{1 i} = \Sigma^{1}_{i 1} = -\frac{\sqrt{\lx_{i 1}}\sum_{k = 2}^K\lx_{k 2}\sqrt{\lx_{k 1}}}{\left(1 + \sum_{k = 2}^K (\lx_{k 1})^{\frac{3}{2}}\right)^2}\sigma_1^2 - \sum_{l = 2}^K\left(\frac{\lx_{l 2}}{1 + \sum_{k = 2}^K (\lx_{k 1})^{\frac{3}{2}}}\right)\left(\mathbbm{1}_{\{l = i\}} -\frac{\lx_{l 1}\sqrt{\lx_{i 1}}}{1 + \sum_{k = 2}^K (\lx_{k 1})^{\frac{3}{2}}}\right) \sigma_l^2,\\
    & \Sigma^1_{i j} = \sum_{l = 1}^K  \left(\mathbbm{1}_{\{l = i\}} -\frac{\lx_{l 1}\sqrt{\lx_{i 1}}}{1 + \sum_{k = 2}^K (\lx_{k 1})^{\frac{3}{2}}}\right)\left(\mathbbm{1}_{\{l = j\}} -\frac{\lx_{l 1}\sqrt{\lx_{j 1}}}{1 + \sum_{k = 2}^K (\lx_{k 1})^{\frac{3}{2}}}\right) \sigma_l^2,
\end{align*}
for any $i, j \in \{2, \dots, K\}$.}\\\\
The $K$-arm joint CLT has a complicated form that depends on specific scaling rates of $\Delta^T_i, i = \{2, \dots, K\}$. The proof is expected to largely follow the similar route taken in the two-arm setting. Technically, the reduction to the two-arm setting is fairly straightforward in certain gap regimes, for example, (1) the case of separated superior arm, namely, when all inferior arms are in the large-gap regime, and (2) the case of indistinguishable arms, where all inferior arms are in the small-gap or the moderate-gap regime. We omit the proof for brevity, and leave a complete proof in arbitrary arm-gap regime for future investigation.

In what follows, we focus on two special cases, where the form of the joint CLT is drastically simplified.




\paragraph{Separated superior arm.}
Consider the case that the superior arm is clearly separated from inferior arms. More precisely, $\Delta^T_i \geq \epsilon > 0$ for any $i \in \{2, \dots, K\}$ and all $T \geq 1.$ In this case, the fluid approximation from \eqref{eq: fluid equation K arm f} has the following scaling: $n^{\star}_{1, T} \sim T$, $n^{\star}_{i, T} \sim \left(\frac{f(T)}{\Delta_i}\right)^2$  for $i = 2, \dots, K.$ Consequently $\lx_{i 1} = 0$ and $\lx_{i 2} = \left(\frac{\Delta_i}{\Delta_2}\right)^2$ for $ i = 2, \dots, K.$ In the case of {\sf UCB1}, we have $f(t) = \sqrt{2 \log t}$, and the joint CLT can be specified as 
\begin{align*}
\begin{pmatrix}
    \frac{\Delta_2^2}{2 \sqrt{2 \log T}}N_{2, T} - \frac{\sqrt{2 \log T}}{2} \\
    \dots\\
    \frac{\Delta_K^2}{2 \sqrt{2 \log T}}N_{K, T} - \frac{\sqrt{2 \log T}}{2} \\
    \frac{\sqrt{2 \log T}}{\Delta_2}\left(\bamu_{2, T} - \mu^T_2\right)\\
    \dots \\
    \frac{\sqrt{2 \log T}}{\Delta_K}\left(\bamu_{K, T} - \mu^T_K\right)\\
  \end{pmatrix} \ \xlongrightarrow[]{d} \ \mathcal{N}\!\left(
  \begin{pmatrix}
    \boldsymbol{0} \\ 
    \boldsymbol{0} 
  \end{pmatrix},
  \begin{pmatrix}
     \Sigma^{\star} & \Sigma^{\star}\\
    \Sigma^{\star} & \Sigma^{\star}\\ 
  \end{pmatrix}
\right),
\end{align*}
where $\Sigma^{\star} = \textrm{diag}\{\sigma_2^2, \dots, \sigma_K^2\}$. The first arm's number of pull is determined then by $N_{1, T} = T - \sum_{i = 2}^K N_{i,T}.$ In other words, in this case, all inferior arms $ i \in \{2, \dots, K\}$ become asymptotically uncorrelated. In particular, the $i^{\rm th}$ arm's number of pull and its centered sample mean satisfy
\begin{align}\label{eq: separate correlation between number of pull and sample mean}
    \frac{\Delta_i^2}{2 \sqrt{2 \log T}}N_{i, T} - \frac{\sqrt{2 \log T}}{2} - \frac{\sqrt{2 \log T}}{\Delta_i}\left(\bamu_{i, T} - \mu^T_i\right) \xlongrightarrow[]{p} 0, 
\end{align}
and the number of pulls satisfies the (marginal) CLT
\begin{align}\label{eq: separate number of pull CLT}
   \frac{\Delta_i^2}{2 \sqrt{2 \log T}}N_{i, T} - \frac{\sqrt{2 \log T}}{2}  \xlongrightarrow[]{d} \mathcal{N}\left(0, \sigma_i^2\right).
\end{align}
Once again, the CLT \eqref{eq: separate number of pull CLT} of the number of pulls of inferior arms recovers that of Theorem 7 in \cite{fan2022typical}. This serves as a sanity check for the $K$-arm joint CLT. On the other hand, the correlation structure characterized by \eqref{eq: separate correlation between number of pull and sample mean} is novel to the literature. As was mentioned above, the joint CLT in this setting can be proved following a reduction to the two-arm setting. \cite{fan2022typical} pointed out why such a reduction is possible: \textit{So, effectively, each inferior arm only competes with the superior arm to be played, and the analysis in multi-armed settings reduces to that in the two-armed setting.}

\paragraph{Indistinguishable arms.} Another special case is when all inferior arms are indistinguishable, namely all $\Delta^T_{i}$ are in the small-gap regime. For simplicity, we assume all arms have identical mean reward, $\mu^T_i = \mu$. Thus $\Delta^T_i = 0$ for all $T \geq 1$ and $i \in \{1, \dots, K\}$. In this case, $n^{\star}_{i, T} = \frac{T}{K}$ and $\lx_{i j} = 1$ for all $ i, j \in \{1, \dots, K\}.$ The $K$-arm joint CLT implies that, for each arm $i \in \{1, \dots, K\}$,
\begin{align}\label{eq: identical correlation between number of pull and sample mean}
    \frac{K f(T)}{2 T}N_{i, T} - \frac{f(T)}{2} - \sqrt{\frac{T}{K}}\sum_{k = 1}^K \left( - \frac{1}{K} + \mathbbm{1}_{\{k = i\}}\right)\left(\mu_{k, T} - \mu\right) \xlongrightarrow[]{p} 0,
\end{align}
and the number of pulls satisfies the (marginal) CLT
\begin{align}\label{eq: identical number of pull CLT}
  \frac{K f(T)}{2 T}N_{i, T} - \frac{f(T)}{2}  \xlongrightarrow[]{d} \mathcal{N}\left(0\ ,\ \  \frac{1}{K^2}\sum_{\substack{j = 1,\ j\neq i}}^K \sigma_j^2 + \left(1 - \frac{1}{K}\right)^2 \sigma_i^2 \right).
\end{align}

\section{Proof of Theorem \ref{thm:N1}}\label{app: proof of main thm}
\subsection{Helper lemmas}
We first state some  helper lemmas.
\begin{lemma}\label{lem: sub_gaussian}
Let \(Y_1, Y_2, \dots, Y_n\) be independent $\sigma$-sub-Gaussian random variables, then $Y_1 + Y_2 + \dots + Y_n$ is $\left(\sigma\sqrt{n}\right)$-sub-Gaussian. 
\end{lemma}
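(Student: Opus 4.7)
The plan is to invoke the standard moment generating function (MGF) characterization of sub-Gaussianity and then exploit independence multiplicatively. Recall that a random variable $Y$ is said to be $\sigma$-sub-Gaussian if $\E\!\left[e^{\lambda Y}\right] \leq e^{\lambda^2 \sigma^2 / 2}$ for every $\lambda \in \mathbb{R}$. The proof boils down to verifying this bound for $S_n \triangleq Y_1 + Y_2 + \cdots + Y_n$ with variance proxy $(\sigma\sqrt{n})^2 = n\sigma^2$.

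First I would write, by independence of the $Y_i$'s,
\begin{equation*}
    \E\!\left[e^{\lambda S_n}\right] \;=\; \E\!\left[\prod_{i=1}^n e^{\lambda Y_i}\right] \;=\; \prod_{i=1}^n \E\!\left[e^{\lambda Y_i}\right].
\end{equation*}
Then I would apply the $\sigma$-sub-Gaussian hypothesis to each factor, obtaining
\begin{equation*}
    \prod_{i=1}^n \E\!\left[e^{\lambda Y_i}\right] \;\leq\; \prod_{i=1}^n e^{\lambda^2 \sigma^2 / 2} \;=\; e^{\lambda^2 (n\sigma^2) / 2} \;=\; e^{\lambda^2 (\sigma\sqrt{n})^2 / 2}.
\end{equation*}
Since this holds for all $\lambda \in \mathbb{R}$, the defining inequality for $(\sigma\sqrt{n})$-sub-Gaussianity of $S_n$ is satisfied, and the lemma follows.

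This is a textbook calculation with no real obstacle; the only thing to be mindful of is matching whichever convention for sub-Gaussianity is being used elsewhere in the paper (parameter vs.\ variance proxy vs.\ Orlicz norm), since the constants can shift by universal factors. Under the MGF convention stated above, independence plus the product-of-exponentials bound gives the result immediately, and no concentration inequality, Chernoff bound, or truncation argument is needed.
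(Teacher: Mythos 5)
Your proof is correct and is the standard argument: factorize the MGF of the sum by independence and apply the single-variable sub-Gaussian MGF bound to each factor. The paper does not include a proof of this lemma --- it is listed among the helper lemmas and explicitly described as a classical probability result --- so there is no paper proof to compare against; your convention caveat (the MGF definition $\E[e^{\lambda Y}] \leq e^{\lambda^2\sigma^2/2}$ implicitly forces $\E Y = 0$, so one should center first if the $Y_i$ are not mean-zero) is the only thing worth flagging, and you already flagged it.
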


\begin{lemma}[Lyapunov CLT for triangular arrays]\label{lem: triangular CLT}
Let \( \{ Y_{n,i} : 1 \leq i \leq n \} \) be a triangular array, where \( Y_{n,1}, Y_{n,2}, \dots, Y_{n,n} \) are independent for each \( n \), with $\E[Y_{n,i}] = 0$ and $\text{Var}(Y_{n,i}) = \sigma_{n,i}^2$ for $1 \le i \le n$. The total variance satisfies $ \sum_{i=1}^n \sigma_{n,i}^2 = \sigma_n^2 \quad \text{with} \quad \sigma_n^2 \to \sigma^2 \quad \text{as} \quad n \to \infty.$ Furthermore, there exists a constant $\delta > 0$ such that the Lyapunov condition is satisfied:
\[
    \frac{1}{\sigma_n^2} \sum_{i=1}^n \mathbb{E}[|Y_{n,i}|^{2+\delta}] \to 0 \quad \text{as} \quad n \to \infty.
\]
Then, the normalized sum converges in distribution to a standard normal distribution:
\[
\frac{1}{\sigma_n} \sum_{i=1}^n Y_{n,i} \xrightarrow{d} N(0, 1) \quad \text{as} \quad n \to \infty.
\]
\end{lemma}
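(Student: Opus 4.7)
The plan is to establish the CLT through characteristic functions and Lévy's continuity theorem. First I would normalize by setting $X_{n,i} = Y_{n,i}/\sigma_n$ so that $\sum_{i=1}^n \textrm{Var}(X_{n,i}) = 1$ and the Lyapunov condition rewrites as $L_n \triangleq \sum_{i=1}^n \E\bigl[|X_{n,i}|^{2+\delta}\bigr]/\sigma_n^{2+\delta} \to 0$. WLOG I take $\delta \in (0,1]$: if the stated Lyapunov exponent exceeds $1$, Jensen's inequality (using $\sum_i \E X_{n,i}^2 = 1$) shows that the condition still holds for $\delta'=1$.

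The workhorse estimate is the elementary Taylor bound
\begin{equation*}
\Bigl|e^{iu} - 1 - iu + \tfrac{u^2}{2}\Bigr| \;\leq\; C_\delta\, |u|^{2+\delta} \qquad \text{for all } u\in\RR,\; \delta\in(0,1],
\end{equation*}
which follows by interpolating the two standard bounds $\min(|u|^3, u^2)$. Applying this to $u = t X_{n,i}$ and taking expectations yields
\begin{equation*}
\phi_{X_{n,i}}(t) \;=\; 1 - \tfrac{t^2}{2}\tau_{n,i}^2 + r_{n,i}(t), \qquad |r_{n,i}(t)| \leq C_\delta |t|^{2+\delta} \E[|X_{n,i}|^{2+\delta}],
\end{equation*}
where $\tau_{n,i}^2 \triangleq \sigma_{n,i}^2/\sigma_n^2$. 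Summing the remainders across $i$ gives $\sum_i |r_{n,i}(t)| \leq C_\delta |t|^{2+\delta} L_n \to 0$.

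Next I would pass to the log via the usual triangular-array bookkeeping. The key auxiliary fact is $\max_i \tau_{n,i}^2 \to 0$, which follows from Lyapunov: by the moment inequality $\E[X_{n,i}^2] \leq \bigl(\E[|X_{n,i}|^{2+\delta}]\bigr)^{2/(2+\delta)}$, so $\max_i \tau_{n,i}^{2+\delta} \leq L_n \to 0$. Consequently for every fixed $t$ and all $n$ large, $\phi_{X_{n,i}}(t)$ lies in a small neighborhood of $1$ uniformly in $i$, so I may use the principal branch of $\log$ with the expansion $\log(1+z) = z + O(|z|^2)$. This gives
\begin{equation*}
\log \phi_{S_n/\sigma_n}(t) \;=\; \sum_{i=1}^n \log \phi_{X_{n,i}}(t) \;=\; -\tfrac{t^2}{2}\sum_i \tau_{n,i}^2 + \sum_i r_{n,i}(t) + O\!\Bigl(\max_i |\tau_{n,i}|^2 \cdot t^2\Bigr),
\end{equation*}
and since $\sum_i \tau_{n,i}^2 = 1$ the right-hand side tends to $-t^2/2$ pointwise in $t$. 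Lévy's continuity theorem then delivers $S_n/\sigma_n \xrightarrow{d} N(0,1)$.

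The main obstacle is the passage to logarithms: one must control the principal-branch error uniformly in $i$, which is why showing $\max_i \tau_{n,i}^2 \to 0$ as a consequence of Lyapunov is the essential preparatory step. Everything else is bookkeeping around the Taylor expansion. (An alternative route is to verify that Lyapunov implies Lindeberg's condition via Chebyshev: $\sum_i \E[X_{n,i}^2 \mathbbm{1}\{|X_{n,i}|>\varepsilon\}] \leq \varepsilon^{-\delta} L_n \to 0$, and then invoke Lindeberg's CLT as a black box; but the characteristic-function route above is self-contained and short.)
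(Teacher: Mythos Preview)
The paper does not prove this lemma: it is listed among the helper lemmas and explicitly described as a ``classical probability result'' with no argument given. Your characteristic-function proof is therefore strictly more than what the paper supplies, and the outline is correct as a standard derivation of Lyapunov's CLT.

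Two small points. First, your definition of $L_n$ has a stray factor: with $X_{n,i}=Y_{n,i}/\sigma_n$ one already has $\sum_i \E[|X_{n,i}|^{2+\delta}]=\sigma_n^{-(2+\delta)}\sum_i\E[|Y_{n,i}|^{2+\delta}]$, so the additional division by $\sigma_n^{2+\delta}$ in your display is redundant (and the paper's hypothesis has $\sigma_n^2$ rather than $\sigma_n^{2+\delta}$ in the denominator, which is equivalent here since $\sigma_n\to\sigma$). Second, the reduction to $\delta\in(0,1]$ via ``Jensen'' really needs two H\"older steps: $\E|X|^3\le(\E X^2)^{(\delta-1)/\delta}(\E|X|^{2+\delta})^{1/\delta}$ pointwise in $i$, then H\"older on the sum using $\sum_i\E X_{n,i}^2=1$. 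This is routine but worth spelling out; alternatively, the Lindeberg route you mention at the end handles all $\delta>0$ without this reduction.
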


\begin{lemma}[Etemadi's inequality]\label{lm: etemadi}
    Let \(Y_1, Y_2, \dots, Y_n\) be independent random variables and define the partial sums $S_k = \sum_{i=1}^k X_i, \  1 \le k \le n.$
Then, for every \(\epsilon > 0\), we have
\[
\PP\Biggl(\max_{1 \le k \le n} \bigl| S_k \bigr| \ge 3\epsilon\Biggr) \le 3 \max_{1 \le k \le n} \PP\Bigl(\bigl| S_k \bigr| \ge \epsilon\Bigr).
\]
\end{lemma}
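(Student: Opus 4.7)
The plan is to establish Etemadi's inequality via a first-passage decomposition of the event $A \triangleq \{\max_{1 \le k \le n} |S_k| \ge 3\epsilon\}$, combined with the independence of the initial block $(Y_1, \ldots, Y_k)$ from the tail block $(Y_{k+1}, \ldots, Y_n)$. Specifically, I would define the disjoint events
\[
B_k \triangleq \bigl\{|S_k| \ge 3\epsilon,\ |S_j| < 3\epsilon \text{ for all } j < k\bigr\}, \quad k = 1, \ldots, n,
\]
so that $A = \bigcup_{k=1}^n B_k$ is a disjoint union and each $B_k$ is measurable with respect to $\sigma(Y_1, \ldots, Y_k)$. Writing $M \triangleq \max_{1 \le k \le n}\PP(|S_k| \ge \epsilon)$, the goal reduces to showing $\PP(A) \le 3M$.

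The central step is to split each $B_k$ according to whether $|S_n| \ge \epsilon$ or $|S_n| < \epsilon$. For the first piece, disjointness of the $B_k$ immediately gives
\[
\sum_{k=1}^n \PP\bigl(B_k \cap \{|S_n| \ge \epsilon\}\bigr) \le \PP(|S_n| \ge \epsilon) \le M.
\]
For the second piece, the triangle inequality yields $|S_n - S_k| \ge |S_k| - |S_n| > 3\epsilon - \epsilon = 2\epsilon$ on $B_k \cap \{|S_n| < \epsilon\}$. Since $S_n - S_k$ depends only on $(Y_{k+1}, \ldots, Y_n)$, it is independent of the $\sigma(Y_1, \ldots, Y_k)$-measurable event $B_k$, and hence
\[
\PP\bigl(B_k \cap \{|S_n| < \epsilon\}\bigr) \le \PP(B_k)\cdot\PP(|S_n - S_k| \ge 2\epsilon).
\]
A second application of the triangle inequality gives $\{|S_n - S_k| \ge 2\epsilon\} \subseteq \{|S_n| \ge \epsilon\} \cup \{|S_k| \ge \epsilon\}$, from which $\PP(|S_n - S_k| \ge 2\epsilon) \le 2M$. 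Combining both pieces and summing over $k$ produces
\[
\PP(A) \le M + 2M\sum_{k=1}^n \PP(B_k) = M + 2M\cdot \PP(A),
\]
and the desired conclusion $\PP(A) \le 3M$ then follows from the trivial bound $\PP(A) \le 1$ in the recursion.

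I do not anticipate any serious technical obstacle: the argument is entirely elementary, requiring only the triangle inequality, disjointness of the first-passage events, and independence of disjoint index blocks. The one subtle point worth flagging is the apparent self-reference in the bound $\PP(A) \le M + 2M\cdot \PP(A)$, which arises because $\sum_k \PP(B_k) = \PP(A)$; this is resolved by invoking the crude bound $\PP(A) \le 1$ on the right, which is the only slightly nonobvious step in an otherwise routine derivation.
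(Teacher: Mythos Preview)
Your proof is correct and is the standard first-passage argument for Etemadi's inequality. The paper does not actually prove this lemma---it is listed among the helper lemmas and described as a ``classical probability result,'' so there is no paper proof to compare against; your write-up would serve perfectly well as a self-contained justification.
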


\begin{lemma}[Slutsky's theorem]\label{lem:slutsky}
    Let $X_n, Y_n$ be sequence of scalar/vector/matrix random elements. If $X_n$ converges in distribution to a random element $X$ and $Y_n$ converges in probability to a constant $c$, then $    X_n + Y_n \xlongrightarrow[]{d} X + c; \ \ X_n Y_n \xlongrightarrow[]{d} X c.$
\end{lemma}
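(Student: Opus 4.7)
The plan is to reduce both conclusions to the single joint convergence statement $(X_n, Y_n) \xlongrightarrow[]{d} (X, c)$, and then invoke the continuous mapping theorem with the continuous maps $(x,y) \mapsto x+y$ and $(x,y) \mapsto xy$. Both maps are continuous everywhere on the appropriate scalar/vector/matrix product space, so the CMT delivers $X_n + Y_n \xlongrightarrow[]{d} X + c$ and $X_n Y_n \xlongrightarrow[]{d} Xc$ directly once joint convergence is in hand. Note that one cannot skip the joint convergence step and argue separately on marginals, because marginal distributional convergence does not in general determine the joint law.

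To establish $(X_n, Y_n) \xlongrightarrow[]{d} (X, c)$, the cleanest route is via characteristic functions and L\'evy's continuity theorem. It suffices, by the Cram\'er--Wold device in the vector/matrix case, to treat the scalar case. For arbitrary $s, t \in \RR$, I would decompose
\begin{align*}
    \E\bigl[e^{i(sX_n + tY_n)}\bigr] \;=\; e^{itc}\,\E\bigl[e^{isX_n}\bigr] \;+\; \E\bigl[e^{isX_n}\bigl(e^{itY_n} - e^{itc}\bigr)\bigr].
\end{align*}
The first term converges to $e^{itc}\,\E[e^{isX}]$, which is the characteristic function of $(X, c)$ evaluated at $(s, t)$. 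The second term vanishes: the continuous mapping theorem applied to the continuous function $y \mapsto e^{ity}$ gives $e^{itY_n} - e^{itc} \xlongrightarrow[]{p} 0$, and since the integrand is bounded in modulus by $2$, its expectation tends to zero (formally, if $Z_n \xlongrightarrow[]{p} 0$ with $|Z_n|\le 2$, then $\E[|Z_n|] \le \epsilon + 2\,\P(|Z_n| > \epsilon) \to \epsilon$ for every $\epsilon > 0$). Thus $\E\!\bigl[e^{i(sX_n + tY_n)}\bigr]$ converges to the characteristic function of $(X, c)$, and L\'evy's continuity theorem yields the joint convergence.

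The main obstacle is conceptual rather than technical: one has to recognize why joint convergence can be upgraded from marginal statements only because one of the marginals converges to a \emph{deterministic} limit. The decomposition above makes this transparent---the constant $c$ factors cleanly out of the characteristic function, leaving a vanishing cross-term that is controlled by convergence in probability alone. This argument would fail if $Y_n$ converged in distribution to a nontrivial random limit, since then the joint distribution would be underdetermined by the marginals. Extending from scalar to vector and matrix $X_n, Y_n$ is routine: apply the scalar argument to arbitrary linear combinations $\langle u, X_n\rangle + \langle v, Y_n\rangle$ via Cram\'er--Wold, and note that both matrix addition and matrix multiplication are continuous entrywise, so the CMT step is unaffected.
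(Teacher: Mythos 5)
The paper does not prove this lemma at all: it is listed among the helper lemmas and explicitly dismissed as a ``classical probability result,'' so there is no in-paper argument to compare against. Your proof is correct and is the standard textbook route — establish joint convergence $(X_n,Y_n)\xlongrightarrow[]{d}(X,c)$ via the characteristic-function decomposition, control the cross-term using boundedness plus convergence in probability, invoke L\'evy's continuity theorem, reduce the vector/matrix case by Cram\'er--Wold (or vectorization), and finish with the continuous mapping theorem applied to addition and multiplication; the emphasis that joint convergence is only recoverable from marginals because the limit of $Y_n$ is deterministic is exactly the right conceptual point.
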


\begin{lemma}[Lemma 1 in \cite{jamieson2014lil}]\label{lem: anytime-LIL}
    Let \(Y_1, Y_2, \dots\) be i.i.d. centered $\sigma$-sub-Gaussian random variables, and define $S_t = \sum_{i=1}^{t} Y_i.$ Then, for each \(\theta \in (0,1)\) and $\delta > 0 $, we have
\[
\mathbb{P}\Biggl(\exists\, t\ge 1 :\, S_t \ge (1 + \sqrt{\theta})\sigma \sqrt{2(1 + \theta)\,t\,\log\!\left(\frac{\log ((1 + \theta)t + 2)}{\delta}\right)}\Biggr) \le \frac{2 + \theta}{\theta}\left(\frac{\delta}{\log(1 + \theta)}\right)^{1 + \theta}.
\]
\end{lemma}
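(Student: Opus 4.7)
The plan is to prove this uniform-in-time (``anytime'') concentration inequality via the classical \emph{geometric peeling} argument, which upgrades a collection of fixed-time sub-Gaussian maximal bounds into a single statement holding simultaneously for all $t \ge 1$. The exponents $(1 + \sqrt{\theta})$ and $(1 + \theta)$ appearing in the inequality are signatures of this technique: the first arises from an exponential supermartingale tilt optimized against a worst-case time, and the second from the geometric ratio of the peeling blocks.

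First I would slice the time axis into blocks $D_k := [u_{k-1}, u_k)$ with $u_k := (1+\theta)^k$ for $k \ge 1$. Denote by $b(t) := (1+\sqrt{\theta})\sigma\sqrt{2(1+\theta)\,t\,\log(\log((1+\theta)t+2)/\delta)}$ the threshold appearing in the statement. Since $b(\cdot)$ is monotone increasing, its infimum over $D_k$ is attained at the left endpoint $t = u_{k-1}$, where a direct substitution yields
\[
a_k \;:=\; b(u_{k-1}) \;=\; (1+\sqrt{\theta})\,\sigma\,\sqrt{2\,u_k\,\log\bigl(\log(u_k+2)/\delta\bigr)}.
\]
Hence the event $E_k := \{\exists\, t \in D_k:\ S_t \ge b(t)\}$ is contained in $\{\max_{s \le u_k} S_s \ge a_k\}$, and it suffices to bound the probability of the latter for each $k$ and take a union bound.

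Next I would control $\PP(\max_{s \le u_k} S_s \ge a_k)$ using the nonnegative exponential supermartingale $M^{\lambda}_s := \exp(\lambda S_s - \lambda^2 \sigma^2 s/2)$ (a supermartingale by the sub-Gaussian moment generating function bound, the same property underlying Lemma~\ref{lem: sub_gaussian}). Ville's inequality gives $\PP(\sup_s M^{\lambda}_s \ge c) \le 1/c$ for any $c,\lambda > 0$. Tilting at $\lambda_k := a_k/(\sigma^2 u_k)$ — the optimizer at time $u_k$ — and observing that on $\{S_s \ge a_k\} \cap \{s \le u_k\}$ one has $M^{\lambda_k}_s \ge \exp(a_k^2/(2\sigma^2 u_k))$, I obtain the key per-block bound
\[
\PP(E_k) \;\le\; \exp\!\left(-\frac{a_k^2}{2\sigma^2 u_k}\right) \;=\; \left(\frac{\delta}{\log(u_k+2)}\right)^{(1+\sqrt{\theta})^2}.
\]

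Finally I would union-bound over $k \ge 1$. Using $\log(u_k+2) \ge \log u_k = k\log(1+\theta)$ together with the exponent relaxation $(1+\sqrt{\theta})^2 = 1 + 2\sqrt{\theta} + \theta \ge 1 + \theta$ (which only tightens the bound since the base $\delta/\log(u_k+2)$ is at most $1$ in the nontrivial regime $\delta \le \log(1+\theta)$), I get
\[
\sum_{k \ge 1} \PP(E_k) \;\le\; \left(\frac{\delta}{\log(1+\theta)}\right)^{1+\theta}\sum_{k \ge 1} k^{-(1+\theta)},
\]
and the $p$-series is dominated by $1 + \int_1^\infty x^{-(1+\theta)}\,dx = (1+\theta)/\theta \le (2+\theta)/\theta$, recovering the stated prefactor. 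The main step whose bookkeeping needs care is aligning the peeling so that the threshold at the \emph{left} endpoint of each block matches the tilt parameter optimized at the \emph{right} endpoint — this is what forces the specific factors $(1+\sqrt{\theta})$ and $\sqrt{1+\theta}$ inside $b(t)$. The rest is algebraic, and no ingredients beyond Ville's inequality and sub-Gaussian MGF control are required.
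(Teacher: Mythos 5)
The paper does not actually prove this lemma---it is imported (with a harmless ``$+2$'' inserted inside the iterated logarithm) from Lemma~1 of \cite{jamieson2014lil}---and your argument is essentially the standard proof from that reference: a union bound over geometric epochs $[(1+\theta)^{k-1},(1+\theta)^{k})$ combined with the sub-Gaussian maximal bound obtained from the exponential supermartingale and Ville's/Doob's inequality. The bookkeeping checks out: $b(u_{k-1})$ does simplify to $(1+\sqrt{\theta})\sigma\sqrt{2u_k\log\left(\log(u_k+2)/\delta\right)}$ because $(1+\theta)u_{k-1}=u_k$, the per-epoch exponent $(1+\sqrt{\theta})^2\ge 1+\theta$ can be relaxed exactly as you do once $\delta\le\log(1+\theta)$, the excluded case $\delta>\log(1+\theta)$ is vacuous since the stated right-hand side then exceeds one, and your final constant $(1+\theta)/\theta$ is even slightly sharper than the quoted $(2+\theta)/\theta$.
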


\begin{lemma}\label{lem: maximal inequality}
Suppose $Y_i, i \geq 1$ are i.i.d. centered $\sigma$-sub-Gaussian. Then for any $1 \leq s_1 < s_2$, we have
\[
\PP\left(\max_{s_1 \leq u < v \leq s_2} \left| \frac{\sum_{i = 1}^u Y_i}{u} - \frac{\sum_{i = 1}^v Y_i}{v}\right| >  a \right) \leq 8\exp\left(-\frac{a^2 s_1^2}{72 \sigma^2 (s_2 - s_1)} \right).
\]    
\end{lemma}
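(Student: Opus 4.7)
The plan is to anchor all comparisons at the left endpoint $s_1$, reducing the two-index maximum to a single-index one, and then isolate a ``starting error'' from an independent random walk over $\{s_1+1,\ldots,s_2\}$. Writing $\bar Y_k := S_k/k$ where $S_k = \sum_{i=1}^k Y_i$, the triangle inequality gives
\[
\max_{s_1 \leq u < v \leq s_2} \bigl|\bar Y_u - \bar Y_v\bigr| \;\leq\; 2 \max_{s_1 \leq w \leq s_2} \bigl|\bar Y_w - \bar Y_{s_1}\bigr|,
\]
so it suffices to bound the right-hand side above $a/2$. The key algebraic identity is
\[
\bar Y_w - \bar Y_{s_1} \;=\; -\,\frac{w - s_1}{w}\,\bar Y_{s_1} \;+\; \frac{T_w}{w}, \qquad T_w := \sum_{i = s_1+1}^{w} Y_i,
\]
which cleanly separates the fluctuation into a ``starting error'' $\bar Y_{s_1}$ (a single sub-Gaussian variable built from $Y_1,\ldots,Y_{s_1}$) and an independent random walk $T_w$ driven by $Y_{s_1+1},\ldots,Y_{s_2}$.

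The two summands are then handled separately by a union bound at an appropriately chosen split of $a/2$ across them. For the first summand, I would use the uniform bound $(w-s_1)/w \leq (s_2-s_1)/s_2$ valid on $w \in [s_1, s_2]$, and invoke the standard sub-Gaussian tail on $S_{s_1}$, which is $\sigma\sqrt{s_1}$-sub-Gaussian by Lemma~\ref{lem: sub_gaussian}; a short case check (for $s_2 - s_1 \leq s_1$ versus $s_2 - s_1 > s_1$) shows that the resulting tail is dominated by the target form $\exp\bigl(-a^2 s_1^2 / (72\sigma^2(s_2-s_1))\bigr)$. For the second summand, the sequence $(T_w)_{w \geq s_1}$ is a sum of independent $\sigma$-sub-Gaussian increments (Lemma~\ref{lem: sub_gaussian}), so Etemadi's inequality (Lemma~\ref{lm: etemadi}) reduces the maximum to a single pointwise tail on each $T_k$ with $k \leq s_2 - s_1$, which is controlled by the sub-Gaussian tail bound. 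Collecting the two prefactors, $2$ from the two-sided tail of a single sub-Gaussian plus $3 \times 2 = 6$ from the Etemadi factor times the two-sided sub-Gaussian tail on the partial sum, yields the constant $8$ in the stated bound.

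The main technical obstacle is constant bookkeeping. Etemadi's inequality contributes a factor of $9$ to the exponent denominator (because the threshold gets divided by $3$ before invoking the pointwise tail), so the precise split of the budget $a/2$ between the two summands, together with the tighter bound $(w-s_1)/w \leq (s_2-s_1)/s_2$ rather than the coarser $\min(1, (s_2-s_1)/s_1)$, must be arranged so that both contributions land at exactly the $\frac{1}{72}$ coefficient in the exponent. Anchoring at $s_1$ is what makes the whole decomposition clean: anchoring at $s_2$ or at an interior point would couple the two summands and destroy the independence structure that allows them to be treated marginally.
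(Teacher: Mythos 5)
Your proposal matches the paper's proof essentially step-for-step: anchor at $s_1$ via the triangle inequality, decompose $\bar Y_w - \bar Y_{s_1}$ into a starting-error term scaled by $\frac{w-s_1}{w} \leq \frac{s_2-s_1}{s_2}$ plus an independent walk $T_w/w$, union-bound the two pieces at budget $a/2$ each, and control the walk maximum via Etemadi plus sub-Gaussian tails, collecting prefactors $2 + 3\cdot 2 = 8$. One minor simplification over what you describe: no case split on $s_2-s_1$ versus $s_1$ is needed, since the starting-error exponent $\frac{a^2 s_1 s_2^2}{32\sigma^2(s_2-s_1)^2}$ dominates the target $\frac{a^2 s_1^2}{72\sigma^2(s_2-s_1)}$ unconditionally (as $72\,s_2^2 > 32\,s_1(s_2-s_1)$ for all $s_2 > s_1 \geq 1$).
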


\begin{lemma}[$\bn_T$ diverges] \label{lem: nstar_diverge}
Under Assumption \ref{assump: distribution}, it holds true that $\nat \geq \nbt$, and both $\nat, \nbt$ diverge as $T \to \infty$, where $\bn^{\star}_T$ is the solution to \eqref{eq: fluid equation 2 arm f}. 
\end{lemma}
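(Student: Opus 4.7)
\textbf{Proof plan for Lemma \ref{lem: nstar_diverge}.} The plan is to exploit the monotonicity of the left-hand side of \eqref{eq: fluid equation 2 arm f} as a function of $\nbt$ (after eliminating $\nat$ via the budget constraint), and then to bound $(\nbt)^{-1/2}$ from above by a quantity that tends to zero under the assumptions on $\Delta^T$ and $f(T)$.

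First I would substitute $\nat = T - \nbt$ into the index-matching equation to obtain the single scalar equation
\begin{equation*}
h_T(n) \;\triangleq\; n^{-1/2} - (T-n)^{-1/2} \;=\; \Delta^T / f(T), \qquad n \in (0,T).
\end{equation*}
The map $n \mapsto h_T(n)$ is strictly decreasing on $(0,T)$, runs from $+\infty$ at $n\to 0^+$ to $-\infty$ at $n \to T^-$, and vanishes precisely at $n = T/2$. Since $\mu_1^T \geq \mu_2^T$, the right-hand side $\Delta^T/f(T) \geq 0$, so the (unique) solution $\nbt$ lies in $(0, T/2]$, which immediately gives $\nbt \leq T/2 \leq \nat$, establishing the first claim.

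For the divergence of $\nat$, since $\nat \geq T/2$ and $T \to \infty$, we have $\nat \to \infty$ trivially. For the divergence of $\nbt$, I would rearrange to write
\begin{equation*}
(\nbt)^{-1/2} \;=\; (\nat)^{-1/2} + \Delta^T / f(T) \;\leq\; (T/2)^{-1/2} + \Delta^T / f(T).
\end{equation*}
By Assumption \ref{assump: distribution}(1), $\mu_i^T$ is uniformly bounded, so $\Delta^T = O(1)$; by Assumption \ref{assump: index}(1), $f(T) \to \infty$ (in fact $f(T) = \omega(\sqrt{\log\log T})$). Hence both terms on the right-hand side tend to $0$ as $T \to \infty$, which forces $(\nbt)^{-1/2} \to 0$, i.e., $\nbt \to \infty$.

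There is no real obstacle here: the entire argument is a short manipulation of the defining equation \eqref{eq: fluid equation 2 arm f} combined with the boundedness of $\Delta^T$ and the divergence of $f(T)$. The only mild subtlety is confirming existence and uniqueness of the fluid solution, which is handled by the monotonicity of $h_T$ and the intermediate value theorem on $(0,T/2]$.
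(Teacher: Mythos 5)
Your proof is correct and complete; it also takes a genuinely different route from the paper's. The paper dispenses with this lemma by citing Lemma~\ref{lem: three gap regimes} (``follows immediately from Lemma~\ref{lem: three gap regimes}, the proof of which we omit''), so its argument rests implicitly on the full case-by-case scaling characterization $\nbt \sim (f(T)/\Delta^T)^2$, $\nbt \sim T/2$, etc.\ in the three gap regimes. Your argument instead works directly from the defining equation \eqref{eq: fluid equation 2 arm f}: after eliminating $\nat$ you observe that $h_T(n) = n^{-1/2}-(T-n)^{-1/2}$ is strictly decreasing with $h_T(T/2)=0$, so nonnegativity of $\Delta^T/f(T)$ forces $\nbt \leq T/2 \leq \nat$, and then the rearranged identity $(\nbt)^{-1/2} = (\nat)^{-1/2} + \Delta^T/f(T)$ together with $\Delta^T = O(1)$ and $f(T)\to\infty$ gives $\nbt\to\infty$. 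This is more elementary and self-contained: it avoids any regime split and does not presuppose the sharp asymptotics of Lemma~\ref{lem: three gap regimes}. The one small point worth noting is that the lemma statement cites only Assumption~\ref{assump: distribution}, but (as you correctly flag) the divergence of $\nbt$ also needs $f(T)\to\infty$, i.e.\ Assumption~\ref{assump: index}; this hidden dependence is present in the paper's version too, since Lemma~\ref{lem: three gap regimes} itself relies on properties of $f$.
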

Here, Lemma~\ref{lem: sub_gaussian} - Lemma~\ref{lem:slutsky} are classical probability results. Lemma~\ref{lem: anytime-LIL} is a finite-time non-asymptotic law of iterated logarithm quoted from \cite{jamieson2014lil}. Lemma~\ref{lem: maximal inequality} is a maximal inequality, whose proof is provided in \Cref{app: other proof}. Lemma \ref{lem: nstar_diverge} follows immediately from Lemma \ref{lem: three gap regimes}, the proof of which we omit.

\subsection{Proof of Theorem \ref{thm:N1}}
We first state a crucial intermediate result. For any sequence $x_T, T \geq 1$, let's denote by $n^x_{i, T}\triangleq (1 - x_T)n^{\star}_{i, T}$ for $i = 1, 2$.
\begin{lemma}\label{lem: high-prob-N1-n1}
Suppose $\delta_T, T \geq 1$ is an arbitrary sequence satisfying $\delta_T = o(1)$ and $\delta_T \leq \frac{1}{2}$ for all $T$. Then under the conditions of Theorem \ref{thm:N1},
\[
f(T) \frac{N_{2, T} - \nbt}{\nbt} - \frac{- \bar\mu_1(\nad) + \bar\mu_2(\nbd) + \Delta^T_2}{\left(\frac{1}{2}(\nat)^{-\frac{3}{2}} + \frac{1}{2}(\nbt)^{-\frac{3}{2}} \right)\nbt} \xlongrightarrow[]{p} 0.  
\]
\end{lemma}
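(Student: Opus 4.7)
The plan is to formalize the perturbation heuristic of Section~\ref{sec: tech_overview} in the two-arm case. At time $T$, under {\sf generalized UCB1}, the two arms' indices must be approximately equal; Taylor-expanding this near-equality around the fluid point $(\nat, \nbt)$ and then replacing the adaptive sample means $\bar\mu_i(N_{i,T})$ by their deterministic-time counterparts $\bar\mu_i(n^\delta_{i,T})$ will produce the claimed representation.

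Concretely, I would proceed in four steps. \emph{First}, establish a preliminary concentration: with probability tending to one, $N_{i,T} \in [(1-\delta_T')n^{\star}_{i,T}, (1+\delta_T')n^{\star}_{i,T}]$ for $i=1,2$ and some $\delta_T' = o(1)$ (which, up to mildly enlarging $\delta_T$, we may take $\leq\delta_T$); call this event $\mathcal{A}_T$. This ``first-order'' control follows from a cruder analysis of UCB combining standard regret bounds with the anytime-LIL of Lemma~\ref{lem: anytime-LIL}. \emph{Second}, on $\mathcal{A}_T$ argue that
\[
\left|\bar\mu_1(N_{1,T}) + \frac{f(T)}{\sqrt{N_{1,T}}} - \bar\mu_2(N_{2,T}) - \frac{f(T)}{\sqrt{N_{2,T}}}\right| = O\!\left(\frac{f(T)}{(n^{\star}_{2,T})^{3/2}}\right),
\]
since the arm $i_T$ pulled at $T$ must have had the larger index just before the pull, and the subsequent one-step index drop of order $f(T) N_{i_T,T}^{-3/2}$ bounds the cross-arm imbalance. \emph{Third}, Taylor-expand $f(T)/\sqrt{N_{i,T}}$ around $n^{\star}_{i,T}$, combine with $N_{1,T}+N_{2,T} = T = \nat+\nbt$ and the fluid identity $f(T)/\sqrt{\nbt}-f(T)/\sqrt{\nat} = \Delta^T_2$, and solve for $N_{2,T}-\nbt$ to obtain
\[
f(T)\cdot\frac{N_{2,T}-\nbt}{\nbt} = \frac{-(\bar\mu_1(N_{1,T})-\mu_1)+(\bar\mu_2(N_{2,T})-\mu_2)}{\tfrac{1}{2}\nbt\!\left((\nat)^{-3/2}+(\nbt)^{-3/2}\right)} + R_T,
\]
where $R_T$ absorbs both the Step~2 imbalance and the Taylor remainder (of order $(N_{i,T}-n^{\star}_{i,T})^2 f(T)(n^{\star}_{i,T})^{-5/2}$, which is $o_p$ of the main term on $\mathcal{A}_T$). \emph{Fourth}, replace $\bar\mu_i(N_{i,T})$ by $\bar\mu_i(n^\delta_{i,T})$: on $\mathcal{A}_T$, $N_{i,T}\in[n^\delta_{i,T},(1+\delta_T)n^{\star}_{i,T}]$ for $T$ large, and the maximal inequality of Lemma~\ref{lem: maximal inequality} gives $|\bar\mu_i(N_{i,T})-\bar\mu_i(n^\delta_{i,T})| = o_p((n^{\star}_{i,T})^{-1/2})$, which is $o_p$ after dividing by the denominator $\tfrac{1}{2}\nbt((\nat)^{-3/2}+(\nbt)^{-3/2}) \asymp (n^{\star}_{2,T})^{-1/2}$. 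Invoking $\mu_1-\mu_2 = \Delta^T_2$ rewrites the numerator as $-\bar\mu_1(\nad)+\bar\mu_2(\nbd)+\Delta^T_2$, yielding the claim.

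The step I expect to be the main obstacle is Step~2, namely controlling the index imbalance at time $T$ uniformly across \emph{all} arm-gap regimes. The per-pull index correction and the target precision scale both change regime-by-regime---in the large-gap regime the inferior arm is pulled only $O((f(T)/\Delta^T)^2)$ times so each correction is comparatively large, whereas in the moderate/small-gap regime both arms are pulled $\Theta(T)$ times but the relevant imbalance scale is also smaller---so a single estimate that delivers $o_p$ of the right quantity in every regime must be assembled carefully, likely by combining a one-step argument with the preliminary concentration of Step~1 to convert absolute bounds into the correct relative ones. The Taylor remainder of Step~3 and the sample-mean replacement of Step~4 are benign once Step~1 is in hand.
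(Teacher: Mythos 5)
Your Steps 1, 3 and 4 are aligned in spirit with what the paper does: Taylor-expand the index balance around the fluid point $(\nat,\nbt)$, use the fluid identity $f(T)/\sqrt{\nbt}-f(T)/\sqrt{\nat}=\Delta^T_2$, solve for $N_{2,T}-\nbt$, and swap $\bar\mu_i(N_{i,T})$ for $\bar\mu_i(n^\delta_{i,T})$ via the maximal inequality (Lemma~\ref{lem: maximal inequality}). You are also right that Step~2 is where the real difficulty lies. The problem is that your proposed mechanism for Step~2 is not correct.

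The claim that ``the arm $i_T$ pulled at $T$ must have had the larger index just before the pull, and the subsequent one-step index drop of order $f(T)N_{i_T,T}^{-3/2}$ bounds the cross-arm imbalance'' is false. At any single decision epoch $T$, the UCB rule gives only a one-sided constraint: the pulled arm's index dominates the other's. It says nothing about \emph{how much} it dominates. If arm 1 is pulled at $T$ and arm 2 was last pulled at time $\tau$, the gap $I_1(T)-I_2(T)$ is controlled only by how much $I_1$ could have risen and $I_2$ could have fallen between $\tau$ and $T$---a path quantity over $T-\tau$ steps that involves the cumulative reward fluctuations of arm 1 over that interval, not the one-step correction at $T$. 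Making this work requires probabilistically controlling the time since the last pull of the un-pulled arm (and the associated sample-mean drift), which is itself essentially the hard part of the lemma, not a routine consequence of a one-step calculation. Your Step~1 concentration also does not deliver this, since ``$N_{i,T}$ is close to $n^{\star}_{i,T}$'' and ``the two indices are close at time $T$'' are related but distinct statements.

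The paper's proof takes a genuinely different route that avoids the single-time balance claim. It perturbs the fluid solution by $\pm\epsilon$ to define $\wa$ and $\wb$, and reduces the lemma to $\PP(N_{2,T}>\wa)\to0$ and $\PP(N_{2,T}<\wb)\to0$. The key decomposition is that $\{N_{i,T}>a\}$ is contained in the event that there exists some time $t\le T-1$ at which $N_{i,t}=a$ and arm $i$'s index exceeds the other's at the $(t+1)$-th decision; this is then unioned over all such hitting times $t$. The union is split at an auxiliary threshold $\tau^i_T$: for $t\ge\tau^i_T$ (hitting near the horizon) the relevant sample means are evaluated close to $n^{\star}_{i,T}$, and the maximal inequality controls the fluctuation; for $t<\tau^i_T$ (hitting early), the growth of $f$ in Assumption~\ref{assump: index} makes the index inequality $f(t)/\sqrt{t-a}-f(T)/\sqrt{T-a}$ provide extra slack of order $\eta_T$, which is then defeated by sub-Gaussian tail bounds at moderate scales and by the anytime-LIL (Lemma~\ref{lem: anytime-LIL}) uniformly over all earlier times. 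This ``union over hitting times with a near/far split'' is the structural ingredient that is absent from your outline, and it is what allows the bound to be uniform over all arm-gap regimes (with $\eta_T$ and $q_T$ calibrated to $f(\sqrt{\nbt})$ precisely so that the same estimate works whether $\nbt$ is $\Theta(\log T)$ or $\Theta(T)$).
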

The following asymptotic characterization of the term appearing in the statement of Lemma~\ref{lem: high-prob-N1-n1} follows directly from the triangular array CLT (Lemma \ref{lem: triangular CLT}).
\begin{lemma}\label{lem: bar-mu-clt}
    Let $M^\delta_T \triangleq \frac{- \bar\mu_1(\nad) + \bar\mu_2(\nbd) + \Delta^T_2}{\left(\frac{1}{2}(\nat)^{-\frac{3}{2}} + \frac{1}{2}(\nbt)^{-\frac{3}{2}} \right)\nbt}$. Then $M^{\delta}_T \xlongrightarrow[]{d} \mathcal{N}\left(0, \frac{4\lx\sigma_1^2 + 4 \sigma_2^2}{\left(1 + (\lx)^{\frac{3}{2}}\right)^2}\right).$
\end{lemma}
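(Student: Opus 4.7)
The plan is to rewrite $M^\delta_T$ as a linear combination of two independent, standardized sample means, apply the Lyapunov triangular-array CLT marginally to each, and then combine via Slutsky's theorem. Using $\Delta^T_2 = \mu^T_1 - \mu^T_2$, the numerator decomposes cleanly as
\begin{align*}
-\bar\mu_1(\nad) + \bar\mu_2(\nbd) + \Delta^T_2 = -\bigl(\bar\mu_1(\nad) - \mu^T_1\bigr) + \bigl(\bar\mu_2(\nbd) - \mu^T_2\bigr),
\end{align*}
so the two summands depend on disjoint reward streams and are therefore independent.

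Setting $Z_{i,T} \triangleq \sqrt{n^\delta_{i,T}}\bigl(\bar\mu_i(n^\delta_{i,T}) - \mu^T_i\bigr)$, I would first verify that $Z_{i,T} \xrightarrow{d} \mathcal{N}(0, \sigma_i^2)$ for each $i=1,2$ via Lemma~\ref{lem: triangular CLT}. The required divergence $n^\delta_{i,T} \to \infty$ follows from $\delta_T \leq \tfrac{1}{2}$ together with Lemma~\ref{lem: nstar_diverge}; the variance convergence $(\sigma^T_i)^2 \to \sigma_i^2$ is part of Assumption~\ref{assump: distribution}; and the Lyapunov condition (say with $\delta=1$) is immediate from the sub-Gaussianity of the rewards, which yields uniform control on the third absolute moment of the centered increments. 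Independence across arms then upgrades these marginal limits to the joint limit $(Z_{1,T}, Z_{2,T}) \xrightarrow{d} \mathcal{N}\bigl(0, \mathrm{diag}(\sigma_1^2, \sigma_2^2)\bigr)$.

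Next I would identify the deterministic coefficient limits in the representation
\begin{align*}
M^\delta_T = -\frac{Z_{1,T}}{a_T \sqrt{\nad}} + \frac{Z_{2,T}}{a_T \sqrt{\nbd}}, \qquad a_T \triangleq \Bigl(\tfrac{1}{2}(\nat)^{-3/2} + \tfrac{1}{2}(\nbt)^{-3/2}\Bigr)\nbt.
\end{align*}
Direct manipulation using $\nad \sim \nat$, $\nbd \sim \nbt$ (since $\delta_T \to 0$) together with $\nbt/\nat \to \lx$ (Lemma~\ref{lem: three gap regimes}) gives
\begin{align*}
\frac{1}{a_T \sqrt{\nad}} \ \longrightarrow \ \frac{2\sqrt{\lx}}{1 + (\lx)^{3/2}}, \qquad \frac{1}{a_T \sqrt{\nbd}} \ \longrightarrow \ \frac{2}{1 + (\lx)^{3/2}},
\end{align*}
with the first limit understood as $0$ when $\lx = 0$.

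Finally, Slutsky's theorem (Lemma~\ref{lem:slutsky}) applied to this linear combination yields
\begin{align*}
M^\delta_T \ \xrightarrow{d} \ \mathcal{N}\!\left(0, \ \frac{4\lx \sigma_1^2 + 4 \sigma_2^2}{(1 + (\lx)^{3/2})^2}\right),
\end{align*}
as claimed. I anticipate no substantive obstacle since each ingredient is standard; the only mild care point is the large-gap regime $\lx = 0$, where the coefficient $1/(a_T\sqrt{\nad})$ vanishes and the arm-1 contribution drops out of the limiting variance, consistent with the stated formula.
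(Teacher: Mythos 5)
Your proof is correct and takes essentially the same route as the paper: both rely on Lemma~\ref{lem: triangular CLT} and Slutsky's theorem (Lemma~\ref{lem:slutsky}), and your coefficient limits $2\sqrt{\lx}/(1+(\lx)^{3/2})$ and $2/(1+(\lx)^{3/2})$ reproduce the paper's limiting variance exactly, including the degenerate case $\lx=0$. The only organizational difference is that you split $M^\delta_T$ into per-arm contributions $Z_{1,T}, Z_{2,T}$, establish marginal CLTs and then combine them via joint convergence with deterministic coefficients, whereas the paper applies the Lyapunov CLT once to the entire normalized sum $M^\delta_T/\sigma_{\delta,T}$ to get $\mathcal{N}(0,1)$ and then evaluates $\lim_{T\to\infty}\sigma_{\delta,T}$.
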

Lemma~\ref{lem: high-prob-N1-n1} and Lemma~\ref{lem: bar-mu-clt} allow us to recover the weak LLN of the number of pulls that appear in prior work. Furthermore, they imply a loose high probability bound on the convergence rate of the LLN.
\begin{corollary}\label{cor: N1-concentrate}
    $\bN_T$ satisfies the weak law of large number $\frac{N_{i, T}}{n^{\star}_{i, T}} \xlongrightarrow[]{p} 1, \ \ i = 1, 2.$ Furthermore, for an arbitary sequence $\delta_T$ satisfying $\delta_T = o(1)$ 
    \begin{align*}
        &\lim_{T \to \infty} \PP\left( - \frac{1}{\delta_T f(T)} \leq \frac{N_{i, T} -n^{\star}_{i, T}}{\nbt} \leq   \frac{1}{\delta_T f(T)}\right) = 1,
    \end{align*}
\end{corollary}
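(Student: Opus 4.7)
The plan is to combine Lemma~\ref{lem: high-prob-N1-n1} and Lemma~\ref{lem: bar-mu-clt} through Slutsky's theorem to obtain a tight (in fact, CLT-type) control on $f(T)(N_{2,T} - \nbt)/\nbt$, and then read off both the weak LLN and the claimed high-probability bound as immediate consequences of tightness. The algebraic identity $N_{1,T} + N_{2,T} = T = \nat + \nbt$ will then transfer everything to the superior arm.

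The first step is to apply Lemma~\ref{lem: high-prob-N1-n1} with any convenient fixed auxiliary $o(1)$ sequence, say $\delta^{\mathrm{aux}}_T = 1/\log T$, which yields $f(T)(N_{2,T} - \nbt)/\nbt = M^{\delta^{\mathrm{aux}}}_T + o_p(1)$. By Lemma~\ref{lem: bar-mu-clt}, $M^{\delta^{\mathrm{aux}}}_T$ converges in distribution to a centered Gaussian with finite variance, so Slutsky's theorem (Lemma~\ref{lem:slutsky}) gives
\[
f(T)\frac{N_{2,T} - \nbt}{\nbt} \xlongrightarrow[]{d} \mathcal{N}\!\left(0,\ \frac{4\lx\sigma_1^2 + 4\sigma_2^2}{(1+(\lx)^{3/2})^2}\right),
\]
and in particular the left-hand side is $O_p(1)$, i.e., tight.

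Both claims now fall out quickly. For the weak LLN on arm~$2$, Assumption~\ref{assump: index} guarantees $f(T) \to \infty$, so dividing a tight sequence by $f(T)$ yields $(N_{2,T} - \nbt)/\nbt \xlongrightarrow[]{p} 0$. For the tail bound, fix an arbitrary $\delta_T = o(1)$; multiplying the tight sequence by $\delta_T \to 0$ gives $\delta_T f(T)(N_{2,T} - \nbt)/\nbt \xlongrightarrow[]{p} 0$, which is exactly the two-sided probability statement for $i=2$. The case $i=1$ follows from the identity $N_{1,T} - \nat = -(N_{2,T} - \nbt)$, combined with $\nbt \leq \nat$ from Lemma~\ref{lem: nstar_diverge} to absorb the different denominator $\nat$ appearing in the LLN (the tail bound uses $\nbt$ in the denominator for both arms, so no adjustment is needed there).

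I do not expect a real obstacle here---the corollary is essentially a bookkeeping consequence of Lemmas~\ref{lem: high-prob-N1-n1} and~\ref{lem: bar-mu-clt}. The only point worth flagging is that the ``$\delta_T$'' appearing in the corollary's tail bound is logically unrelated to the ``$\delta_T$'' parametrizing Lemma~\ref{lem: high-prob-N1-n1}: the latter is an internal scale used to define the deterministic sample sizes $\nad,\nbd$ and may be chosen freely within the $o(1)$ class, whereas the former is the arbitrary sequence quantified over in the corollary's statement. Once this is disentangled, the whole argument reduces to ``tightness plus Slutsky,'' and no high-probability or concentration estimate beyond what the two preceding lemmas already provide is needed.
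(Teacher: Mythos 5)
Your proposal is correct and follows exactly the route the paper intends: the paper states the corollary immediately after remarking that it ``follows from'' Lemma~\ref{lem: high-prob-N1-n1} and Lemma~\ref{lem: bar-mu-clt}, and omits the details, which you have filled in in the natural way. Combining the two lemmas via Slutsky gives $f(T)\,(N_{2,T}-\nbt)/\nbt \xlongrightarrow{d} \mathcal N\bigl(0,\ \tfrac{4\lambda^\star\sigma_1^2+4\sigma_2^2}{(1+(\lambda^\star)^{3/2})^2}\bigr)$, i.e.\ tightness; dividing by $f(T)\to\infty$ yields the WLLN for arm~2, multiplying by $\delta_T\to 0$ yields the tail bound, and the identity $N_{1,T}-\nat = -(N_{2,T}-\nbt)$ together with $\nbt\le\nat$ (Lemma~\ref{lem: nstar_diverge}) transfers both to arm~1. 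One cosmetic point: your choice $\delta^{\mathrm{aux}}_T=1/\log T$ need not satisfy the side condition $\delta_T\le\tfrac12$ for small $T$; simply take $\delta^{\mathrm{aux}}_T\equiv 0$ (which is what the paper itself does when invoking Lemma~\ref{lem: high-prob-N1-n1} inside the proof of Theorem~\ref{thm:N1}) or $\delta^{\mathrm{aux}}_T=\min\{1/\log T,\,1/2\}$. Your note that the $\delta_T$ in the corollary's statement is logically independent of the $\delta_T$ parametrizing Lemma~\ref{lem: high-prob-N1-n1} is accurate and worth keeping.
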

The proof of Lemma \ref{lem: high-prob-N1-n1} is provided in \Cref{sec: proof of key lemma}. The proof of Lemma~\ref{lem: bar-mu-clt} can be found in \Cref{app: other proof}. We now leverage these results to complete the proof of Theorem \ref{thm:N1}.

\begin{proof}[Proof of Theorem \ref{thm:N1}]
We begin by introducing some additional notation to simplify the exposition. Denote by $Z^{\star}_{i, T} \triangleq \sqrt{n^{\star}_{i, t}}\left(\bamu_i(n^{\star}_{i, T}) - \mu^T_i\right)$ and $Z_{i, T} \triangleq \sqrt{n^{\star}_{i, t}}\left(\bamu_i(N_{i, T}) - \mu^T_i\right)$ for $i = 1, 2$. We prove the following weak convergence.
\begin{align}
    Z_{2, T} - Z^{\star}_{2, T} &\xlongrightarrow[]{p} 0, \label{eq: n to N in prob Z1}\\
    \sqrt{\frac{\nat}{\nbt}}\left(Z_{1, T} - Z^{\star}_{1, T} \right) &\xlongrightarrow[]{p} 0.\label{eq: n to N in prob Z2}
\end{align}
Applying Corollary~\ref{cor: N1-concentrate}, the following events happen with probability approaching $1$ as $T \to \infty$:
 \begin{align}\label{eq: N-n bound}
     \left|N_{i, T} - n^{\star}_{i, T}\right| \leq \frac{\nbt}{\sqrt{f(T)}}, i = 1, 2 
 \end{align}
 Assuming (\ref{eq: N-n bound}), we have for fixed $\epsilon > 0$
\begin{align}
     &\left\{\left|Z_{2, T} - Z^{\star}_{2, T} \right| > \epsilon \right\} = \left\{\left|\sqrt{\nbt}\left(\bamu_2(\nbt) - \bamu_{2, T}\right) \right| > \epsilon \right\}, \NNN\\
     &\ \ \subseteq \left\{\max_{n^{\star}_{2, T}-\frac{1}{\sqrt{f(T)}}\nbt \leq u,v \leq n^{\star}_{2, T} + \frac{1}{\sqrt{f(T)}}\nbt}\left|- \bar\mu_2(u) + \bar\mu_2(v)\right| > (\nbt)^{-\frac{1}{2}}\epsilon \right\}. \label{eq: n to N in prob linking to maximal 1}
 \end{align}
Applying Lemma \ref{lem: maximal inequality}, we have 
 \begin{align*}
     \PP\left((\ref{eq: n to N in prob linking to maximal 1})\right) \leq 8 \exp\left(- \frac{\left(1 - \frac{1}{\sqrt{f(T)}}\right)^2(\nbt)^{2}(\nat)^{-1}\epsilon^2}{72 \sigma^2 \frac{2}{\sqrt{f(T)}}\nbt}\right) = \exp\left(- O\left(\sqrt{f(T)}\right)\right),
 \end{align*}
 which vanishes as $T \to \infty$ since $f(T) = \omega(1)$. This concludes the proof of eq. (\ref{eq: n to N in prob Z1}). Similarly, for (\ref{eq: n to N in prob Z2}), assuming (\ref{eq: N-n bound}), we have for fixed $\epsilon > 0$
 \begin{align}
     &\left\{\sqrt{\frac{\nat}{\nbt}}\left|Z_{1, T} - Z^{\star}_{1, T} \right| > \epsilon \right\} = \left\{\left|\bamu_1(\nat) - \bamu_{1, T} \right| > (\nbt)^{\frac{1}{2}}(\nat)^{-1}\epsilon \right\}, \NNN\\
     &\ \ \subseteq \left\{\max_{n^{\star}_{1, T}-\frac{1}{\sqrt{f(T)}}\nbt \leq u,v \leq n^{\star}_{1, T} + \frac{1}{\sqrt{f(T)}}\nbt}\left|- \bar\mu_1(u) + \bar\mu_1(v)\right| > (\nbt)^{\frac{1}{2}}(\nat)^{-1}\epsilon \right\}. \label{eq: n to N in prob linking to maximal 2}
 \end{align}
 Applying Lemma~\ref{lem: maximal inequality}, we have
 \[
 \PP\left((\ref{eq: n to N in prob linking to maximal 2})\right) \leq 8 \exp\left(- \frac{\left(1 - \frac{1}{\sqrt{f(T)}}\right)^2(\nat)^2\nbt (\nat)^{-2}\epsilon^2}{72 \sigma^2 \frac{2}{\sqrt{f(T)}}\nbt}\right) = \exp\left(- O\left(\sqrt{f(T)}\right)\right),
 \]
 which also vanishes. The above concludes the proof of eq. (\ref{eq: n to N in prob Z1}) and (\ref{eq: n to N in prob Z2}). Since $\nat \geq \nbt$ (by Lemma~\ref{lem: nstar_diverge}), we note that (\ref{eq: n to N in prob Z2}) also implies $Z_{1, T} - Z^{\star}_{1, T} \xlongrightarrow[]{p} 0.$  By Lemma~\ref{lem: triangular CLT} and the fact that $\nat, \nbt$ diverges as $T \to \infty$ (Lemma~\ref{lem: nstar_diverge}), we have the following CLT for $Z^{\star}_{i, T}, i = \{1, 2\}$.
 \begin{align}
     & Z^{\star}_{1, T} \xlongrightarrow[]{d}  \mathcal{N}\left(0, \sigma_1^2\right), \label{eq: Z1star_CLT}\\
     &Z^{\star}_{2, T} \xlongrightarrow[]{d}   \mathcal{N}\left(0, \sigma_2^2\right), \label{eq: Z2star_CLT}
 \end{align}
 where we remark that Slutsky's theorem (Lemma~\ref{lem:slutsky}) is used with $\lim_{T \to \infty} \sigma^T_i = \sigma_i$ for $i = 1, 2$, according to Assumption \ref{assump: distribution}. Combining the above and applying Slutsky's theorem again, we derive the CLT for $Z_{1, T}$ and $Z_{2, T}$:
 \begin{align}
     & Z_{1, T} \xlongrightarrow[]{d}  \mathcal{N}\left(0, \sigma_1^2\right), \label{eq: Z1_CLT}\\
     &Z_{2, T} \xlongrightarrow[]{d}   \mathcal{N}\left(0, \sigma_2^2\right), \label{eq: Z2_CLT}
 \end{align}
 
 On the other hand, Lemma \ref{lem: high-prob-N1-n1} yields (with $\delta_T \equiv 0$)
 \begin{align*}
     f(T)\frac{N_{2, T} - \nbt}{\nbt} - \frac{- \bar\mu_1(\nat) + \bar\mu_2(\nbt) + \Delta^T}{\left(\frac{1}{2}(\nat)^{-\frac{3}{2}} + \frac{1}{2}(\nbt)^{-\frac{3}{2}} \right)\nbt} \xlongrightarrow[]{p} 0,
 \end{align*}
 or equivalently, 
  \begin{align}\label{eq: N and Z-star-raw}
     f(T)\frac{N_{2, T} - \nbt}{\nbt} - \frac{  Z^{\star}_{2, T}}{\frac{1}{2}\left(\frac{\nbt}{\nat}\right)^{\frac{3}{2}} + \frac{1}{2}} + \frac{  Z^{\star}_{1, T}}{\frac{1}{2}\left(\frac{\nat}{\nbt}\right)^{\frac{1}{2}} + \frac{1}{2}\frac{\nbt}{\nat}}\xlongrightarrow[]{p} 0.
 \end{align}
Recall that $\lx = \lim_{T \to \infty}\frac{\nbt}{\nat}$, which further implies
\[
 \lim_{T \to \infty}\frac{\frac{1}{2} (\lx)^{\frac{3}{2}} + \frac{1}{2}}{\frac{1}{2}\left(\frac{\nbt}{\nat}\right)^{\frac{3}{2}} + \frac{1}{2}} = 1\ , \ \lim_{T \to \infty}\frac{\frac{1}{2} (\lx)^{\frac{3}{2}} + \frac{1}{2}}{\left(\frac{1}{2}\left(\frac{\nat}{\nbt}\right)^{\frac{1}{2}} + \frac{1}{2}\frac{\nbt}{\nat}\right)} = \sqrt{\lx} \in [0, 1].
 \]
 We multiply the LHS of (\ref{eq: N and Z-star-raw}) by a factor of $\frac{1 + \left(\lx\right)^{\frac{3}{2}}}{2} (\in [\frac{1}{2}, 1])$, and denote by $W_2 = \frac{1 + (\lx)^{\frac{3}{2}}}{2}\frac{f(T)}{\nbt}\left(N_{2, T} - \nbt\right)$.  The fact that $Z^{\star}_{i, T}$ both converge in distribution (see \eqref{eq: Z1star_CLT} and \eqref{eq: Z2star_CLT}), combined with a use of Slutsky's theorem allow us to conclude from (\ref{eq: N and Z-star-raw}) that 
 \begin{align}\label{eq: N and Z-star}
     W_2 -   Z^{\star}_{2, T} + \sqrt{\lx}Z^{\star}_{1, T} \xlongrightarrow[]{p} 0,
 \end{align}
Combining \eqref{eq: N and Z-star} with \eqref{eq: Z1star_CLT} and \eqref{eq: Z2star_CLT}, and noticing that $Z^{\star}_{1, T}$ and $Z^{\star}_{2, T}$ are independent , we have  
 \[
 \left( W_2 -   Z^{\star}_{2, T} + \sqrt{\lx}Z^{\star}_{1, T}, Z^{\star}_{1, T}, Z^{\star}_{2, T}\right) \xlongrightarrow[]{d} \left(0, Z_1, Z_2\right),
 \]
 with $(Z_1, Z_2)$  following $ \mathcal{N}\!\left(
  \begin{pmatrix}
    0 \\ 
    0 
  \end{pmatrix},
  \begin{pmatrix}
   \sigma_1^2  & 0 \\
    0 & \sigma_2^2 
  \end{pmatrix}
\right).$ This further implies 
 \[
 \left( W_2, Z^{\star}_{1, T}, Z^{\star}_{2, T}\right) \xlongrightarrow[]{d} \left(Z_2 - \sqrt{\lx}Z_1, Z_1, Z_2\right).
 \]
 Combining the above with (\ref{eq: n to N in prob Z1}) and (\ref{eq: n to N in prob Z2}) and applying Slusky's Theorem again, we finally conclude that 
  \[
 \left( W_2, Z_{1, T}, Z_{2, T}\right) \xlongrightarrow[]{d} \left(Z_2 - \sqrt{\lx}Z_1, Z_1, Z_2\right),
 \]
 which is the desired result of Theorem \ref{thm:N1}. We thus complete the proof. 
\end{proof}

\subsection{Proof of Lemma~\ref{lem: high-prob-N1-n1}}\label{sec: proof of key lemma}
We first introduce notation $\wa$  to denote 
\begin{align}
    \wa \triangleq \nbt\left(1 +  \frac{\frac{- \bar\mu_1(\nad) + \bar\mu_2(\nbd) + \Delta^T}{\left(\frac{1}{2}(\nat)^{-\frac{3}{2}} + \frac{1}{2}(\nbt)^{-\frac{3}{2}} \right)\nbt} + \epsilon}{f(T)} \right). \label{eq: wat-def}
\end{align}
Before starting to prove the lemma, let's first make an observation that the following  event, 
\begin{align}
\left|\frac{- \bar\mu_1(\nad) + \bar\mu_2(\nbd) + \Delta^T}{\left(\frac{1}{2}(\nat)^{-\frac{3}{2}} + \frac{1}{2}(\nbt)^{-\frac{3}{2}} \right)\nbt} \right| \leq \sqrt{\frac{f(T)}{32\log f(T)}}, \label{eq: bounds on Z}
\end{align}
occurring with probability at least $1 - 2\exp\left(-\frac{f(T)}{1024 \sigma^2 \log f(T)}\right)$, which vanishes as $T \to \infty$. We defer the proof to \Cref{app: other proof}. (see Lemma \ref{lem: wat-close-to-nat}) We shall assume that (\ref{eq: bounds on Z}) holds throughout the proof.

\begin{proof}[Proof of Lemma \ref{lem: high-prob-N1-n1}]
It suffices to prove that for any fixed $\epsilon>0$, $\PP\left(N_{2, T} > \wa\right)$ and $\PP\left(N_{2, T} < \wb\right)$, or equivalently,  $\PP\left(N_{1, T} \geq T -  \wb\right)$, both vanish as $T\rightarrow\infty$. The nature of the {generalized \sf UCB1} algorithm (Algorithm \ref{algo: general-ucb1}) implies that for any $a < T$ and $i \in \{1, 2\}$
\begin{align}
    \{N_{i, T} > a \} &\subseteq \left\{\exists t \leq T-1, N_{i, t} = a, I_{i, t+1} > I_{-i, t+1} \right\}, \NNN\\
    & = \left\{\exists t \leq T-1, N_{i, t} = a, \bamu_i(a)+ \frac{f(t+1)}{\sqrt{a}} > \bamu_{-i}(t-a) + \frac{f(t+1)}{\sqrt{t - a}} \right\}, \NNN\\
    & \subseteq \left\{\exists t \leq T-1, \bamu_i(a) - \bamu_{-i}(t-a)  > \frac{f(t+1)}{\sqrt{t - a}} - \frac{f(t+1)}{\sqrt{a}} \right\}. \label{eq: index_bound_1}
\end{align}
We introduce an auxiliary threshold $\tau^1_T = T - \eta_T n^\star_{2, T}, \tau^2_T = T - \eta_T n^\star_{1, T}$, where 
\[
\eta_T = \frac{100 \sigma}{1 - 2\beta}\frac{\sqrt{\log(2 + \log(2 +f(\sqrt{\nbt})))}}{f(\sqrt{\nbt})},
\]
with $0 < \beta < \frac{1}{2}$ and $\sigma > 0$ defined in Assumption \ref{assump: distribution}, is a vanishing sequence as $T \to \infty$ (since $\nbt \to \infty$ as $T \to \infty$ by Lemma \ref{lem: nstar_diverge}). We shall further decompose eq. (\ref{eq: index_bound_1}) according to whether $\tau^i_T \leq t \leq T-1$ or $t < \tau^i_T$,
\begin{align}
    (\ref{eq: index_bound_1}) & \ \subseteq \left\{\exists t\in  [\tau^i_T , T-1],\  \bamu_i(a) - \bamu_{-i}(t-a)  > \frac{f(t+1)}{\sqrt{t - a}} - \frac{f(t+1)}{\sqrt{a}}  \right\}\label{eq:index_bound_1_near_T}\\
    & \quad\quad \bigcup \left\{\exists t < \tau^i_T,\  \bamu_i(a) - \bamu_{-i}(t-a)  > \frac{f(t+1)}{\sqrt{t - a}} - \frac{f(t+1)}{\sqrt{a}}.  \right\}\label{eq: index_bound_1_far}
\end{align}
In the sequel, we obtain probability bounds of events (\ref{eq:index_bound_1_near_T}) and (\ref{eq: index_bound_1_far}) for $i = 2, a = \wa$ and $i = 1, a = T - \wb$ respectively. 
\paragraph{Case 1. Treating (\ref{eq:index_bound_1_near_T})}
\begin{align}
    (\ref{eq:index_bound_1_near_T}) & = \left\{\exists t\in  [\tau^i_T , T-1],\  \bamu_i(a) - \bamu_{-i}(t-a)  > \frac{f(t+1)}{\sqrt{t - a}} - \frac{f(t+1)}{\sqrt{a}}  \right\}\NNN\\
    & \ \ \subseteq \left\{\exists t\in  [\tau^i_T , T-1],\  \bamu_i(a) - \bamu_{-i}(t-a)  > \frac{f(T)}{\sqrt{T - a}} - \frac{f(T)}{\sqrt{a}} \right\},\label{eq: index_bound_1_near_T_1}
\end{align}
which follows from the monotonicity of $f(t)/\sqrt{t - a}$ (decreasing) and $f(t)$ (increasing), by Assumption \ref{assump: index} on $f$. For $i = 2, a = \wa$, the RHS in eq. (\ref{eq: index_bound_1_near_T_1}) is $\frac{f(T)}{\sqrt{T - \wa}} - \frac{f(T)}{\sqrt{\wa}}$, and for $i = 1, a = T - \wb$, the RHS is $\frac{f(T)}{\sqrt{\wb}} - \frac{f(T)}{\sqrt{T - \wb}}$. In both cases, we expand the RHS in eq. (\ref{eq: index_bound_1_near_T_1}) at $\nbt$ (replacing $\wa$ / $\wb$), and get
\begin{align}
    &\frac{f(T)}{\sqrt{T - \wa}} - \frac{f(T)}{\sqrt{\wa}} \NNN\\
    &\qquad= \frac{f(T)}{\sqrt{T - \nbt}}\left(1 - \frac{1}{2}\left(\frac{T - \wa}{T - \nbt} - 1\right) + \xi_1\left(\frac{T - \wa}{T - \nbt} - 1\right)^2 \right) \NNN\\
    &\qquad\qquad\qquad\qquad  - \frac{f(T)}{\sqrt{\nbt}}\left( 1- \frac{1}{2}\left(\frac{\wa}{\nbt} - 1\right) + \xi_2\left(\frac{\wa}{\nbt} - 1\right)^2 \right), \NNN\\
    & \qquad = \frac{f(T)}{\sqrt{\nat}} - \frac{f(T)}{\sqrt{\nbt}}  + f(T)\left(\frac{1}{2}(\nat)^{-\frac{3}{2}} + \frac{1}{2}(\nbt)^{-\frac{3}{2}}\right)\left(\wa - \nbt\right)\NNN\\
    &\qquad\qquad\qquad\qquad  + f(T)\left(\xi_1(\nat)^{-\frac{5}{2}} - \xi_2(\nbt)^{-\frac{5}{2}}\right) \left(\wa - \nbt\right)^2, \NNN\\
    & \qquad = \nbt\left(\frac{1}{2}(\nat)^{-\frac{3}{2}} + \frac{1}{2}(\nbt)^{-\frac{3}{2}}\right) \epsilon -\bamu_1(\nad) + \bamu_2(\nbd)  \label{eq: index_bound_1_near_T_1_RHS_wa_main} \\
    &\qquad\qquad\qquad\qquad  + f(T)\left(\xi_1(\nat)^{-\frac{5}{2}} - \xi_2(\nbt)^{-\frac{5}{2}}\right) \left(\wa - \nbt\right)^2 \label{eq: index_bound_1_near_T_1_RHS_wa_res},
\end{align}
and (similarly)
\begin{align}
    &\frac{f(T)}{\sqrt{\wb}} - \frac{f(T)}{\sqrt{T - \wb}}  \NNN\\
    & \qquad = \nbt\left(\frac{1}{2}(\nat)^{-\frac{3}{2}} + \frac{1}{2}(\nbt)^{-\frac{3}{2}}\right) \epsilon +\bamu_1(\nad) - \bamu_2(\nbd) \label{eq: index_bound_1_near_T_1_RHS_wb_main} \\
    &\qquad\qquad\qquad\qquad  - f(T)\left(\xi_3(\nat)^{-\frac{5}{2}} - \xi_4(\nbt)^{-\frac{5}{2}}\right) \left(\wb - \nbt\right)^2 \label{eq: index_bound_1_near_T_1_RHS_wb_res},
\end{align}
where we use the fact that $\bn^{\star}_T$ solves the fluid fixed-point equations (\ref{eq: fluid system}), explicitly
\[
\frac{f(T)}{\sqrt{\nbt}} - \frac{f(T)}{\sqrt{\nat}} = \Delta^T; \quad \nat+ \nbt = T,
\]
and $\xi_i, i = 1, \dots, 4$ are constants derived from the expansion  $ \frac{1}{\sqrt{x}} = 1 - \frac{1}{2}(x - 1) + \xi(x - 1)^2$. In our case, we take $T$ large enough such that  $\frac{1}{\log f(T)} < \epsilon < \frac{f(T)}{32}$ (this is possible as $\epsilon$ is a constant and $f(T) \to \infty$ with $T$ by Assumption \ref{assump: index}). Recall also that we assume (\ref{eq: bounds on Z}). Then it follows that 
\[
\left|\frac{\wa - \nbt}{\nbt}\right|, \left|\frac{\wa - \nbt}{\nat}\right|, \left|\frac{\wb - \nbt}{\nbt}\right|, \left|\frac{\wb - \nbt}{\nat}\right| \leq \frac{1}{2}
\]
and we derive bounds $|\xi_i | \leq 1, i= 1, \dots 4$, since  $|\frac{1}{\sqrt{x}} - 1 + \frac{1}{2}(x - 1)| < (x - 1)^2$ for any $|x - 1| \leq \frac{1}{2}$. Within the range of parameters that we specify, the residual terms (\ref{eq: index_bound_1_near_T_1_RHS_wa_res}) and (\ref{eq: index_bound_1_near_T_1_RHS_wb_res}) can be bounded by 
\begin{align}
    (\ref{eq: index_bound_1_near_T_1_RHS_wa_res}) \textrm{\ and\ } (\ref{eq: index_bound_1_near_T_1_RHS_wb_res}) \geq -\frac{1}{2}\nbt\left(\frac{1}{2}(\nat)^{-\frac{3}{2}} + \frac{1}{2}(\nbt)^{-\frac{3}{2}}\right) \epsilon. \label{eq: index_bound_1_near_T_1_RHS_res_bound}
\end{align} 
Indeed, 
\begin{align*}
    (\ref{eq: index_bound_1_near_T_1_RHS_wa_res}) & = f(T)\left(\xi_1(\nat)^{-\frac{5}{2}} - \xi_2(\nbt)^{-\frac{5}{2}}\right) \left(\wa - \nbt\right)^2\\
    & \geq - 2 f(T)(\nbt)^{-\frac{5}{2}}\left(\wa - \nbt\right)^2, \qquad\qquad\qquad\qquad (\textrm{since $|\xi_i| \leq 1$ and $\nbt \leq \nat$}) \\
    & \geq - 2(\nbt)^{-\frac{1}{2}}\frac{1}{f(T)}\left(\left|\frac{- \bar\mu_1(\nad) + \bar\mu_2(\nbd) + \Delta^T}{\left(\frac{1}{2}(\nat)^{-\frac{3}{2}} + \frac{1}{2}(\nbt)^{-\frac{3}{2}} \right)\nbt}\right| + \epsilon\right)^2,\\
    & \geq - 2(\nbt)^{-\frac{1}{2}}\frac{1}{f(T)}\left(\frac{\sqrt{f(T)\epsilon}}{4\sqrt{2}} + \frac{\sqrt{f(T)\epsilon}}{4\sqrt{2}}\right)^2,  \qquad\qquad (\textrm{by bounds on $\epsilon$ and (\ref{eq: bounds on Z})}) \\
    & = -(\nbt)^{-\frac{1}{2}}\frac{\epsilon}{4} = -\frac{1}{2}\nbt \times \frac{1}{2} (\nbt)^{-\frac{3}{2}}\epsilon,\\
    &\geq -\frac{1}{2}\nbt\left(\frac{1}{2}(\nat)^{-\frac{3}{2}} + \frac{1}{2}(\nbt)^{-\frac{3}{2}}\right) \epsilon. 
\end{align*}
Bounding (\ref{eq: index_bound_1_near_T_1_RHS_wb_res}) is similar. Now plugging (\ref{eq: index_bound_1_near_T_1_RHS_res_bound}) back to (\ref{eq: index_bound_1_near_T_1_RHS_wa_main}) and (\ref{eq: index_bound_1_near_T_1_RHS_wb_main}), we get rid of the residual terms and obtain a further relaxation of eq. (\ref{eq: index_bound_1_near_T_1}). 
\begin{align}
    (\ref{eq: index_bound_1_near_T_1}) &\subseteq \left\{\exists t\in  [\tau^2_T , T-1],\  \bamu_2(\wa)  -\bamu_2(\nbd) + \bamu_1(t - \wa) - \bamu_1(\nad) > \frac{1}{4}(\nbt)^{-\frac{1}{2}}\epsilon \right\},\label{eq: index_bound_1_near_T_1_1}\\  &\qquad\qquad\qquad\qquad\qquad\qquad\qquad\qquad\qquad\qquad\qquad\qquad (\textrm{when $i = 2$ and $a = \wa$})\NNN\\
  (\ref{eq: index_bound_1_near_T_1}) &\subseteq \left\{\exists t\in  [\tau^1_T , T-1],\  \bamu_1(T - \wb)  -\bamu_1(\nad) - \bamu_2(t - T + \wb) + \bamu_2(\nbd) > \frac{1}{4}(\nbt)^{-\frac{1}{2}}\epsilon \right\},\label{eq: index_bound_1_near_T_1_2}\\  &\qquad\qquad\qquad\qquad\qquad\qquad\qquad\qquad\qquad\qquad\qquad\qquad  (\textrm{when $i = 1$ and $a = T - \wb$}). \NNN
\end{align}
Our key observation is that $\bamu_i(t)$ comes with certain ``high-probability contraction'' property that makes the above events occur with vanishing probability. In the sequel, we apply a union bound to (\ref{eq: index_bound_1_near_T_1_1}) and (\ref{eq: index_bound_1_near_T_1_2}), respectively, and further reduce the task to showing each of the following events occurs with vanishing probability:
\begin{align}
    &\left\{\left|\bamu_2(\wa)  -\bamu_2(\nbd) \right| > \frac{1}{8}(\nbt)^{-\frac{1}{2}}\epsilon\right\},\label{eq: index_bound_1_near_T_concen_1}\\
    & \left\{\exists t\in  [\tau^2_T , T-1],\ \left|\bamu_1(t - \wa) - \bamu_1(\nad) \right| > \frac{1}{8}(\nbt)^{-\frac{1}{2}}\epsilon\right\},\label{eq: index_bound_1_near_T_concen_2}\\
    &\left\{\left|\bamu_1(T - \wb)  -\bamu_1(\nad)\right| > \frac{1}{8}(\nbt)^{-\frac{1}{2}}\epsilon\right\},\label{eq: index_bound_1_near_T_concen_3}\\
    & \left\{\exists t\in  [\tau^1_T , T-1],\ \left|- \bamu_2(t - T + \wb) + \bamu_2(\nbd) \right| > \frac{1}{8}(\nbt)^{-\frac{1}{2}}\epsilon\right\}\label{eq: index_bound_1_near_T_concen_4}.
\end{align}
To proceed, we develop a generic maximal inequality (cf. Lemma \ref{lem: maximal inequality}) and apply it to all the above events. Recall that we have confined ourselves to (\ref{eq: bounds on Z}), which in turn gives us 
\[
\left|\frac{\wa - \nbt}{\nbt}\right| \leq \frac{1}{\sqrt{f(T)\log f(T)}} + \frac{\epsilon}{f(T)}.
\]
Furthermore, 
\begin{align*}
    &\left|\frac{\tau^2_T - \wa - \nat}{\nat}\right| \leq \frac{1}{\sqrt{f(T)\log f(T)}} + \frac{\epsilon}{f(T)} + \eta_T, \qquad\qquad(\textrm{using $\nbt \leq \nat$})\\
    &\left|\frac{\tau^1_T - T + \wb - \nbt}{\nbt}\right| \leq \frac{1}{\sqrt{f(T)\log f(T)}} + \frac{\epsilon}{f(T)} + \eta_T, \\
    &\left|\frac{T - \wb - \nat}{\nat}\right| \leq \frac{1}{\sqrt{f(T)\log f(T)}} + \frac{\epsilon}{f(T)},  \qquad\qquad(\textrm{using $\nbt \leq \nat$})
\end{align*}
and 
\begin{align*}
\left|\frac{\nbd - \nbt}{\nbt}\right|, \left|\frac{\nad - \nat}{\nbt}\right| \leq \delta_T.
\end{align*}
As a result, taking 
\begin{align*}
    &s^i_1 = n^{\star}_{i, T}\left(1 - \frac{1}{\sqrt{f(T)\log f(T)}} - \frac{\epsilon}{f(T)} - \eta_T - \delta_T \right),\\
    & s^i_2 = n^{\star}_{i, T} \left(1 + \frac{1}{\sqrt{f(T)\log f(T)}} + \frac{\epsilon}{f(T)} + \eta_T + \delta_T \right),
\end{align*}
events (\ref{eq: index_bound_1_near_T_concen_1})-(\ref{eq: index_bound_1_near_T_concen_4})  can all be relaxed to 
\begin{align}\label{eq: index_bound_1_near_T_concen_all}
     \left\{\max_{s^i_1 \leq u < v \leq s^i_2} |\bamu_i(u) - \bamu_i(v)| > \frac{1}{8}(\nbt)^{-\frac{1}{2}}\epsilon \right\},
\end{align}
for $i = 1, 2$. Notice that $\frac{s^i_2 - s^i_1}{s^i_1} = o(1)$ by our definition on $\delta_T$ and $\eta_T$ and that $f(T) \to \infty$ as $T \to \infty$, and that $\epsilon$ is a constant. Applying Lemma \ref{lem: maximal inequality} to (\ref{eq: index_bound_1_near_T_concen_all}) and using the fact that $\nbt \leq \nat$ again, and we conclude that 
event (\ref{eq: index_bound_1_near_T_concen_all}) occurs with probability $O(\exp\left(O(\frac{s^i_1}{s^i_2 - s^i_1})\right))$, vanishing as $T \to \infty$.  Since (\ref{eq: bounds on Z}) occurs with high probability, each event (\ref{eq: index_bound_1_near_T_concen_1}) - (\ref{eq: index_bound_1_near_T_concen_4}) is contained in $(\ref{eq: bounds on Z})^c \cup (\ref{eq: index_bound_1_near_T_concen_all})$, hence vanishes as $T \to \infty$.  Combining the above, we conclude that (\ref{eq: index_bound_1_near_T_1}), and therefore \eqref{eq:index_bound_1_near_T} occurs with vanishing probability as $T \to \infty$.

\paragraph{Case 2. Treating (\ref{eq: index_bound_1_far})}
\begin{align}
     (\ref{eq: index_bound_1_far}) & = \left\{\exists t\leq \tau^i_T,\  \bamu_i(a) - \bamu_{-i}(t-a)  > \frac{f(t+1)}{\sqrt{t - a}} - \frac{f(t+1)}{\sqrt{a}}  \right\}\NNN\\
    & \ \ \subseteq \left\{\exists t\leq \tau^i_T,\  \bamu_i(a) - \bamu_{-i}(t-a)  > \frac{f(t)}{\sqrt{t - a}} - \frac{f(T)}{\sqrt{T - a}} + \frac{f(T)}{\sqrt{T - a}} - \frac{f(T)}{\sqrt{a}} \right\}.\label{eq: index_bound_1_far_T_1}
\end{align}
The treatment of $\frac{f(T)}{\sqrt{T - a}} - \frac{f(T)}{\sqrt{a}}$ is identical to that of (\ref{eq: index_bound_1_near_T_1_RHS_wa_res}) and (\ref{eq: index_bound_1_near_T_1_RHS_wb_res}), where we recall that assuming (\ref{eq: bounds on Z}), we have for $i = 2, a = \wa$ and $T$ sufficiently large
\begin{align}\label{eq: index_bound_1_far_near_a}
    \frac{f(T)}{\sqrt{T - \wa}} - \frac{f(T)}{\sqrt{\wa}} \geq \frac{1}{4}(\nbt)^{-\frac{1}{2}}\epsilon  -\bamu_1(\nad) + \bamu_2(\nbd),
\end{align}
and for $i = 1, a = T - \wb$ and $T$ sufficiently large, 
\begin{align}\label{eq: index_bound_1_far_near_b}
\frac{f(T)}{\sqrt{\wb}} - \frac{f(T)}{\sqrt{T - \wb}}  \geq \frac{1}{4}(\nbt)^{-\frac{1}{2}}\epsilon +\bamu_1(\nad) - \bamu_2(\nbd),
\end{align}
Plugging back into (\ref{eq: index_bound_1_far_T_1}), and we have when $ i = 2, a = \wa$, 
\begin{align}
    (\ref{eq: index_bound_1_far_T_1}) & \subseteq \left\{ \exists t\leq \tau^2_T,\  \bamu_2(\wa) - \bamu_2(\nbd) - \bamu_{1}(t-\wa) + \bamu_1(\nad)  > \frac{f(t)}{\sqrt{t - \wa}} - \frac{f(T)}{\sqrt{T - \wa}} + \frac{1}{4}(\nbt)^{-\frac{1}{2}}\epsilon \right\},\NNN\\
    &\subseteq \left\{  \bamu_2(\wa) - \bamu_2(\nbd) >  \frac{1}{4}(\nbt)^{-\frac{1}{2}}\epsilon \right\}\label{eq: index_bound_1_far_T_1_far_near_a_concen}\\
    & \qquad\qquad \bigcup \left\{ \exists t\leq \tau^2_T,\  - \bamu_{1}(t-\wa) + \bamu_1(\nad)  > \frac{f(t)}{\sqrt{t - \wa}} - \frac{f(T)}{\sqrt{T - \wa}} \right\}, \label{eq: index_bound_1_far_T_1_far_near_a_lil}
\end{align}
by union bound. Similarly for $i = 1, a = T - \wb$,
\begin{align}
    (\ref{eq: index_bound_1_far_T_1}) & \subseteq  \left\{  \bamu_1(T - \wb) - \bamu_1(\nad) >  \frac{1}{4}(\nbt)^{-\frac{1}{2}}\epsilon \right\}\label{eq: index_bound_1_far_T_1_far_near_b_concen}\\
    & \qquad\qquad \bigcup \left\{ \exists t\leq \tau^1_T,\  - \bamu_{2}(t - T + \wb) + \bamu_2(\nbd)  > \frac{f(t)}{\sqrt{t - T + \wb}} - \frac{f(T)}{\sqrt{\wb}} \right\}. \label{eq: index_bound_1_far_T_1_far_near_b_lil}
\end{align}
Note that  (i) (\ref{eq: index_bound_1_far_T_1_far_near_a_concen}) implies (\ref{eq: index_bound_1_near_T_concen_1}) and (ii) (\ref{eq: index_bound_1_far_T_1_far_near_b_concen}) implies (\ref{eq: index_bound_1_near_T_concen_3}), for which we have already shown to have vanishing probability as $T \to \infty$. Thus it boils down to treat events (\ref{eq: index_bound_1_far_T_1_far_near_a_lil}) and (\ref{eq: index_bound_1_far_T_1_far_near_b_lil}).

By Assumption \ref{assump: index}, for any $t < T$, $\frac{f(t)}{f(T)} \geq \left(\frac{t}{T}\right)^\beta$ for index $\beta < \frac{1}{2}$. Therefore,
\begin{align*}
    \frac{f(t)}{\sqrt{t - a}}\left(\frac{f(T)}{\sqrt{T - a}}\right)^{-1} & = \frac{f(t)}{f(T)} \cdot\frac{\sqrt{T - a}}{\sqrt{t - a}}\\
    & \geq  \left(\frac{t}{T}\right)^\beta\frac{\sqrt{T - a}}{\sqrt{t - a}} \geq   \left(\frac{t - a}{T - a}\right)^\beta\frac{\sqrt{T - a}}{\sqrt{t - a}} =  \left(\frac{T - a}{t - a}\right)^{-\beta + \frac{1}{2}}\\
    & \geq 1 + \left(\frac{1}{2} - \beta\right)\frac{T - t}{t - a},
\end{align*}
for any $ a< t$. In the case of (\ref{eq: index_bound_1_far_T_1_far_near_a_lil}), $i = 2$ and $a = \wa$, we know that $t \leq \tau^2_T = T - \eta_T \nat$. Meanwhile (under (\ref{eq: bounds on Z})) $\wa \geq \left(1 - \frac{1}{\sqrt{f(T)\log f(T)}} - \frac{\epsilon}{f(T)}\right)\nbt$, we thus have
\begin{align*}
\frac{T - t}{t - \wa} &\geq \frac{T - (T - \eta_T \nat)}{T - \eta_T \nat - \left(1 - \frac{1}{\sqrt{f(T)\log f(T)}} - \frac{\epsilon}{f(T)}\right)\nbt} = \frac{\eta_T \nat}{\nat + \left(\frac{1}{\sqrt{f(T)\log f(T)}} + \frac{\epsilon}{f(T)}\right)\nbt},\\
&\geq \frac{1}{2}\eta_T.  \qquad\qquad\qquad\qquad (\textrm{since $\nbt \leq \nat$ and for $T$ sufficiently large})
\end{align*}
Similarly, in the case of (\ref{eq: index_bound_1_far_T_1_far_near_b_lil}), i.e. $i = 1$ and $a = T - \wb$, we know that $t \leq \tau^1_T = T - \eta_T \nbt$, we also have for $T$ sufficiently large 
\[
\frac{T - t}{t - T + \wb} \geq \frac{1}{2}\eta_T.
\]
In both cases, we have for $T$ sufficiently large, for any $ t\leq \tau^i_T$, under (\ref{eq: bounds on Z}),
\[
\frac{f(t)}{\sqrt{t - a}} - \frac{f(T)}{\sqrt{ T - a}} \geq \frac{1 - 2\beta}{10}\eta_T \frac{f(t)}{\sqrt{t - a}}.
\]
Plugging back into (\ref{eq: index_bound_1_far_T_1_far_near_a_lil}), we have that 
\begin{align}
    (\ref{eq: index_bound_1_far_T_1_far_near_a_lil}) &\subseteq \left\{ \exists t\leq \tau^2_T,\  - \bamu_{1}(t-\wa) + \bamu_1(\nad)  > \frac{1 - 2\beta}{10}\eta_T \frac{f(t)}{\sqrt{t - \wa}} \right\},\NNN\\
    &\subseteq \left\{ \exists t\leq \tau^2_T,\  \left|-\mu_1 + \bamu_1(\nad) \right|  > \frac{1 - 2\beta}{20}\eta_T \frac{f(t)}{\sqrt{t - \wa}} \right\}\NNN\\
    & \qquad\qquad \bigcup \left\{ \exists t\leq \tau^2_T,\  \left| - \bamu_{1}(t-\wa)  + \mu_1 \right|  > \frac{1 - 2\beta}{20}\eta_T \frac{f(t)}{\sqrt{t - \wa}} \right\}, \NNN\\
    &\subseteq \left\{   \left|-\mu_1 + \bamu_1(\nad) \right|  > \frac{1 - 2\beta}{20}\eta_T \frac{f(\tau^2_T)}{\sqrt{\tau^2_T - \wa}} \right\} \label{eq: a_lil_near}\\
    & \qquad\qquad \bigcup \left\{ \exists t\leq \tau^2_T,\  \left| - \bamu_{1}(t-\wa)  + \mu_1 \right|  > \frac{1 - 2\beta}{20}\eta_T \frac{f(t)}{\sqrt{t - \wa}} \right\}. \label{eq: a_lil_far}
\end{align}
By Assumption \ref{assump: index} and the fact that $T > \tau^2_T$, the RHS of (\ref{eq: a_lil_near}) is lower bounded by $\frac{1 - 2\beta}{20}\eta_T \frac{f(T)}{\sqrt{T}}$. By the sub-Gaussianity of $\bar\mu_1 - \mu_1$ (Lemma~\ref{lem: sub_gaussian}), 
\[
\PP\left(\left|\left| -\mu_1 + \bamu_1(\nad) \right|  > \frac{1 - 2\beta}{20}\eta_T \frac{f(T)}{\sqrt{T}} \right|\right) \leq 2\exp\left(- \frac{(1 - 2 \beta)^2 \eta_T^2 f(T)^2\nad}{2 \times 20^2 (\sigma_2)^2 T }\right),
\]
which vanishes as $T \to \infty$ because $\frac{T}{\nad} \sim \frac{T}{\nat} = \frac{\nat + \nbt}{\nat} \leq 2$ for $T$ sufficiently large, and $\lim_{T \to \infty}f(T)\eta_T = \infty$ by definition of $\eta_T$. Next let's turn to (\ref{eq: a_lil_far}). 
\begin{align}
   (\ref{eq: a_lil_far}) &\subseteq \left\{ \exists t\geq \wa + 1,\  \left| - \bamu_{1}(t-\wa)  + \mu_1 \right|  > \frac{1 - 2\beta}{20}\eta_T \frac{f(t)}{\sqrt{t - \wa}} \right\},\NNN\\
   &\subseteq \left\{ \exists \wa +1 \leq t\leq \wa + q_T,\  \left| - \bamu_{1}(t-\wa)  + \mu_1 \right|  > \frac{1 - 2\beta}{20}\eta_T \frac{f(t)}{\sqrt{t - \wa}} \right\}, \label{eq: a_lil_far_1}\\
   &\ \ \ \ \bigcup \left\{ \exists t\geq \wa + q_T,\  \left| - \bamu_{1}(t-\wa)  + \mu_1 \right|  > \frac{1 - 2\beta}{20}\eta_T \frac{f(t)}{\sqrt{t - \wa}} \right\}. \label{eq: a_lil_far_2}
\end{align}
Here $q_T = \log(2 + \log(2 + \log(2 +f(\sqrt{\nbt}))))$. Recall by definition, $\eta_T = \frac{100 \sigma}{1 - 2\beta}\frac{\sqrt{\log(2 + \log(2 +f(\sqrt{\nbt})))}}{f(\sqrt{\nbt})}$. Under (\ref{eq: bounds on Z}), we have $\wa > \sqrt{\nbt}$ for sufficiently large $T$. Thus for all $t$ of consideration in (\ref{eq: a_lil_far_1}) and (\ref{eq: a_lil_far_2}), it all holds true that 
\[
\eta_T \geq \frac{100 \sigma}{1 - 2\beta}\frac{\sqrt{\log(2 + \log(2 +f(t)))}}{f(t)}, 
\]
due to the monotonicity (decreasing) of $\frac{\sqrt{\log(2 + \log(2 + f(t)) )}}{f(t)}$ from Assumption \ref{assump: index}. By a union bound
\begin{align}
     \PP((\ref{eq: a_lil_far_1})) &\leq \PP\left(\exists \wa +1 \leq t\leq \wa + q_T,\  \left| - \bamu_{1}(t-\wa)  + \mu_1 \right|  > 5 \frac{\sqrt{\log(2 + \log(2 +f(t)))}}{\sqrt{t - \wa}} \right),\NNN\\
    & \leq \sum_{j = 1}^{q_T}\PP\left(\left|-\bamu_1(j) + \mu_1\right| > \frac{5 \sigma \sqrt{\log(2 + \log(2 +f( j + \wa)))}}{\sqrt{j}}\right),\NNN\\
    & \leq \sum_{j = 1}^{q_T}\PP\left(\left|-\bamu_1(j) + \mu_1\right| > \frac{5 \sigma \sqrt{\log(2 + \log(2 +f( \sqrt{\nbt})))}}{\sqrt{j}}\right), \NNN\\
    &\qquad\qquad\qquad\qquad\qquad\qquad\qquad\qquad\qquad (\textrm{under (\ref{eq: bounds on Z}) and $T$ sufficiently large})\NNN\\
    & \leq 2\sum_{j = 1}^{q_T}\exp\left(-\frac{j}{2\sigma^2}\frac{25 \sigma^2 \left(\log(2 + \log(2 +f( \sqrt{\nbt})))\right)}{j}\right),  \qquad\qquad\qquad\qquad (\textrm{sub-Gaussian})\NNN\\
    &= 2 q_T \exp\left(- \frac{25}{2}\sigma^2 \left(\log(2 + \log(2 +f( \sqrt{\nbt})))\right)\right)\NNN\\
    &= 2\log(2 + \log(2 + \log(2 +f(\sqrt{\nbt})))) \exp\left(- \frac{25}{2}\sigma^2 \left(\log(2 + \log(2 +f( \sqrt{\nbt})))\right)\right),\NNN
\end{align}
which occurs with vanishing probability as $\nbt \to \infty$ with $T$. While for (\ref{eq: a_lil_far_2}), by bounds on $\eta_T$, we have
\begin{align}
    (\ref{eq: a_lil_far_2}) &= \left\{ \exists j \geq q_T,\  \left| - \bamu_{1}(j)  + \mu_1 \right|  > \frac{1 - 2\beta}{20}\eta_T \frac{f(j + \wa)}{\sqrt{j}} \right\}, \NNN\\
    &\subseteq \left\{ \exists j \geq q_T,\  \left| - \bamu_{1}(j)  + \mu_1 \right|  > 5 \sigma \frac{\sqrt{\log(2 + \log (2 + j))}}{\sqrt{j}} \right\}. \NNN
\end{align}
Set $\varrho_T \triangleq \frac{1}{\log(2 + 2 q_T)}$. Since $q_T = \omega(1)$, $\varrho_T = o(1)$. We have, for $j \geq q_T$, $\log(2 + 2 j) \geq (\varrho_T)^{-1}$. And thus, 
\[
1.5\sigma^T_1 \sqrt{2.5  \log \left(\frac{\log(2 + 1.25 j)}{\varrho_T}\right)} \leq 1.5\sigma^T_1 \sqrt{ 2 \times 2.5  \log \left(\log(2 + 2 j)\right)} \leq 5 \sigma\sqrt{\log(2 + \log(2 + j))} .
\]
By Lemma \ref{lem: anytime-LIL} with $\theta = 0.25$ and $\delta = \varrho_T$, the above implies that $\PP(\eqref{eq: a_lil_far_2}) \leq \frac{2.25}{0.25}\left(\frac{1}{\varrho_T \log (1.25)}\right)^{1.25}$, which vanishes as $T \to \infty$. Now combining the above, we have that (\ref{eq: a_lil_far}) occurs with vanishing probability as $T \to \infty$. Combining with results on (\ref{eq: a_lil_near}), together they imply that (\ref{eq: index_bound_1_far_T_1_far_near_a_lil}) occurs with vanishing probability as $T \to \infty.$ 

Argument for (\ref{eq: index_bound_1_far_T_1_far_near_b_lil}) is nearly identical, where we decompose according to 
\begin{align}
   &(\ref{eq: index_bound_1_far_T_1_far_near_b_lil}) \NNN\\
  & \qquad  \subseteq \left\{   \left|-\mu_2 + \bamu_2(\nbd) \right|  > \frac{1 - 2\beta}{20}\eta_T \frac{f(\tau^1_T)}{\sqrt{\tau^1_T - T + \wb}} \right\} \label{eq: b_lil_near}\\
    & \qquad\qquad \bigcup \left\{ \exists t\leq \tau^1_T,\  \left| - \bamu_{2}(t- T + \wb)  + \mu_2 \right|  > \frac{1 - 2\beta}{20}\eta_T \frac{f(t)}{\sqrt{t - T +  \wb}} \right\}. \label{eq: b_lil_far}
\end{align}
Recall that $\tau^1_T = T - \eta_T \nbt$. We relax RHS in (\ref{eq: b_lil_near}) to $\frac{1 - 2\beta}{20}\eta_T\frac{f(T)}{\sqrt{\wb}}$, noting that $|\frac{\wb - \nbt}{\nbt}| \to 1$ and $|\frac{\nbd - \nbt}{\nbt}| \to 1$ as $T  \to \infty$ under (\ref{eq: bounds on Z}), then applying Chebyshev's inequality to get
\begin{align*}
    \PP\left((\ref{eq: b_lil_near})\right) \leq \frac{20^2 \sigma^2}{(1 - 2 \beta)^2}\cdot \frac{1}{(\eta_T f(T))^2},
\end{align*}
which vanishes as $T \to \infty$ since $\eta_T f(T) \to \infty$. We further decompose (\ref{eq: b_lil_far}) as
\begin{align}
   & (\ref{eq: b_lil_far}) \NNN\\
   &\subseteq \left\{ \exists T - \wb +1 \leq t\leq T - \wb + q_T,\  \left| - \bamu_{2}(t-  T + \wb)  + \mu_2 \right|  > \frac{1 - 2\beta}{20}\eta_T \frac{f(t)}{\sqrt{t - T + \wb}} \right\}, \label{eq: b_lil_far_1}\\
   &\ \ \ \ \bigcup \left\{ \exists t\geq T - \wb + q_T,\  \left| - \bamu_{2}(t- T + \wb)  + \mu_2 \right|  > \frac{1 - 2\beta}{20}\eta_T \frac{f(t)}{\sqrt{t - T +  \wb}} \right\}. \label{eq: b_lil_far_2}
\end{align}
Recall that $T = \nat + \nbt$ and $\left|\frac{\wb - \nbt}{\nbt}\right| \to 1$ as $T \to \infty$, assuming (\ref{eq: bounds on Z}). Thus $T - \wb > \sqrt{\nbt}$ for $T$ sufficiently large. Therefore, for all $t$ in (\ref{eq: b_lil_far_1}) and (\ref{eq: b_lil_far_2}), we have 
\[
\eta_T \geq \frac{100 \sigma}{1 - 2 \beta}\frac{\sqrt{\log(2 + \log(2 + f(t)))}}{f(t)}.
\]
That (\ref{eq: b_lil_far_1}) occurs with vanishing probability again follows from a union bound combined with the sub-Gaussianity of $\bar\mu_2(j) - \mu_2$, and 
\begin{align*}
    (\ref{eq: b_lil_far_2}) \subseteq \left\{\exists j \geq q_T, |-\bamu_2(j) + \mu_2| > 5 \sigma \frac{\sqrt{\log(2 + \log(2 + f(j)))}}{\sqrt{j}}\right\}
\end{align*}
occurring with vanishing probability due to Lemma \ref{lem: anytime-LIL}. These, altogether, conclude the treatment of (\ref{eq: index_bound_1_far}). 

Combining Case 1 and Case 2, we complete the proof of Lemma \ref{lem: high-prob-N1-n1}.
\end{proof}

\subsection{Proof of other lemmas}\label{app: other proof}

\noindent\textbf{Proof of Lemma~\ref{lem: maximal inequality}.} The statement of Lemma~\ref{lem: maximal inequality} is as follows.
\begin{quote}
   Suppose $Y_i, i \geq 1$ are i.i.d. centered $\sigma$-sub-Gaussian. Then for any $1 \leq s_1 < s_2$, we have
\[
\PP\left(\max_{s_1 \leq u < v \leq s_2} \left| \frac{\sum_{i = 1}^u Y_i}{u} - \frac{\sum_{i = 1}^v Y_i}{v}\right| >  a \right) \leq 8\exp\left(-\frac{a^2 s_1^2}{72 \sigma^2 (s_2 - s_1)} \right).
\]    
\end{quote}

\begin{proof}
    Note that
    \begin{align*}
        \max_{s_1 \leq u < v \leq s_2} \left| \frac{\sum_{i = 1}^u Y_i}{u} - \frac{\sum_{i = 1}^v Y_i}{v}\right| &\leq \max_{s_1 \leq u \leq s_2} \left| \frac{\sum_{i = 1}^u Y_i}{u} - \frac{\sum_{i = 1}^{s_1} Y_i}{s_1}\right| + \max_{s_1 \leq v \leq s_2} \left| \frac{\sum_{i = 1}^{s_1} Y_i}{s_1} - \frac{\sum_{i = 1}^v Y_i}{v}\right| \\
        & = 2\max_{s_1 \leq u \leq s_2} \left| \frac{\sum_{i = 1}^u Y_i}{u} - \frac{\sum_{i = 1}^{s_1} Y_i}{s_1}\right|,\\
        & = 2\max_{s_1 \leq u \leq s_2} \left| \frac{\sum_{i = 1}^{s_1} Y_i + \sum_{i = s_1+1}^u Y_i}{s_1 + u - s_1} - \frac{\sum_{i = 1}^{s_1} Y_i}{s_1}\right|,\\
        & \leq \frac{2(s_2 - s_1)\left|\sum_{i = 1}^{s_1} Y_i\right|}{s_1 s_2} + \frac{2}{s_1}\max_{s_1 \leq u < s_2} \left|\sum_{i = s_1+1}^u Y_i\right|.
    \end{align*}
By Lemma~\ref{lem: sub_gaussian}, $\sum_{i = 1}^{s_1} Y_i$ is also sub-Gaussian with variance proxy $\sigma^2 s_1$, which implies that
\begin{align*}
        \PP\left(\frac{2(s_2 - s_1)\left|\sum_{i = 1}^{s_1} Y_i\right|}{s_1 s_2} \geq \frac{a}{2} \right) \leq  2 \exp\left(- \frac{1}{2\sigma^2 s_1}\cdot\frac{a^2 s_1^2 s_2^2}{16(s_2 -s_1)^2}\right) = 2 \exp\left(-\frac{a^2 s_1 s_2^2}{32\sigma^2(s_2 - s_1)^2}\right).
    \end{align*}
    On the other hand, apply the Etemadi's inequality (Lemma~\ref{lm: etemadi}):
    \begin{align*}
        \PP\left(\max_{s_1 \leq u < s_2} \left|\sum_{i = s_1+1}^u Y_i\right| > \frac{s_1 a }{2} \right) &=  \PP\left(\max_{1 \leq u \leq s_2-s_1} \left|\sum_{i = 1}^u Y'_i\right| > \frac{a s_1  }{2} \right),\\
        & \leq 3 \max_{1 \leq u \leq s_2-s_1} \PP\left(\left|\sum_{i = 1}^u Y'_i\right|  >\frac{a s_1 }{6}\right),\\
        & \leq 6 \max_{1 \leq u \leq s_2-s_1} \exp\left(-\frac{a^2 s_1^2}{72 \sigma^2 u} \right), \\
        & =  6\exp\left(-\frac{a^2 s_1^2}{72 \sigma^2 (s_2 - s_1)} \right),
    \end{align*}
    where $Y'_i$ are \emph{i.i.d.} copies of $Y_i$, the second inequality is the Etemadi's inequality, and the third inequality follows from sub-Gaussianity of $\sum_{i = 1}^u Y'_i$.  Combining the two terms and we get that 
        \begin{align*}
        & \PP\left(\max_{s_1 \leq u < v \leq s_2} \left| \frac{\sum_{i = 1}^u Y_i}{u} - \frac{\sum_{i = 1}^v Y_i}{v}\right| \geq a \right) \\
        & \ \ \leq \PP\left(\frac{2(s_2 - s_1)\left|\sum_{i = 1}^{s_1} Y_i\right|}{s_1 s_2} \geq \frac{a}{2} \right)  + \PP\left(\max_{s_1 \leq u < s_2} \left|\sum_{i = s_1+1}^u Y_i\right| > \frac{s_1 a }{2} \right),\\
        & \leq 8\exp\left(-\frac{a^2 s_1^2}{72 \sigma^2 (s_2 - s_1)} \right).
    \end{align*} 
\end{proof}

\noindent\textbf{Proof of Lemma~\ref{lem: bar-mu-clt}.} The statement of Lemma~\ref{lem: bar-mu-clt} is as follows.
\begin{quote}
   Let $M^\delta_T \triangleq \frac{- \bar\mu_1(\nad) + \bar\mu_2(\nbd) + \Delta^T_2}{\left(\frac{1}{2}(\nat)^{-\frac{3}{2}} + \frac{1}{2}(\nbt)^{-\frac{3}{2}} \right)\nbt}$. Then $M^{\delta}_T \xlongrightarrow[]{d} \mathcal{N}\left(0, \frac{4\lx\sigma_1^2 + 4 \sigma_2^2}{\left(1 + (\lx)^{\frac{3}{2}}\right)^2}\right).$
\end{quote}

\begin{proof}
    Note that $M^\delta_T$ has mean zero, and variance
    \[
    \sigma^2_{\delta, T} = \frac{(\sigma^T_1)^2\frac{1}{\nad} + (\sigma^T_2)^2\frac{1}{\nbd}}{\left(\frac{1}{2}(\nat)^{-\frac{3}{2}} + \frac{1}{2}(\nbt)^{-\frac{3}{2}} \right)^2(\nbt)^2}.
    \]
    Since the arm rewards are sub-Gaussian (Assumption \ref{assump: distribution}), the Lyapunov condition of the triangular array CLT is satisfied, and by Lemma~\ref{lem: triangular CLT} we have $\frac{1}{\sigma_{\delta, T}} M^{\delta}_T \xlongrightarrow[]{d} \mathcal{N}(0, 1)$. Notice that 
    \[
    \lim_{T \to \infty} \sigma_{\delta, T} = \frac{2\sqrt{\sigma_1^2 \lx + \sigma_2^2 }}{1 + (\lx)^{\frac{3}{2}}}.
    \]
    The desired result follows from Slutsky's theorem (Lemma~\ref{lem:slutsky}).
\end{proof}

\begin{lemma}\label{lem: wat-close-to-nat}
Let $M^\delta_T$ as defined in the previous lemma. It holds true that
\begin{align*}
     \PP\left(\left|M^\delta_T\right| \geq m\right) \leq  2\exp\left(-\frac{m^2}{32 \sigma^2}\right).
\end{align*}
\end{lemma}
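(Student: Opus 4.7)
The plan is to show that $M^\delta_T$ is sub-Gaussian with an explicit variance proxy, then apply the standard sub-Gaussian tail bound. First, I rewrite the numerator as
\[
-\bar\mu_1(\nad) + \bar\mu_2(\nbd) + \Delta^T_2 = -\bigl(\bar\mu_1(\nad) - \mu^T_1\bigr) + \bigl(\bar\mu_2(\nbd) - \mu^T_2\bigr),
\]
which is a centered random variable. Since the rewards $X_{i,j}^T$ are $\sigma^T_i$-sub-Gaussian by Assumption~\ref{assump: distribution}, and since $\sigma^T_i \le \sigma$, Lemma~\ref{lem: sub_gaussian} gives that $\bar\mu_1(\nad) - \mu^T_1$ is $(\sigma/\sqrt{\nad})$-sub-Gaussian and $\bar\mu_2(\nbd) - \mu^T_2$ is $(\sigma/\sqrt{\nbd})$-sub-Gaussian. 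The two terms involve disjoint collections of rewards from different arms and are therefore independent, so their difference is sub-Gaussian with variance proxy $\sigma^2\!\left(\tfrac{1}{\nad} + \tfrac{1}{\nbd}\right)$.

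Dividing by the deterministic positive denominator, I conclude that $M^\delta_T$ is sub-Gaussian with variance proxy
\[
\tau^2_T \;\triangleq\; \frac{\sigma^2\!\left(\tfrac{1}{\nad} + \tfrac{1}{\nbd}\right)}{\left(\tfrac{1}{2}(\nat)^{-3/2} + \tfrac{1}{2}(\nbt)^{-3/2}\right)^{2}(\nbt)^{2}}.
\]
It remains to show $\tau^2_T \le 16\sigma^2$. Using $\delta_T \le \tfrac12$, I have $\nad \ge \tfrac12 \nat$ and $\nbd \ge \tfrac12 \nbt$, so the numerator is bounded by $2\sigma^2\bigl(\tfrac{1}{\nat} + \tfrac{1}{\nbt}\bigr)$. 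By Lemma~\ref{lem: nstar_diverge}, $\nat \ge \nbt$, hence $\tfrac{1}{\nat} + \tfrac{1}{\nbt} \le \tfrac{2}{\nbt}$ and $(\nat)^{-3/2} + (\nbt)^{-3/2} \ge (\nbt)^{-3/2}$. Combining these two monotonicity bounds,
\[
\tau^2_T \;\le\; \frac{2\sigma^2 \cdot \tfrac{2}{\nbt}}{\tfrac14 (\nbt)^{-3}(\nbt)^{2}} \;=\; \frac{4\sigma^2/\nbt}{\tfrac14/\nbt} \;=\; 16\sigma^2.
\]

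Finally, applying the standard sub-Gaussian tail bound to $M^\delta_T$ with variance proxy at most $16\sigma^2$ yields
\[
\PP\bigl(\lvert M^\delta_T\rvert \ge m\bigr) \;\le\; 2\exp\!\left(-\frac{m^2}{2\cdot 16\sigma^2}\right) \;=\; 2\exp\!\left(-\frac{m^2}{32\sigma^2}\right),
\]
which is the claimed inequality. The proof is essentially a clean sub-Gaussian concentration argument; the only mildly delicate step is the algebraic reduction $\tau^2_T \le 16\sigma^2$, which relies crucially on $\delta_T \le \tfrac12$ and $\nat \ge \nbt$. No additional technical obstacle is anticipated.
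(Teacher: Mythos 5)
Your proof is correct and takes essentially the same route as the paper: bound the sub-Gaussian variance proxy of $M^\delta_T$ by $16\sigma^2$ using $\nad \ge \tfrac12\nat$, $\nbd \ge \tfrac12\nbt$, and $\nat \ge \nbt$, then apply the standard sub-Gaussian tail bound. One small advantage of your version is that you phrase everything consistently in terms of the variance proxy (rather than the variance, as the paper's proof does), and you correctly invoke the hypothesis $\delta_T \le \tfrac12$ from Lemma~\ref{lem: high-prob-N1-n1}, whereas the paper's text has a typo stating $\delta_T \ge \tfrac12$.
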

\begin{proof}
   The random variable $M^\delta_T$ has zero mean. It's variance is 
   \begin{align*}
       \textrm{Var}\left(\frac{\bar\mu_1(\nad) - \bar\mu_2(\nbd) - \Delta^T}{\left(\frac{1}{2}(\nat)^{-\frac{3}{2}} + \frac{1}{2}(\nbt)^{-\frac{3}{2}} \right)\nbt}\right) &= \frac{\frac{(\sigma^T_{1})^2}{\nad} + \frac{(\sigma^T_{2})^2}{\nbd}}{\left(\frac{1}{2}(\nat)^{-\frac{3}{2}} + \frac{1}{2}(\nbt)^{-\frac{3}{2}} \right)^2(\nbt)^2} \\
       & \leq {16 \sigma^2},
   \end{align*}
   where we apply Lemma \ref{lem: nstar_diverge} with $\nat > \nbt$ and we use the fact that $\nbd \geq \frac{1}{2}\nbt$ since $\delta_T \geq \frac{1}{2}$ by definition, and that $0 < \sigma_1 , \sigma_2 \leq \sigma$ by Assumption~\ref{assump: distribution}. By Lemma~\ref{lem: sub_gaussian}, $M^\delta_T$ is sub-Gaussian. Combining the above thus leads to the desired probability bound. 
\end{proof}

\section{Supplementary Materials on Sampling Bias}\label{app:sampling_bias}
\subsection{Sketch analysis of the stylized model}
In this section we provide a sketch analysis of the sample mean in the stylized model. Recall that in the case of {\sf canonical UCB}, we have $f(t) = \sqrt{\rho \log T}$ and the stylized model specifies  a sequence $\delta_T: \delta_T = \omega\left((\log T)^{-\frac{1}{2}}\right)$ and $\delta_T = o(1)$ as prescribed input, and do: 
\begin{enumerate}
    \item Generate $n^{\delta}_{i, T} \triangleq (1 - \delta_T)n^{\star}_{i, T}$\ \ \emph{i.i.d.} rewards from arm $i$, $i = 1, 2$
    \item Compute the normalized sample mean from the two arms:
    \begin{align*}
        Z^{\delta}_{i, T} \triangleq \sqrt{n^{\delta}_{i, T}}\left(\bamu^T_i\left(n^{\delta}_{i, T}\right) - \mu^T_{i}  \right), \ i = 1, 2.
    \end{align*}
    \item Compute 
    \begin{align*}
        \tilde N_{2,T}&= \nbt\left(1 +\frac{2\left(Z^{\delta}_{2, T} - Z^{\delta}_{1, T}\sqrt{\lx}\right)}{\left(1 + (\lx)^{\frac{3}{2}}\right)\sqrt{\rho\log T}}\right), \ \ \tilde N_{1, T} = T - \tilde N_{2, T}.
    \end{align*}
    \item Sample $\tilde N_{i, T} - n^{\delta}_{i, T}$ more \emph{i.i.d.} rewards from the two arms, respectively.
\end{enumerate}
The sample mean in this stylized model, denoted by $\tilde \mu_{i, T}$, is the combination of 
\begin{enumerate}
    \item $n^{\delta}_{i, T}$ number of data collected in Step 1, with sample mean $\hat\mu = \bamu^T_i\left(n^{\delta}_{i, T}\right)$
    \item $\left(\tilde N_{i, T} - n^{\delta}_{i, T}\right)$  number of data collected in Step 4, with sample mean $\hat\mu'$.
\end{enumerate}
Thus we have 
\begin{align}
    \tilde \mu_{2, T} &= \frac{n^{\delta}_{2, T} \hat\mu + \left(\tilde N_{2, T} - n^{\delta}_{2, T}\right)\hat\mu'}{\tilde N_{2, T}},\NNN\\
    &= \mu_2 + \frac{\sqrt{(1 - \delta_T)\nbt} Z^{\delta}_{2, T} + \sqrt{\left|\delta_T + \frac{2\left(Z^{\delta}_{2, T} - Z^{\delta}_{1, T}\sqrt{\lx}\right)}{\left(1 + (\lx)^{\frac{3}{2}}\right)\sqrt{\rho\log T}}\right|\nbt} Z'_2}{\nbt\left(1 + \frac{2\left(Z^{\delta}_{2, T} - Z^{\delta}_{1, T}\sqrt{\lx}\right)}{\left(1 + (\lx)^{\frac{3}{2}}\right)\sqrt{\rho\log T}}\right)},\NNN\\
    &= \mu_2 + \frac{\sqrt{(1 - \delta_T)\nbt} Z^{\delta}_{2, T}}{\nbt} - \frac{\sqrt{(1 - \delta_T)\nbt}Z^{\delta}_{2, T}}{\nbt}\frac{\frac{2\left(Z^{\delta}_{2, T} - Z^{\delta}_{1, T}\sqrt{\lx}\right)}{\left(1 + (\lx)^{\frac{3}{2}}\right)\sqrt{\rho\log T}}}{1 + \frac{2\left(Z^{\delta}_{2, T} - Z^{\delta}_{1, T}\sqrt{\lx}\right)}{\left(1 + (\lx)^{\frac{3}{2}}\right)\sqrt{\rho\log T}}} \label{eq: key term-bias}\\   &\qquad\qquad\qquad\qquad + \frac{\sqrt{\left|\delta_T +\frac{2\left(Z^{\delta}_{2, T} - Z^{\delta}_{1, T}\sqrt{\lx}\right)}{\left(1 + (\lx)^{\frac{3}{2}}\right)\sqrt{\rho\log T}}\right|\nbt} Z'_2}{\nbt\left(1 + \frac{2\left(Z^{\delta}_{2, T} - Z^{\delta}_{1, T}\sqrt{\lx}\right)}{\left(1 + (\lx)^{\frac{3}{2}}\right)\sqrt{\rho\log T}}\right)}, \label{eq: no-bias-new-sample}
\end{align}
where $Z^{\delta}_{i, T} \xlongrightarrow[]{d} \mathcal{N}\left(0, \sigma_i^2\right)$ and $Z'_2 \triangleq \sqrt{\tilde N_{i, T} - n^{\delta}_{i, T}}\left(\hat\mu' - \mu_2\right) \xlongrightarrow[]{d} 
 \mathcal{N}\left(0, \sigma_2^2\right)$ by CLT. The three random variables are independent. Our key observation is that the last term in \eqref{eq: key term-bias} is the only term that contributes bias, because in the term of \eqref{eq: no-bias-new-sample}, $Z'_2$ is asymptotically normal and independent of both $Z^{\delta}_{i, T}$, while the other term in \eqref{eq: key term-bias} is also asymptotically normal and unbiased. With some straightforward calculation, we derive (up to the leading order) the precise random variable of interested in \eqref{eq: key term-bias} can be explicitly written as 
 \begin{align}
     \frac{2}{\left(1 + (\lx)^{\frac{3}{2}}\right)\sqrt{\rho}}\left(Z^{\delta}_{2, T} - Z^{\delta}_{1, T}\sqrt{\lx}\right)Z^{\delta}_{2, T} \frac{1}{\sqrt{\nbt\log T}}.
 \end{align}
In particular, we use again that $Z^{\delta}_{1, T}, Z^{\delta}_{2, T}$ are independent and that ${\rm Var}\left(Z^{\delta}_{2, T}\right)^2 = \sigma_2^2$, we conclude that the leading bias term is
\[
- \frac{2\sigma_2^2}{\left(1 + (\lx)^{\frac{3}{2}}\right)\sqrt{\rho \nbt \log T}}.
\]
Similarly, we can derive the leading bias term for arm $1$, which is
\[
- \frac{2\sigma_1^2}{\left(1 + (\lx)^{-\frac{3}{2}}\right) \sqrt{\rho \nat \log T}}.
\]
Combining with Lemma \ref{lem: three gap regimes}, we effectively derive the bias in the stylized model, and hence in Conjecture \ref{conjecture:sampling-bias-UCB}. In general, a rigorous characterization of the sample bias in the true UCB bandit system is challenging and beyond the scope of this paper, we hence leave it for further study.\\

\noindent{\bf Numerics.} We conduct numerical experiments of two-armed stochastic bandits under the {\sf UCB1} algorithm ($f(t)=\sqrt{2\log t}$), and consider $\mathcal P_i$ being $\mathcal N(\mu_i,\sigma_i^2)$, $i=1,2$. There are 10000 repetitions for each $T$ in a range of values from $10^3$ to $10^{13}$. To improve the efficiency of the simulation for large values of $T$, we leverage the typical deviation characterization from Theorem \ref{thm:N1} to pull arms in a carefully chosen batch size that is just smaller (in scaling) than the typical deviation of that arm's number of pulls. Thus we only need to generate one total reward (a Normal random variable) for the batched pull, hence effectively speeding up the simulation. In the moderate-small arm gap regime, whenever the algorithm chooses an arm, it is pulled in a batch size of $\frac{0.02T}{\log T}$. In the large gap regime, we use a batch size of $\frac{0.02 T}{\log T}$ only when the superior arm is pulled.

For each experiment, we calculate the sample means $\bar\mu_{1,T}$ and $\bar\mu_{2,T}$. Then we calculate the average value of the sample means under 10000 repetitions for each $T$. Denote the average value of the sample means under $T$ by $\hat\mu_{1,T}$ and $\hat\mu_{2,T}$. We then calculate the empirical biases $(\hat\mu_{1,T}-\mu_1)$ and $(\hat\mu_{2,T}-\mu_2)$. Next, we present the values of the sample biases after some proper scaling from the characterization in Conjecture \ref{conjecture:sampling-bias-UCB}, and compare them with the constant factors in Conjecture \ref{conjecture:sampling-bias-UCB}.

First consider the small gap regime. We choose $\mu_1=\mu_2=1$ and $\sigma_1=\sigma_2=\sigma$ for $\sigma=0.5,0.7,0.9$. According to Conjecture \ref{conjecture:sampling-bias-UCB}, the proper scaling of the empirical bias should be $\sqrt{T\log T}$, and the constant should be $-\sigma^2=-0.25, -0.49, -0.81$, respectively, for $\sigma=0.5, 0.7, 0.9$. The comparison between the scaled empirical bias under 10000 repetitions and Conjecture \ref{conjecture:sampling-bias-UCB} is presented in Figure~\ref{fig:scaled-bias-small-gap}. As the figure illustrates, the scaled empirical bias is close to the conjectured value in Conjecture \ref{conjecture:sampling-bias-UCB}.

\begin{figure}
	\centering
	\includegraphics[width=0.8\textwidth]{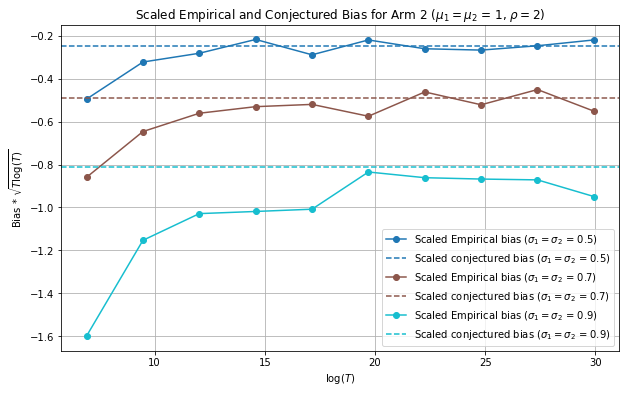}
	\caption{Scaled empirical bias of arm 2 under 10000 repetitions, $(\hat\mu_2(T)-\mu_2)\sqrt{T\log T}$, versus scaled (by $\sqrt{T\log T}$) conjectured bias of arm 2 in Conjecture \ref{conjecture:sampling-bias-UCB}, $\sigma_2^2$, for different horizon length $T$ We fix $\mu_1=\mu_2=1$ and $\rho=2$, and vary the values of $\sigma_1=\sigma_2$, represented by each curve.}\label{fig:scaled-bias-small-gap}
\end{figure}

We also test the large gap regime. We fix $\mu_1=2$, $\sigma_1=\sigma_2=1$, and choose $\mu_2=0, 1, 1.5$. According to Conjecture \ref{conjecture:sampling-bias-UCB}, the proper scaling of the empirical bias of arm 1 should be $\log T$, and the constant should be $-\sigma_2^2(\mu_1-\mu_2)$. The comparison between the scaled empirical bias under 10000 repetitions and Conjecture \ref{conjecture:sampling-bias-UCB} is presented in Figure~\ref{fig:scaled-bias-large-gap}.
\begin{figure}
	\centering
	\includegraphics[width=0.9\textwidth]{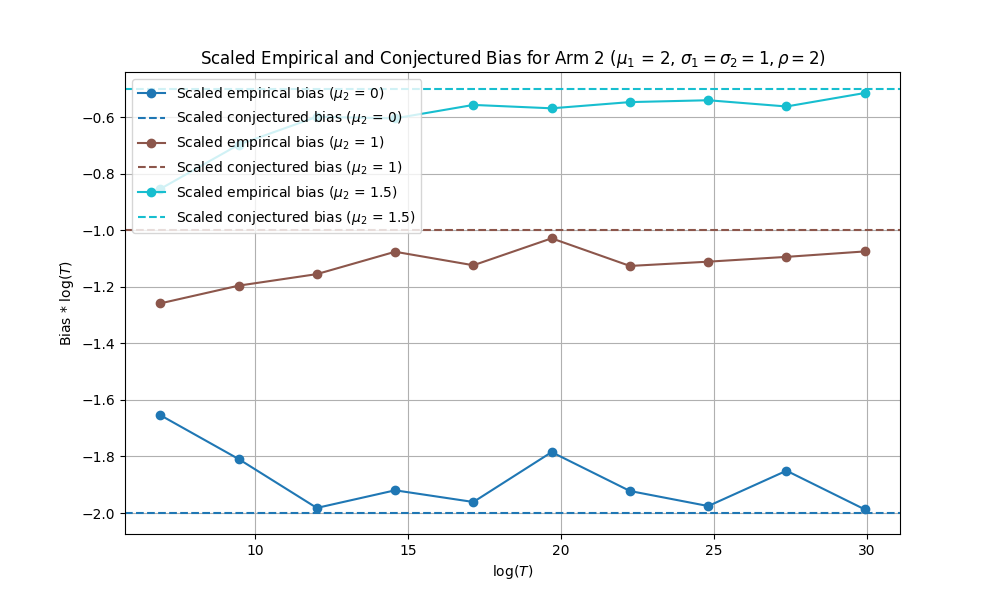}
	\caption{Scaled empirical bias of arm 2 under 10000 repetitions, $(\hat\mu_2(T)-\mu_2)\log T$, versus scaled (by $\log T$) conjectured bias of arm 2 in Conjecture~\ref{conjecture:sampling-bias-UCB}, $\sigma_2^2(\mu_1-\mu_2)$, for different horizon length $T$ We fix $\mu_1=2,\sigma_1=\sigma_2=1$ and $\rho=2$, and vary the value of $\mu_2$, represented by each curve.}\label{fig:scaled-bias-large-gap}
\end{figure}

To examine the moderate gap regime, we need the arm gap to be on the order of $\sqrt{\frac{\log T}{T}}$. In particular, we fix $\mu_2=0$ and $\sigma_1=\sigma_2=1$. Then for each $T$, let $n^{\star}_{1,T}={0.7T, 0.8T, 0.9T}$, and $\mu_1=\Delta_T=\sqrt{\frac{2\log T}{n^{\star}_{2,T}}}-\sqrt{\frac{2\log T}{n^{\star}_{1,T}}}=\sqrt{\frac{\theta\log T}{T}}$, where $\theta=\left(\sqrt{\frac{2T}{n^{\star}_{2,T}}}-\sqrt{\frac{2T}{n^{\star}_{1,T}}}\right)^2$, so that the fluid system of equations are satisfied. By Conjecture \ref{conjecture:sampling-bias-UCB}, the proper scaling of the empirical bias of arm 2 should be $\sqrt{T\log T}$, and the constant should be $-\frac{2\sqrt{1 + \lx}\sigma_2^2}{\sqrt{\rho}\left(1+(\lx)^{\frac{3}{2}})\right)}$. The comparison between the scaled empirical bias under 10000 repetitions and Conjecture \ref{conjecture:sampling-bias-UCB} is presented in Conjecture~\ref{fig:scaled-bias-moderate-gap}.
    
\begin{figure}
	\centering
	\includegraphics[width=0.8\textwidth]{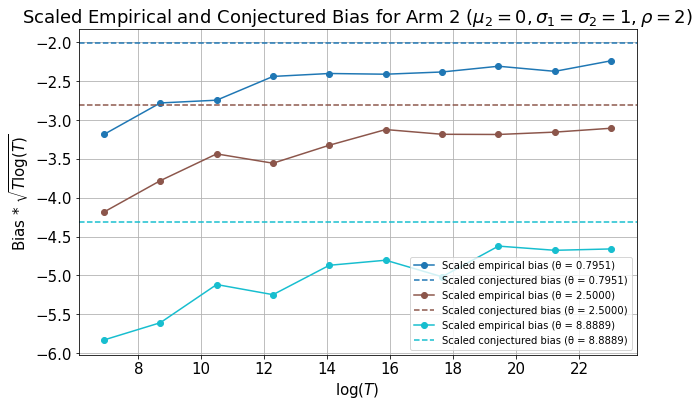}
	\caption{Scaled empirical bias of arm 2 under 10000 repetitions, $(\hat\mu_2(T)-\mu_2)\sqrt{T\log T}$, versus scaled (by $\sqrt{T\log T}$) conjectured bias of arm 2 in Conjecture~\ref{conjecture:sampling-bias-UCB} for different horizon length $T$ We fix $\mu_2=0, \sigma_1=\sigma_2=1$ and $\rho=2$, and vary the values of $\theta$ in $\mu_1=\sqrt{\frac{\theta\log T}{T}}$, represented by each curve.}\label{fig:scaled-bias-moderate-gap}
\end{figure}

\end{document}